\theoremstyle{plain}
\newtheorem{theorem}{Theorem}[section]
\newtheorem{proposition}[theorem]{Proposition}
\newtheorem{lemma}[theorem]{Lemma}
\newtheorem{corollary}[theorem]{Corollary}
\theoremstyle{definition}
\theoremstyle{remark}
\def\eqref#1{(\ref{#1})}
\def\1{\bm{1}}
\def\vtheta{{\bm{\theta}}}
\def\vh{{\bm{h}}}
\def\vp{{\bm{p}}}
\def\vu{{\bm{u}}}
\def\vw{{\bm{w}}}
\def\vx{{\bm{x}}}
\def\vy{{\bm{y}}}
\def\vz{{\bm{z}}}
\def\mU{{\bm{U}}}
\def\mW{{\bm{W}}}
\def\mX{{\bm{X}}}
\def\mY{{\bm{Y}}}
\DeclareMathAlphabet{\mathsfit}{\encodingdefault}{\sfdefault}{m}{sl}
\SetMathAlphabet{\mathsfit}{bold}{\encodingdefault}{\sfdefault}{bx}{n}
\def\gG{{\mathcal{G}}}
\def\gH{{\mathcal{H}}}
\def\sE{{\mathbb{E}}}
\def\sG{{\mathbb{G}}}
\def\sP{{\mathbb{P}}}
\def\sR{{\mathbb{R}}}
\def\sW{{\mathbb{W}}}
\newcommand{\R}{\mathbb{R}}
\newcommand{\yg}[1]{\textcolor{purple}{#1}}
\title{An Unpooling Layer for Graph Generation}
\author{%
  Yinglong Guo \\
  School of Mathematics\\
  University of Minnesota\\
  Minneapolis, MN 55455 \\
  \texttt{guo00413@umn.edu} 
   \and
   Dongmian Zou\\
   Division of Natural and Applied Sciences\\
   Duke Kunshan University\\
   Jiangsu, China\\
   \texttt{dongmian.zou@duke.edu} 
   \and
   Gilad Lerman\\
  School of Mathematics\\
  University of Minnesota\\
  Minneapolis, MN 55455 \\
  \texttt{lerman@umn.edu} 
}
\begin{document}
\maketitle

\tikzstyle{shaded_node} = [circle, shading = axis, left color=#1, right color=#1!40!white,shading angle=135, draw=#1!40!black]

\usetikzlibrary{decorations.markings}

\tikzset{shadedarrow/.style n args={3}{ 
    postaction={
        decorate,
        decoration={
            markings,
            mark=at position \pgfdecoratedpathlength-0.5pt with {\arrow[#2,line width=#1] {>}; },
            mark=between positions 0 and \pgfdecoratedpathlength-12pt step 0.5pt with {
                \pgfmathsetmacro\myval{multiply(divide(
                    \pgfkeysvalueof{/pgf/decoration/mark info/distance from start}, \pgfdecoratedpathlength),100)};
                \pgfsetfillcolor{#2!\myval!#3};
                \pgfpathcircle{\pgfpointorigin}{#1};
                \pgfusepath{fill};}, 
            mark=between positions \pgfdecoratedpathlength-11.5pt and \pgfdecoratedpathlength-6pt step 0.5pt with {
                \pgfmathsetmacro\myval{multiply(divide(
                    \pgfkeysvalueof{/pgf/decoration/mark info/distance from start}, \pgfdecoratedpathlength),100)};
                \pgfsetfillcolor{#2};
                \pgfpathcircle{\pgfpointorigin}{#1};
                \pgfusepath{fill};}
}}}}

\definecolor{c1}{rgb}{0.6,0.6,1.4}
\definecolor{c2}{rgb}{0.8,0.8,2.0}


\begin{abstract}
We propose a novel and trainable graph unpooling layer for effective graph generation. The unpooling layer receives an input graph with features and outputs an enlarged graph with desired structure and features. 
We prove that the output graph of the unpooling layer remains connected and for any connected graph there exists a series of unpooling layers that can produce it from a 3-node graph. 
We apply the unpooling layer within the generator of a generative adversarial network as well as the decoder of a variational autoencoder.
We give extensive experimental evidence demonstrating the competitive performance of our proposed method on synthetic and real data.

\end{abstract}

\section{INTRODUCTION}
\label{sec:inroduction}

Graph data appear in many application areas, such as chemistry \citep{duvenaud2015convolutional}, biology \citep{maere2005bingo} and social recommendation \citep{fan2019graphNetworkRec}. 
Common tasks that arise with graph data include regression and classification of either graph nodes or whole graphs and graph generation, which is useful for molecule generation and drug discovery.
Graph neural networks (GNNs) have successfully generalized standard methods and architectures of neural networks to graph data and have achieved great success in many common tasks.

The task of graph generation is challenging due to its vast search space and the complexity of the graph structure. Furthermore, as we clarify next, it is hard to generalize basic procedures of deep generative networks in 
image generation to graph generation. 
For image data, a generative neural network, which may take the form of the decoder of a variational auto-encoder (VAE) or the generator of a generative adversarial network (GAN), usually first converts the input to a small intermediate image and then applies convolutional-transpose layers \citep{zeiler2010deconvolutional, radford2015unsupervised} or unpooling layers \citep{pu2016variational} 
to upsample and refine the image. 
In the graph domain, it is hard to form a similar convolution-transpose or unpooling layer in order to upsample graphs. Indeed, convolution and message passing do not change the structure of the underlying graph and there is no natural way of building structure for an unpooled graph. We are unaware of any graph generation work that follows the same idea of image generation and produces intermediate graphs and upsamples them to obtain the desired graph.

Inspired by image generation, we propose a novel unpooling layer for graph data that is similar to the unpooling operator for images. By incorporating this layer, one can build a deep graph generative network that utilizes intermediate graph structures.

\subsection{Related Work}\label{sec:related_work}

\textbf{Graph neural networks. }
There is already a vast amount of recent work on GNNs. Many of those works focus on regression or classification tasks of nodes or whole graphs. 
A common building block of a GNN is the message-passing neural network (MPNN) layer, which was first proposed for predicting molecular properties \citep{scarselli2008graph, duvenaud2015convolutional} and was immediately extended to other applications. 
More recently, 
different variants and extensions of the MPNN layer have been proposed within GNNs. For example, graph convolutional networks (GCNs) \citep{kipf2016semi} use a first-order approximation of the spectral convolution to derive a simple propagation rule for node classification, graph attention networks \citep{velivckovic2017graph} use self-attention to assign different weights to different nodes in a neighborhood, and graph isomorphism networks \citep{xu2018powerful} adopt multilayer perceptrons (MLPs) after message passing to enhance expressivity. These networks are simple to implement and also follow a message-passing scheme.

\textbf{Graph pooling and unpooling. }
The common idea of pooling layers in 2D convolutional neural networks has been generalized to graph-based data in order to produce smaller graphs. A graph pooling layer was first proposed by \citet{bruna2014graphpooling1} 
and later extended in various works  \citep{michael2016graphcoarsen,ying2018diffpooling,ma2019eigenpooling,lee2019graphattention}.
Pooling layers are widely used in classification and regression on graphs since they downsample the graph while summarizing the aggregated presence of encoded features. 

Unlike the convolution-transpose or unpooling operation for images, there is no obvious way to define a trainable unpooling procedure for graphs. Some works \citep{jin2018junctionTree, jin2019learning, bongini2021molecular} sequentially expand graphs by adding  one node at a time. While their operations can be considered as unpooling, they cannot be regarded as the inverse operation of common pooling, since  graph pooling is not generally done by removing one node at a time. 
Among all works related to graph pooling, \citet{gao2019graph} is the only one that seeks to define graph unpooling. They proposed Graph-UNet to combine pooling and unpooling processes in an encoder-decoder architecture. A Graph-UNet first pools an input graph into a smaller graph, encodes its global features and then applies the exact inverse process to perform the unpooling procedure. 
Since this operation deterministically depends on the pooling layers in the encoder, it is not suitable to be used for graph generation. 

{\bf Graph generation.}
\citet{dgg_survey2021} nicely
survey graph generation models and categorized them as follows: auto-regressive (AR) \citep{you2018graphrnn, bongini2021molecular, ahn2021spanningTree}, VAE \citep{simonovsky2018graphvae, jin2018junctionTree, samanta2020nevae, guo2021deep}, GAN \citep{de2018molgan}, reinforcement learning (RL) \citep{you2018gcpn}, and normalized flow  \citep{madhawa2019graphnvp, shi2020graphaf, zang2020moflow, luo2021graphdf}. 
Another recent category is diffusion models \citep{Jo2022ScorebasedGM}. 
There are two types of graph generation strategies: one-shot and sequential, where the former generates the output graph at once and the latter generates it in a node-by-node and edge-by-edge fashion. 
Most existing methods that leverage the one-shot strategy, such as  \citet{de2018molgan, simonovsky2018graphvae, zang2020moflow}, produce a vectorized adjacency matrix, 
which does not utilize any graph structure during generation. 
On the other hand, methods  that sequentially generate graphs typically use the graph structure. For example, \citet{you2018graphrnn, shi2020graphaf, luo2021graphdf} predict the next node or edge based on the features extracted from the existing graph.

Molecule generation is the most common application in this area as molecules can be naturally represented as graphs with features. However, molecules need to be chemically valid and this validity issue does not arise in general graph generation.
Many methods \citep{de2018molgan, simonovsky2018graphvae, samanta2020nevae, zang2020moflow} generate molecules at one-shot by producing an adjacency matrix that captures the molecular graph structure.  \citet{mahmood2021masked} propose the masked graph model (MGM) that generates graphs at one-shot by sampling masked sub-graphs of the respective complete graph. This method suggests a new graph generation category. 
On the other hand, 
it is also possible to sequentially generate molecules.
For example, \citet{jin2018junctionTree, jin2019learning} 
first generate a junction-tree as the scaffolding and then complete the graph. The junction-tree is generated recursively from the root, one node at a time. 
Some other recent models \citep{shi2020graphaf, ahn2021spanningTree, luo2021graphdf} apply node-by-node and edge-by-edge sequential generation.
\citet{bongini2021molecular} break the graph generation into three subproblems: node classification (which leads to node expansion), edge classification, and edge addition, where three separate GNNs are trained for each subproblem.

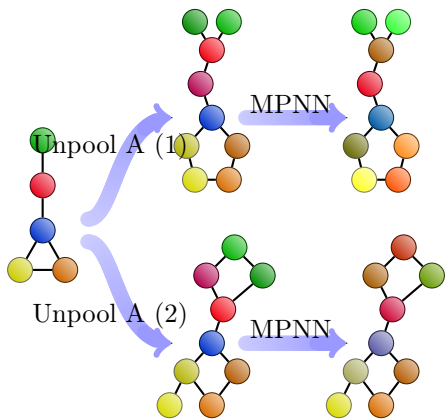
\begin{figure}[h]
  \centering
\begin{tikzpicture}[scale=0.75]
\begin{scope}
\node[shaded_node=blue!80!green] (x1) at (0, 0) {};
\node[shaded_node=red!90!blue] (x2) at (0, 0.8) {};
\node[shaded_node=green!65!black] (x3) at (0, 1.6) {};
\node[shaded_node=yellow!80!black] (x4) at (-0.4, -0.7) {};
\node[shaded_node=orange!80!black] (x5) at (0.4, -0.7) {};
\draw [thick, draw=black](x1) -- (x2); 
\draw [thick, draw=black](x2) -- (x3); 
\draw [thick, draw=black](x1) -- (x4); 
\draw [thick, draw=black](x1) -- (x5); 
\draw [thick, draw=black](x4) -- (x5); 
\end{scope}

\begin{scope}[xshift=3cm,yshift=2cm]
\node[shaded_node=blue!80!green] (x1) at (0, 0) {};
\node[shaded_node=red!70!blue] (x21) at (-0.2, 0.6) {};
\node[shaded_node=red!95!blue] (x22) at (0, 1.2) {};
\node[shaded_node=green!55!black] (x31) at (-0.3, 1.7) {};
\node[shaded_node=green!70!black] (x32) at (0.3, 1.7) {};
\node[shaded_node=yellow!70!black] (x41) at (-0.45, -0.5) {};
\node[shaded_node=yellow!85!black] (x42) at (-0.3, -1.1) {};
\node[shaded_node=orange!70!black] (x51) at (0.45, -0.5) {};
\node[shaded_node=orange!85!black] (x52) at (0.3, -1.1) {};
\draw [thick, draw=black](x1) -- (x21); 
\draw [thick, draw=black](x21) -- (x22); 
\draw [thick, draw=black](x22) -- (x31); 
\draw [thick, draw=black](x22) -- (x32); 
\draw [thick, draw=black](x1) -- (x41); 
\draw [thick, draw=black](x1) -- (x51); 
\draw [thick, draw=black](x41) -- (x42); 
\draw [thick, draw=black](x51) -- (x52); 
\draw [thick, draw=black](x52) -- (x42); 
\end{scope}

\begin{scope}[xshift=6cm,yshift=2cm]
\node[shaded_node=blue!65!green] (x1) at (0, 0) {};
\node[shaded_node=red!90!blue] (x21) at (-0.2, 0.6) {};
\node[shaded_node=red!65!green] (x22) at (0, 1.2) {};
\node[shaded_node=green!80!black] (x31) at (-0.3, 1.7) {};
\node[shaded_node=green!80!white] (x32) at (0.3, 1.7) {};
\node[shaded_node=yellow!40!black] (x41) at (-0.45, -0.5) {};
\node[shaded_node=yellow!85!white] (x42) at (-0.3, -1.1) {};
\node[shaded_node=orange!90!white] (x51) at (0.45, -0.5) {};
\node[shaded_node=orange!65!red] (x52) at (0.3, -1.1) {};
\draw [thick, draw=black](x1) -- (x21); 
\draw [thick, draw=black](x21) -- (x22); 
\draw [thick, draw=black](x22) -- (x31); 
\draw [thick, draw=black](x22) -- (x32); 
\draw [thick, draw=black](x1) -- (x41); 
\draw [thick, draw=black](x1) -- (x51); 
\draw [thick, draw=black](x41) -- (x42); 
\draw [thick, draw=black](x51) -- (x52); 
\draw [thick, draw=black](x52) -- (x42); 
\end{scope}

\begin{scope}[xshift=3cm,yshift=-2cm]
\node[shaded_node=blue!80!green] (x1) at (0, 0) {};
\node[shaded_node=red!95!blue] (x21) at (0.2, 0.6) {};
\node[shaded_node=red!70!blue] (x22) at (-0.1, 1.2) {};
\node[shaded_node=green!55!black] (x31) at (0.9, 1.2) {};
\node[shaded_node=green!70!black] (x32) at (0.4, 1.7) {};
\node[shaded_node=yellow!70!black] (x41) at (-0.45, -0.5) {};
\node[shaded_node=yellow!85!black] (x42) at (-0.75, -1.1) {};
\node[shaded_node=orange!70!black] (x51) at (0.45, -0.5) {};
\node[shaded_node=orange!85!black] (x52) at (0, -1.1) {};
\draw [thick, draw=black](x1) -- (x21); 
\draw [thick, draw=black](x21) -- (x22); 
\draw [thick, draw=black](x21) -- (x31); 
\draw [thick, draw=black](x22) -- (x32); 
\draw [thick, draw=black](x31) -- (x32); 
\draw [thick, draw=black](x1) -- (x41); 
\draw [thick, draw=black](x1) -- (x51); 
\draw [thick, draw=black](x41) -- (x42); 
\draw [thick, draw=black](x51) -- (x52); 
\draw [thick, draw=black](x41) -- (x52); 
\end{scope}

\begin{scope}[xshift=6cm,yshift=-2cm]
\node[shaded_node=blue!65!yellow] (x1) at (0, 0) {};
\node[shaded_node=red!80!blue] (x21) at (0.2, 0.6) {};
\node[shaded_node=red!65!green] (x22) at (-0.1, 1.2) {};
\node[shaded_node=green!60!red] (x31) at (0.9, 1.2) {};
\node[shaded_node=green!60!black!25!red] (x32) at (0.4, 1.7) {};
\node[shaded_node=yellow!65!blue] (x41) at (-0.45, -0.5) {};
\node[shaded_node=yellow!85!black] (x42) at (-0.75, -1.1) {};
\node[shaded_node=orange!70!black] (x51) at (0.45, -0.5) {};
\node[shaded_node=orange!85!black] (x52) at (0, -1.1) {};
\draw [thick, draw=black](x1) -- (x21); 
\draw [thick, draw=black](x21) -- (x22); 
\draw [thick, draw=black](x21) -- (x31); 
\draw [thick, draw=black](x22) -- (x32); 
\draw [thick, draw=black](x31) -- (x32); 
\draw [thick, draw=black](x1) -- (x41); 
\draw [thick, draw=black](x1) -- (x51); 
\draw [thick, draw=black](x41) -- (x42); 
\draw [thick, draw=black](x51) -- (x52); 
\draw [thick, draw=black](x41) -- (x52); 
\end{scope}

\draw [shadedarrow={0.1cm}{c1}{c2}{0.4}] (0.8,0.2) .. controls (1.3, 0.2) and (1.3, 2.0) ..  (2.3,2.0);
\draw [shadedarrow={0.1cm}{c1}{c2}{0.4}] (0.8,-0.2) .. controls (1.3, -0.2) and (1.3, -2.0) ..  (2.3,-2.0);
\node  at (1.2, 1.5) {Unpool A (1)};
\node  at (1.2, -1.5) {Unpool A (2)};
\draw [shadedarrow={0.1cm}{c1}{c2}{0.4}]   (3.6,2.0) -- (5.4,2.0);
\draw [shadedarrow={0.1cm}{c1}{c2}{0.4}]   (3.6,-2.0) -- (5.4,-2.0);
\node  at (4.4, 2.25) {MPNN};
\node  at (4.4, -1.75) {MPNN};


\end{tikzpicture}

\caption{Demonstration of possible outputs of the proposed unpooling layer. Left: input graph, middle: two potential  outputs of the unpooling layer, right: further application of a message-passing neural network (MPNN) layer. The colors of nodes represent their features.}\label{fig:overall_graph}
\end{figure}

\subsection{This Work}
\label{sec:contribution}

We propose a novel unpooling layer that 
effectively leverages the features and structure of a given graph to form an enlarged graph with learned features and structure. One may apply additional layers, such as MPNN layers, to further refine the features of nodes and edges in the graph. Figure~\ref{fig:overall_graph} demonstrates two possible outputs of the unpooling layer with a followup MPNN layer for a given input graph.
By incorporating unpooling layers in deep graph generative networks, we can generate a graph at one shot. 
In the experiments, we incorporate unpooling layers within both the generator of GAN and the decoder of VAE. We demonstrate in a synthetic setting how the  unpooling layers reveal useful intermediate graph structures. We believe that the incorporation of such structures results in the competitive performance which is evident in all numerical experiments.

Our proposed unpooling layer (UL) leads to one-shot generation that utilizes the graph structure during generation. Among all existing methods, only the masked graph model (MGM) of \citet{mahmood2021masked}  applies one-shot generation that utilizes the graph structure during generation. Nevertheless, the implementation of our method is very different from \citet{mahmood2021masked}. In particular, the proposed unpooling layer enlarges the graph at each intermediate step, 
whereas \citet{mahmood2021masked} sample the graph in masked subgraphs. As mentioned earlier, the graph generation category of MGM is rather different from the common categories. 
Furthermore, MGM was implemented and applied for molecule generation and not general graph generation. 
Therefore, in terms of methodology our proposed strategy is distinguished from the many previous graph generation methods. 

\input{tikz_plot_algo_unpool}

We emphasize the following contributions of our work:
\begin{itemize}
    \item We propose a novel unpooling layer that  produces an enlarged graph with learnable structure.
    It can be inserted into GANs and VAEs.  The resulting generation framework is distinguished in its ability to both generate graphs at one shot and utilize the graph structures for generation. 
    \item We show that the unpooling layer is valid and expressive. That is, the unpooled graph remains connected and any connected graph can be generated by a series of unpooling layers from a 3-nodes graph.
    \item We test the unpooling layer within both GANs and VAEs on a random graph dataset, a protein dataset and two molecule datasets and demonstrate competitive performance.
    \end{itemize}

\subsection{Structure of the Rest of the Paper}
Section \ref{sec:methodology} details our proposed methodology; \S\ref{sec:TheoreticalResult} provides theoretical guarantees of connectivity and expressivity;  \S\ref{sec:experiment} reports numerical results on synthetic and real data of protein and molecule generation; and \S\ref{sec:conclusion} concludes this work and discusses its limitations.

\section{METHODOLOGY}\label{sec:methodology}

Section \ref{subsec:unpooling} clarifies the  construction of the unpooling layer and  \S\ref{subsec:update_parameters} explains how to use such layers for graph generation and how we update the parameters for the unpooling layers. 

\subsection{Unpooling Layer}\label{subsec:unpooling}

Given an input graph (with features), the unpooling layer first unpools some of its nodes by replacing them with two ``children'' nodes and then learns a graph structure for the new set of nodes (in the output graph) and further learns new features.   

\textbf{Notation.} 
We use the following notation for the input graph and its features. Its node set is $V = [N] := \{1, \cdots, N\}$; its edge set is $E$, where its members are of the form $\{i,j\}$ for some $i$, $j \in V$; its number of edges is $M = |E|$; its node feature matrix is $\mX \in \mathbb{R}^{N\times d}$, where its $i$-th row, $\vx_i$, is the $d$-dimensional feature of node $i$; and its edge feature matrix is $\mW \in \mathbb{R}^{M\times e}$, where for edge $\{i, j\} \in E$, its corresponding row of $\mW$ is the $e$-dimensional feature of that edge, which we denote by $\vw_{i, j}$.
 Similarly, we use the following notation for the output graph and its features: $V^o$ is its node set, $E^o$ is its edge set and $\mY$ and $\mU$ are the feature matrices for the output nodes and edges, respectively. 
We remark that the size of $V^o$ lies in $[|V|+1, 2|V|]$ and depends on the hyperparameters that determine which nodes should be unpooled.

The input and output graphs of the unpooling layers with their features are respectively denoted by 
$$\gG = (V, E, \mX, \mW) \ \text{ and } \ \gG^o = (V^o, E^o, \mY, \mU).$$ 
We will refer to them as featured graphs and to $(V,E)$ and $(V^o,E^o)$ as graphs.

\textbf{An overview of the unpooling layer.}
The unpooling layer determines the output graph in a stochastic manner. Ideally, it should produce probabilities of every possible output graph based on the input graph by using trainable parameters, $\vtheta_P$. That is, it would output a probability mass function, $\vp$, on the sample space $\sG^o = \{(V^o, E^o): \text{unpooled from } \gG\}$, where $\vp(\gG; \vtheta_P) \in [0, 1]^{|\sG^o|}$. 
In this ideal case, the unpooling layer then randomly draws an output graph, $(V^o, E^o)$, according to this probability mass function. The drawn probability of this sample point, which we denote by
\begin{equation}
\sP(V^o, E^o | \gG; \vtheta_P)\label{eqn:unpool_probs},
\end{equation}
can then be used to update the parameters $\vtheta_P$ during training. Therefore, the unpooling layer can ideally refine the probability distribution $\vp(\gG;\vtheta_P)$ in order to obtain graphs that minimize the training loss function.

Since the sample space $\sG^o$ is huge, we cannot explicitly produce the probabilities of all possible output graphs. 
In practice, we use several multi-layered perceptrons (MLPs) to produce probabilities to determine if nodes should be unpooled and if edges between unpooling children nodes should be included in the output graph. 
The product of all these probabilities for nodes and edges gives the probability of the entire output graph in \eqref{eqn:unpool_probs} (assuming these events are independent), which is used during training.

After forming the output graph, the unpooling layer also produces the node and edge features using two MLPs that exploit the input featured graph and possibly the output graph, that is,
$$
\mY = \operatorname{MLP-}Y(\gG; \theta_Y), \mU = \operatorname{MLP-}W(V^o, E^o, \gG; \theta_W).
$$

\input{tikz_plot_model_flow_noMolecular} 

\textbf{Detailed mechanism of the unpooling layer.} The unpooling layer contains seven MLPs that serve different purposes, which we introduce below and in the supplementary materials \S\ref{sec:UL_method}. 
For the formation of the unpooling layer We use the following  three node sets that partition the input node set: (1) the set of nodes $I_s'$ that are unchanged in the unpooling layer; (2) the set of nodes  $I_u'$ that are determined to be unpooled in the unpooling layer; and (3) the set of nodes $I_r'$ that requires a probabilistic decision whether to unpool or not.

With the specified node sets, we describe the procedure of generating the output graph in the unpooling layer according to 3 steps, which we demonstrate in Figure~\ref{fig:unpool}.

\textbf{Step 1. Generating the output nodes and node features ((a) in Figure~\ref{fig:unpool})} We first probabilistically determine which nodes in $I_r'$ will be unpooled. For each node $i\in I_r'$, we determine the probability of unpooling it, $p_r(\vx_i)$,  by an MLP as
follows:
$p_r(\vx_i) = \textrm{MLP-}R(\vx_i)
$. Then we draw uniform random variables $U_i\sim U[0,1]$ and form the following sets $I_u$ and $I_s$ of  unpooled nodes and unchanged (or stable) nodes: $I_u := I_u'\cup \{i\in I_r' \ : \ U_i < p_r(\vx_i)\}$ and $I_s := I_s'\cup\{i\in I_r' \ : \ U_i \geq p_r(\vx_i) \}$.

We remark that if we want to generate the output graph with a fixed number of nodes, we could choose $I_r' = \emptyset$ for the unpooling layer.

The set of nodes of the output graph $V^o$ is the union of the nodes in $I_s$ (with different indices) and a set of nodes of size $2 |I_u|$ representing the unpooled nodes from $I_u$. 
For each node $i\in I_s$, we denote by $f(i)$ the index of this node in the output graph. For each node $i\in I_u$, we denote by $f_1(i)$ and $f_2(i)$ the indices of the two unpooled nodes in the output graph. The output node features are generated by an MLP and two projection operators $P_{S_1}$ and $P_{S_2}$, which are defined in detail in \S\ref{subsec:unpoolinglayers}
\begin{align*}
    \vy_{f(i)} & = \textrm{MLP-}y(P_{S_1}\vx_{i}), \ \text{ for } i\in I_s.\\
    \vy_{f_j(i)} & = \textrm{MLP-}y(P_{S_j}\vx_{i}), \ \text{ for } i\in I_u, \ j=1, 2.
\end{align*}

\textbf{Step 2. Building output edges.} We  sequentially generate the set $E^o$ of edges in the output graph following  the next two substeps. We initiate this set by $E^o := \emptyset$.

\textbf{Step 2.1. Building intra-links ((b) in Figure~\ref{fig:unpool}).} For each node $i\in I_u$, we determine whether to generate an edge that connects the children nodes $f_1(i)$ and $f_2(i)$ based on a probability, which we denote by $p_c(\vx_i)$. This probability is produced using an MLP as follows: $p_c(\vx_i) \equiv \textrm{MLP-IA}(\vx_i)$. Then we draw uniform random variable $U_i\sim U[0, 1]$ and if $U_i < p_c(\vx_i)$, we add this edge to the output graph, that is, $E^o = E^o\cup \big\{\{f_1(i), f_2(i)\}\big\}$.

\textbf{Step 2.2. Building inter-links ((c) in Figure~\ref{fig:unpool}).} For each edge $\{i, j\}\in E$ in the input graph, we determine the edges for the corresponding nodes in the output graph according to the following three different cases: (1) $i, j\in I_s$: we include the edge $\{f(i), f(j)\}$ in the output graph; (2) $i\in I_s$ and $j\in I_u$: we probabilistically determine what are the edges between $f(i)$ and $\{f_1(j), f_2(j)\}$; and (3) $i, j\in I_u$: we probabilistically determine what are the edges between $\{f_1(i), f_2(i)\}$ and $\{f_1(j), f_2(j)\}$.

For each edge $\{i, j\}\in E$, we introduce node sets $N_{\{i, j\}, i}$ and $N_{\{i, j\}, j}$ in order to uniformly handle the above cases. For $\{i, j\}\in E$ and $i\in V$, we form $N_{\{i, j\}, i}$ as follows: If $i\in I_s$, then $N_{\{i, j\}, i}=\{f(i)\}$ and if $i\in I_u$, then $N_{\{i, j\}, i}$ is a nonempty subset of the children nodes of $i$, which we probabilistically determine as follows. 
We use an MLP to calculate two probabilities: $(p_1, p_2) = \textrm{MLP-IE}(\vy_{f_1(i)}, \vy_{f_2(i)}, \vw_{i, j}, \vx_{j})$. We then draw a uniform random variable $U\sim U[0, 1]$ and determine $N_{\{i, j\}, i}$ as follows:
$$
    N_{\{i, j\}, i} =\left\{ \begin{array}{ll}
\{f_1(i)\}, & \ \text{ if } \ U < p_1;\\
    \{f_1(i), f_2(i)\}, & \ \text{ if } \ U \geq p_1+p_2;\\
    \{f_2(i)\}, & \ \text{ otherwise}.
    \end{array}\right.
$$
For $\{i, j\} \in E$ and $j \in V$ we similarly define $N_{\{i, j\}, j}$, while swapping $i$ and $j$. 
The edges in the output graph are updated as follows: \begin{equation}E^o = E^o\cup \{\{k, l\}: \ k\in N_{\{i, j\}, i}, \ l\in N_{\{i, j\}, j}\}.\label{eqn:generate_inter_Eo}\end{equation}

We need to take extra care to ensure connectivity and expressivity of the output graph. We first form two additional MLPs, $\textrm{MLP-C}$ and $\textrm{MLP-IE-A}$ to calculate probabilities. We then probabilistically form some additional edges. 
These details are a bit technical and can be fully understood after getting familiar with our theory for connectivity and expressivity (see \S\ref{sec:TheoreticalResult} and the proofs in the supplementary materials \S\ref{sec:theorem}). Therefore we leave these details to \S\ref{sec:UL_method} of the supplementary materials (see Step 2b and Step 2d in \S\ref{sec:UL_method}).

\textbf{Step 3. Constructing the edge features ((d) in Figure~\ref{fig:unpool}).} For each edge $\{k, l\}\in E^o$, we construct the edge feature $\vu_{k, l}$ by an MLP as follows:
$$\vu_{k, l} = \textrm{MLP-}u\big(\textrm{LeakyReLU}(\vy_k + \vy_l)\big).$$

\textbf{Summary.} We described a probabilistic construction of $\gG^o = (V^o, E^o, \mY, \mU)$. It contains seven MLPs to produce various probabilities and features for the nodes and edges of $\gG^o$. The parameters in those seven MLPs form the training parameters of the unpooling layer.
The overall probability $\sP(V^o, E^o|\gG; \vtheta_P)$ is the product of all the probabilities in the first two steps and is used to update the training parameters in the unpooling layer, while using the REINFORCE algorithm introduced below.

\subsection{Graph Generation and Training}\label{subsec:update_parameters}

We use the unpooling layer within a generative GNN, which can be either a generator of a GAN or a decoder of a VAE.
We describe here its basic mechanism and demonstrate it in Figure~\ref{fig:gen_model}. Complete details of implementation are in both \S\ref{sec:experiment} and the supplementary materials. 
The generative GNN first maps a given latent vector into  a featured 3-nodes graph, whose structure is probabilistically determined by an MLP and its edge features are determined by another MLP (details are in the supplementary materials). Next, it applies a GCN (such as MPNN) to update the node features for this initial featured graph. It then sequentially applies unpooling layers, where each of them is followed by a GCN (such as MPNN), which further updates the node features. The final output is the generated graph. 

The parameters used for generating features in the unpooling layer can be updated during training following the common framework of GAN or VAE. The major challenge in using the unpooling layers for graph generation is that the graph generation process is not differentiable. To overcome this, we follow REINFORCE with baseline \citep{weaver2001optimal, sutton2018reinforcement} to update all the graph generation parameters in the unpooling layer. 

In order to explain this procedure with more details, we need the following notation. We denote by $G$ a generative GNN (a generator of a graph GAN or the decoder of a graph VAE) with several unpooling layers that takes a latent vector and produces a generated graph. We denote by $m$ the number of unpooling layers of $G$, by $U_1$, $U_2$, $\cdots$, $U_m$ the unpooling layers
and by $(V^o_1, E^o_1)$, $\cdots$, $(V^o_m, E^o_m)$ the generated intermediate graphs.
Let $\vtheta$ denote all the parameters of $G$,
which include the parameters of the $\textrm{MLP}$s in the unpooling layers. In view of \eqref{eqn:unpool_probs}, the total log probability of the unpooling layer $U_k$  is  $\log \sP(V^o_k, E^o_k)$. We define the total log probability of the generator $G$ as $\log \sP:=\sum_k \log \sP(V^o_k, E^o_k)$ and note that $\log\sP$ depends on $\vtheta$. We  denote the learning rate by $\alpha$ and the reward for the generated graph by $r$. Note that this reward depends on the specific generation task, e.g., it can be the likelihood predicted by the discriminator or the chemical property which one aims to optimize. 

We update $\vtheta$ as follows
\begin{equation}
    \vtheta_{k+1} = \vtheta_k + \alpha \mleft(\nabla_\vtheta \log\sP|_{\vtheta_k}\mright) (r - \sE r),\label{eqn:update}
\end{equation}
where we approximate $\sE r$ by the sample mean.
In our experiments we incorporate the unpooling layers within a GAN and a VAE. For a GAN,  we set the reward $r$ to be $D(G(z; \vtheta))$, where $D$ is the GAN's discriminator, in order to compete with the discriminator. For a VAE, we choose the reward $r$ to be the negative of the reconstruction error in order to minimize the reconstruction error.

\section{THEORETICAL GUARANTEES}\label{sec:TheoreticalResult}
We establish the connectivity and expressivity of the unpooling layer. All proofs are in the supplementary materials.

\subsection{Guarantee of Connectivity of the Output Graph}

The following proposition implies that if the input graph is connected, then the unpooling layer will produce a connected graph. This is an important property in molecule generation since otherwise the output molecule will be invalid. 
Adjacency matrix-based generators cannot ensure connectivity. 
\begin{proposition}\label{thm:connecivity}
Given an unpooling layer  and any connected input graph $\gG$, the output graph, $\gG^o$, of this unpooling layer is connected.
\end{proposition}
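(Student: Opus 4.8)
The plan is to argue deterministically for an arbitrary realization of the random choices inside the unpooling layer (the proposition asserts connectivity of every possible sampled $\gG^o$), by comparing $\gG^o$ with $\gG$ through the natural surjection $\pi : V^o \to V$ sending each output node back to the input node from which it was created; thus the fiber $\pi^{-1}(i)$ is $\{f(i)\}$ when $i \in I_s$ and $\{f_1(i), f_2(i)\}$ when $i \in I_u$.

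First I would isolate the one structural fact that drives everything, coming from Step 2.2: for each input edge $\{i,j\}\in E$ the sets $N_{\{i,j\},i}\subseteq\pi^{-1}(i)$ and $N_{\{i,j\},j}\subseteq\pi^{-1}(j)$ are nonempty by construction, and $E^o$ contains the complete bipartite graph between them; hence at least one edge of $\gG^o$ joins the fiber over $i$ to the fiber over $j$. Consequently the multigraph obtained from $\gG^o$ by contracting every fiber to a point contains a spanning copy of $(V,E)$, so it is connected because $\gG$ is. It then suffices to show that each individual fiber lies inside a single connected component of $\gG^o$: granting that, a path in the contracted graph between the fibers of any two output nodes lifts, edge by edge, to a walk in $\gG^o$ --- using the inter-fiber edges just guaranteed, and inside each visited fiber a path within its (common) component --- which forces $\gG^o$ to have a single component.

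Singleton fibers are trivially within one component, so the proof collapses to the unpooled nodes $i\in I_u$: one must show $f_1(i)$ and $f_2(i)$ lie in the same component of $\gG^o$. If the intra-link of Step 2.1 is generated this is immediate, as $\{f_1(i),f_2(i)\}\in E^o$. The real obstacle --- and, I expect, the delicate part of the write-up --- is the case where $i$ is unpooled but its intra-link is \emph{not} generated: a priori every incident edge $\{i,j\}$ could choose $N_{\{i,j\},i}=\{f_1(i)\}$, orphaning $f_2(i)$ entirely. This is exactly the situation the auxiliary $\textrm{MLP-C}$ and $\textrm{MLP-IE-A}$ and the extra edges of Step 2b and Step 2d in \S\ref{sec:UL_method} are built to prevent, so I would invoke that portion of the construction to ensure that whenever $i$ lacks an intra-link, the edges it contributes (together with those rescue edges) attach \emph{both} children to one already-connected part of $\gG^o$ derived from a spanning tree of $\gG$. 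Checking that this holds on every branch of the sampling is the step that needs genuine care; once it is established, the lifting argument above applies verbatim and connectivity of $\gG^o$ follows.
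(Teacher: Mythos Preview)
Your approach is essentially the paper's: it too picks two output nodes, drops to their parents, takes a path in $\gG$, and lifts it using the nonempty sets $N_{\{i,j\}}(\cdot)$, reducing everything to the claim that the two children of any $i\in I_u$ lie in the same component of $\gG^o$. The only refinement is that the ``rescue'' you gesture at is simpler and more specific than you suggest: Step~2d (MLP-IE-A) plays no role in connectivity, and no spanning-tree argument is needed---Step~2b alone \emph{deterministically} sets $N_{\{b_j,j\}}(j)=\{f_1(j),f_2(j)\}$ for every $j\in I_u\setminus V_c$, so both children share a common neighbor in $N_{\{b_j,j\}}(b_j)\neq\emptyset$, giving a length-two path $f_1(j)\!-\!v\!-\!f_2(j)$ in $\gG^o$ outright, with no circularity and no case-checking over sampling branches.
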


\subsection{Guarantee of Expressivity for the Unpooling Layer}

It is important to know whether a series of unpooling layers can produce any connected graph. For instance, in molecule generation, a good generative model should contain all valid molecules in the set of possible output. Some previous work (e.g., \citet{jin2018junctionTree}) cannot produce some valid molecular structures and is thus not fully expressive. Fortunately, we are able to produce any connected graph by applying certain unpooling layers to an input graph with three nodes
(our implementation of the graph generative model starts with a 3-nodes graph).
We first formulate a theorem on the expressivity of a single unpooling layer and then formulate the desired corollary when starting with a 3-nodes graph and using a series of unpooling layers. 

\begin{theorem}\label{thm:expressivity}
Given a connected graph $\gG^o$ with $N$ nodes and an integer $K \in [\lceil N/2\rceil, N-1]$, 
there exist an unpooling layer
and an input graph $\gG$ with $K$ nodes so that $\gG^o$  is the corresponding output.
\end{theorem}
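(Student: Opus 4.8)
The plan is to reverse‑engineer the unpooling layer: given $\gG^o$ and $K$, I will exhibit an input featured graph $\gG$ with $K$ nodes, together with a choice of the seven MLPs and of the node partition $V = I_s' \sqcup I_u' \sqcup I_r'$, so that $\gG^o$ is a possible output with positive probability. The arithmetic already dictates the skeleton: since the output size always lies in $[K+1,2K]$ and we want it to be $N$, the hypothesis $K \in [\lceil N/2\rceil, N-1]$ is exactly what makes $N-K\ge 1$ and $2K-N\ge 0$, so I take $|I_u'| = N-K$ nodes to be unpooled, $|I_s'| = 2K-N$ nodes to stay, and $I_r' = \emptyset$ (no probabilistic node selection). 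Correspondingly I split the $N$ vertices of $\gG^o$ into $N-K$ ordered pairs — the children $f_1(i),f_2(i)$ of the unpooled nodes — and $2K-N$ singletons — the images $f(i)$ of the stable nodes; call any such splitting a valid partition. Given one, I define $\gG=(V,E)$ on $V=I_s'\cup I_u'$ by putting $\{i,j\}\in E$ whenever $\gG^o$ has at least one edge between the block of $i$ and the block of $j$. Since $\gG^o$ is connected, its quotient $\gG$ is connected, which is also precisely what is needed so that the corollary can iterate the layer from a $3$‑node graph.

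\textbf{Realizing $\gG^o$.} The work is then to check that every edge of $\gG^o$ is explainable and nothing spurious is produced. Features are free parameters: choose the features $\vx_i$ of $\gG$ pairwise distinct (and distinguishable on every $(\text{edge},\text{endpoint})$ tuple fed to the edge MLPs), so each MLP we meet acts on finitely many distinct inputs and may be defined to output any prescribed values; thus $\textrm{MLP-}y$, the projections $P_{S_1},P_{S_2}$ and $\textrm{MLP-}u$ can be set to output exactly the node and edge features of $\gG^o$. For intra‑links, set $\textrm{MLP-IA}(\vx_i)$ to $1$ when $\{f_1(i),f_2(i)\}\in E(\gG^o)$ and to $0$ otherwise. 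For inter‑links, note that for each $\{i,j\}\in E$ the blocks of $i$ and $j$ span at most $2\times 2 = 4$ candidate edges in $\gG^o$; I choose the outputs of $\textrm{MLP-IE}$ (which yields a combinatorial rectangle $N_{\{i,j\},i}\times N_{\{i,j\},j}$) together with the extra edges supplied by Steps 2b and 2d ($\textrm{MLP-C}$, $\textrm{MLP-IE-A}$ of \S\ref{sec:UL_method}) so that their union, for this single input edge, is exactly the cross‑edge set of $\gG^o$ between those two blocks. Taking the union over all $\{i,j\}\in E$ reproduces all non‑intra edges of $\gG^o$, and only those, because edges are ever created only between blocks of $\gG$‑adjacent nodes, and such a pair is $\gG$‑adjacent precisely when at least one cross‑edge is wanted.

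\textbf{Main obstacle.} The crux is the inter‑link bookkeeping, which is exactly why Steps 2b and 2d exist: the bare Step 2 rule can only produce rectangles (cross‑edge sets of size $1,2$, or $4$), so between two unpooled blocks the ``crossing'' matching $\{f_1(i)f_1(j),f_2(i)f_2(j)\}$ and the three‑edge patterns are not directly achievable. I would verify that one additional inter‑edge from the ``-A'' branch always suffices (start from a sub‑rectangle of the target and add one missing edge) to hit any nonempty subset of the four candidate cross‑edges, so that after fixing $\gG$ one really can realize $E(\gG^o)$ exactly. The second thing to verify is that the connectivity‑driven edges of Step 2b never force an edge absent from $\gG^o$; here I would argue structurally — each child $a=f_j(i)$, by connectedness of $\gG^o$, is incident to an edge of $\gG^o$ leaving its block, which lies between $\gG$‑adjacent nodes, so the connectivity step can be routed along edges already present in $\gG^o$ rather than creating new ones. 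Once these two points are settled, assembling the partition, the quotient $\gG$, the feature MLPs and the $0/1$ probability assignments yields an unpooling layer and an input $\gG$ with $K$ nodes whose output is $\gG^o$; moreover any valid partition works (e.g.\ fix an ordering of $V(\gG^o)$ and take the first $N-K$ consecutive pairs), which I would state as a remark.
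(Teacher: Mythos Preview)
Your quotient--graph construction and the $15$-case cross-edge analysis (via Step~2d) match the paper, but the claim that \emph{any} partition of $V(\gG^o)$ into $N-K$ pairs and $2K-N$ singletons works is false, and this is where the real content of the theorem lies.

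Here is a concrete counterexample. Take $\gG^o$ to be the path $a\text{--}b\text{--}c\text{--}d$, $K=3$, and pair $(a,d)$ with $b,c$ as singletons. Since $a$ and $d$ are not adjacent, there is no intra-link, so $(a,d)\in I_u\setminus V_c$ and Step~2b is triggered: one must pick a neighbor $b_j$ of the pooled node and \emph{force} $N_{\{b_j,j\}}(j)=\{a,d\}$, i.e., both children must acquire inter-links to the block of $b_j$. The possible choices of $b_j$ are $b$ or $c$. Choosing $b_j=b$ yields edges $a\text{--}b$ and $d\text{--}b$; choosing $b_j=c$ yields $a\text{--}c$ and $d\text{--}c$. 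Either way a spurious edge appears, and Step~2d only adds edges, never removes them. So this partition cannot realize the path. Your structural argument for Step~2b (``each child is incident to some edge leaving its block, so route the connectivity step along it'') misses the point: Step~2b does not route the two children independently; it picks a \emph{single} neighbor $b_j$ and forces both children to link into the same block.

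This is exactly why the paper does not use arbitrary pairs. It defines an \emph{eligible} pair to be one whose two nodes are at distance at most $2$ in $\gG^o$ (so either an intra-link exists, or they share a common neighbor that can safely serve as $b_j$), and then proves by a nontrivial structural induction (the even/odd Lemmas in the supplement) that a connected $N$-vertex graph always admits $\lfloor N/2\rfloor$ disjoint eligible pairs. That existence argument is the heart of the proof; your proposal skips it. To repair your approach you would need to add the eligibility constraint on the pairing and then supply an argument that enough eligible pairs exist, which brings you back to the paper's line.
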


\begin{corollary}\label{thm:main_result}
Given a connected graph $\gG^o$ with $N$ nodes, there exist a 3-nodes graph $\gG$ and $\mleft\lceil \log_2( N/3 ) \mright\rceil$ unpooling layers, so that $\gG^o$ is the output of this series of unpooling layers acting on $\gG$.
\end{corollary}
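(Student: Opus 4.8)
The plan is to invoke Theorem~\ref{thm:expressivity} repeatedly, in reverse, stripping off one unpooling layer at a time and roughly halving the node count until a $3$-node graph remains. Assume $N\ge 3$ (otherwise the statement is vacuous), put $L:=\lceil\log_2(N/3)\rceil$, and set $n_i:=\max\!\bigl(3,\lceil N/2^{i}\rceil\bigr)$ for $i=0,\dots,L$, so that $n_0=N$ and $n_L=3$. I will build, by induction on $i$, a chain of connected graphs $\gG=\gG_0\to\gG_1\to\cdots\to\gG_L=\gG^o$ in which $\gG_{L-i}$ has $n_i$ nodes and every arrow is a single unpooling layer: the base case $i=0$ is the hypothesis that $\gG^o$ is connected, and for the inductive step, given a connected $\gG_{L-i}$ on $n_i$ nodes with $i<L$, I apply Theorem~\ref{thm:expressivity} with target graph $\gG_{L-i}$ and $K:=n_{i+1}$ to obtain a connected graph $\gG_{L-i-1}$ on $n_{i+1}$ nodes together with an unpooling layer that outputs $\gG_{L-i}$ from it. Since $\gG_0$ has $3$ nodes and the chain has $L$ arrows, this produces the required $3$-node graph and $\lceil\log_2(N/3)\rceil$ unpooling layers.

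The thing to verify is that $K=n_{i+1}$ is an admissible choice at every step, i.e.\ $n_{i+1}\in[\lceil n_i/2\rceil,\,n_i-1]$ for all $i<L$. First, for $i<L$ we have $i\le L-1<\log_2(N/3)$, hence $N/2^{i}>3$ and thus $n_i=\lceil N/2^{i}\rceil\ge 4$; in particular the target interval is nonempty. If $\lceil N/2^{i+1}\rceil\ge 4$, then the standard identity $\bigl\lceil\lceil N/2^{i}\rceil/2\bigr\rceil=\lceil N/2^{i+1}\rceil$ together with $n_i=\lceil N/2^{i}\rceil$ gives $n_{i+1}=\lceil n_i/2\rceil$, which lies in the interval. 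Otherwise $\lceil N/2^{i+1}\rceil\le 3$, so $n_{i+1}=3$ (and necessarily $i+1=L$); here $N/2^{i+1}\le 3$ forces $n_i=\lceil N/2^{i}\rceil\le 6$, and combined with $n_i\ge 4$ this yields $\lceil n_i/2\rceil\le 3\le n_i-1$, so $n_{i+1}=3$ is again admissible. Finally, the number of layers equals $\lceil\log_2(N/3)\rceil$ exactly because $n_i=3$ holds iff $\lceil N/2^{i}\rceil\le 3$ iff $2^{i}\ge N/3$, whose smallest nonnegative integer solution is $\lceil\log_2(N/3)\rceil$.

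I expect the only genuine subtlety to be the well-foundedness of the iteration, not the arithmetic above. For the induction to continue beyond the first step, the \emph{input} graph furnished by Theorem~\ref{thm:expressivity} must itself be connected, so that the theorem's hypothesis (a connected target of size in $[\lceil n/2\rceil,\,n-1]$) is re-established at the next stage; this is guaranteed by the construction in the proof of that theorem, and Proposition~\ref{thm:connecivity}, read forwards along $\gG_0\to\cdots\to\gG_L$, provides an independent check that each intermediate $\gG_i$ (and $\gG^o$) is indeed connected. The other point that must not be glossed over is that the chain terminates at a $3$-node graph rather than a $2$-node graph: this is exactly why we take $K=\max\!\bigl(3,\lceil n/2\rceil\bigr)$ rather than naively $\lceil n/2\rceil$, the $\max$ clause being the operative one precisely when the current graph has $4$, $5$, or $6$ nodes.
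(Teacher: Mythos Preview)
Your proposal is correct and follows essentially the same route as the paper: repeatedly invoke Theorem~\ref{thm:expressivity} to peel off one unpooling layer at a time, halving the node count until a graph on $4$--$6$ nodes is reached, and then make one final step down to $3$. Your write-up is in fact more careful than the paper's own proof, which asserts $k+1=\lceil\log_2(N/3)\rceil$ without the arithmetic check you supply, and you correctly flag that the connectedness of each intermediate input graph is not part of the \emph{statement} of Theorem~\ref{thm:expressivity} but is furnished by its pooling construction.
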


We remark that the proof of Theorem \ref{thm:expressivity} naturally provides a ``pooling'' procedure on the graph structure which can be regarded as the inverse operation of our unpooling layer. This validates the name ``unpooling''.

\section{EXPERIMENTS}\label{sec:experiment}
We demonstrate the effectiveness of the unpooling layer for molecule generation. 
We describe the two datasets in \S\ref{subsec:datasets}, the 
evaluation metrics in \S\ref{subsec:evaluation_metrics} and the details of the implemented methods in \S\ref{subsec:model_detail}. We then report the results in \S\ref{subsec:result}, while comparing with benchmark methods.
All implemented codes are provided in the supplementary materials.

\subsection{Datasets}\label{subsec:datasets}

\textbf{Waxman random graph dataset.} 
The dataset contains randomly generated Waxman graphs \citep{waxman1988}. More precisely, we first created 20,000 graphs with 12 nodes and node features uniformly drawn from $[0,1]^2$. For each graph, we connected nodes $i$ and $j \in [12]$ with probability $qe^{-s d_{ij}}$, where $q=0.65$, $s=0.3$ and $d_{ij}$ is the Euclidean distance between their features.  We do not assign edge features. Next, for each graph we compute the largest connected subgraph as long as it has at least 5 nodes. The final set contains these subgraphs with at least 5 nodes (the ones with less nodes are discarded). Thus the number of nodes ranges from 5 to 12 and the node features are the $x$ and $y$ coordinates. On average, each graph contains 9.2 nodes and 10.3 edges. There are 18,910 graphs in this dataset.

\textbf{Protein dataset.} We use the protein dataset introduced in \citet{guo2021deep}, which is a benchmark for graph generation \citep{du2021graphgt}. All the graphs contain 8 nodes and their node features are their 3D coordinate vectors.  There are no edge features. On average, each graph contains 8 nodes and 19.3 edges. There are 76,000 graphs in the datasets, where we use 38,000 for training and 38,000 for testing.

\textbf{Molecule datasets.} We use two common datasets for molecule generation: QM9 \citep{ramakrishnan2014quantum}  and ZINC \citep{sterling2015zinc}. 
QM9 contains 130k molecules and each molecule consists of up to 9 heavy atoms among carbon (C), oxygen (O), nitrogen (N) and fluorine (F). 
In \S\ref{subsec:model_detail} we explain how we choose the hyperparameters of the unpooling layers so that the generator will generate molecules 
with the number of atoms ranging between 6 and 9. On average, each graph contains 8.8 nodes and 9.4 edges.

ZINC contains about 250k molecules, where each molecule consists of 9 to 38 heavy atoms among carbon (C), oxygen (O), nitrogen (N), sulfur (S), fluorine (F), chlorine (Cl), bromine (Br), iodine (I) and phosphorus (P). For simplicity, we only take molecules with 11 - 36 heavy atoms (99.8\% of ZINC). On average, each graph contains 23.2 nodes and 24.9 edges.

For both QM9 and ZINC, we include the following node features: atom type, chiral specification of an atom (unspecified, clockwise or counter-clockwise) 
and the formal charge of an atom (0, +1 or -1).  
We use bond type (single, double or triple) as edge features. The node and edge features are represented as one-hot vectors.

\subsection{Evaluation Metrics}
\label{subsec:evaluation_metrics}

In the numerical experiments of the Waxman random graph and protein datasets, we evaluate the similarity of the generated data and source data by comparing the distributions of some graph properties in the generated graphs and source data. We use the following graph properties for comparison: average node connectivity, average clustering coefficient, edge density and node features. For both datasets, we report the Kullback–Leibler divergence and Wasserstein distance between the distributions of the source and generated data. For the Waxman random graph dataset we further demonstrate the distributions of the four properties for the source and generated data, while considering several methods for graph generation.

In the numerical experiments of molecule generation, we compare the different generators by generating 10,000 molecules
and applying the following metrics: 
validity (the ratio between the number of generated valid molecules and all generated graphs); uniqueness (the ratio between the number of unique valid molecules and generated valid molecules); and novelty (the ratio between 
the number of unique valid molecules which are different from all molecules in the dataset and the total number of generated unique valid molecules). We also report the geometric mean of the above three metrics (G-mean).

\subsection{Implementation Details }\label{subsec:model_detail}

We implemented a GAN with the unpooling layers (UL GAN), a GAN with an adjacency matrix-based generator (Adj GAN) and a VAE with the unpooling layers (UL VAE). For simplicity, we just describe here the implementation for molecule generation using the QM9 dataset. Indeed, the implementations of Adj GAN, UL GAN and UL VAE for the other applications are similar. Additional details are in the supplementary materials.

{\bf Discriminator for Adj GAN and UL GAN.} 
It takes an input graph and uses two MPNN layers with $128$ units to generate a graph $\gG=(V, E, \mX, \mW)$. It then aggregates the node features (the rows $\{\vx_j\}_{j \in V}$ of $\mX$) to produce the following single feature vector for the graph:
$$
\vh(\gG) = \sum_{j\in V}\sigma (\operatorname{lin}_1(\vx_j))  \odot
\tanh (\operatorname{lin}_2 (\vx_j));
$$
where $\sigma$ is logistic sigmoid, $\odot$ is element-wise multiplication and $\operatorname{lin}_1$ and $\operatorname{lin}_2$  are two layers with $128$ units.
Then it applies a layer $\operatorname{lin}_{3}$ with 256 units. A final  layer with a single unit then produces the output, where its activation function is $\tanh$.
A batch normalization and leaky ReLU activation function are used after the two MPNNs and $\operatorname{lin}_1, \operatorname{lin}_2, \operatorname{lin}_3$ layers. 

{\bf Encoder for UL VAE.} It has the same architecture as the above discriminator, except that the final layer maps into a 256-dimensional vector with a linear activation function. This vector further splits to two 128-dimensional vectors: $\vz_\mu$ and $\vz_\sigma$. It then  generates the following latent vector: $\vz = \vz_\mu +  \exp(\frac{1}{2}\vz_\sigma)\odot \vx$, where $\vx\sim N(0, 1)$.

{\bf Generator for Adj GAN.} 
It contains four linear layers with $128, 256, 256, 512$ units with batch normalization and leaky ReLU activation function. The last layer generates a $9 \times 11$ tensor (matrix) for the node features and $9\times 9 \times 4$ tensor for the edge features. 
We use a hard Gumbel softmax to produce the one-hot feature vectors for the nodes and edges. 

{\bf Generator for UL GAN and decoder for UL VAE.} 
It takes a 128-dimensional vector and outputs a graph.
The generator contains the following layers: an initial MLP layer that takes the random noise vector and creates a 3-nodes graph with 256-dimensional node features; an MPNN layer with 128 units; an unpooling layer that maps the 3-nodes graph to a 4-or-5-nodes graph; an MPNN layer with 128 units; an unpooling layer that maps the 4-or-5-nodes graph to a 6-to-9-nodes graph; an MPNN layer with 64 units; a linear layer with 64 hidden units and two final layers that produces node and edge features in $\R^{10}$ and $\R^3$, respectively. 
A skip connection, which takes the input vector, is added to the node features after each unpooling layer. Finally, a hard Gumbel softmax generates the desired one-hot features. The dimension of the edge features in all intermediate graphs is $32$. 
The log probabilities from this sampling process are added to obtain the total log probability when updating the parameters according to \eqref{eqn:update}.

{\bf Training process.} For UL GAN and Adj GAN, the loss function corresponds to Wasserstein GAN with gradient penalty \cite{gulrajani2017Penalized_WGAN}.  For UL VAE, the loss function is the sum of the reconstruction errors of node features and edge features and the Kullback-Leibler divergence between the latent vector and a standard Gaussian. We use the Adam optimizer with a learning rate $2\times 10^{-4}$ for the generator and a learning rate $10^{-4}$ for the discriminator with a training batch size of 64. During the training process, we evaluate the model
 every 500 iterations and we report the result with optimal validity before a mode collapse occurs. In each training step,  
we alternatively minimize the loss function and update the parameters according to  the policy gradient procedure in \eqref{eqn:update}.

\subsection{Results}\label{subsec:result}

\begin{figure}[t]
\begin{center}
\includegraphics[width=7cm]{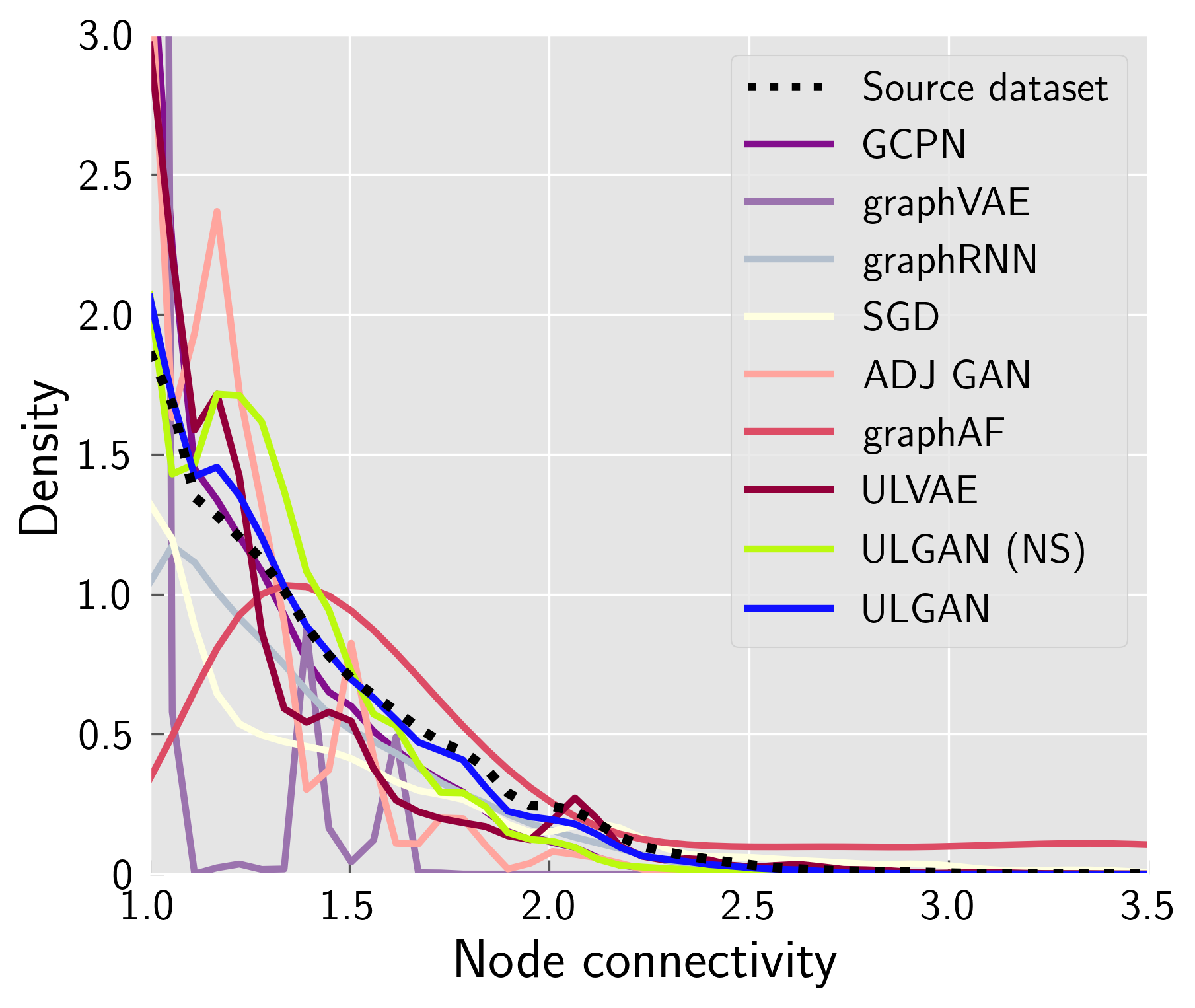}
\includegraphics[width=7cm]{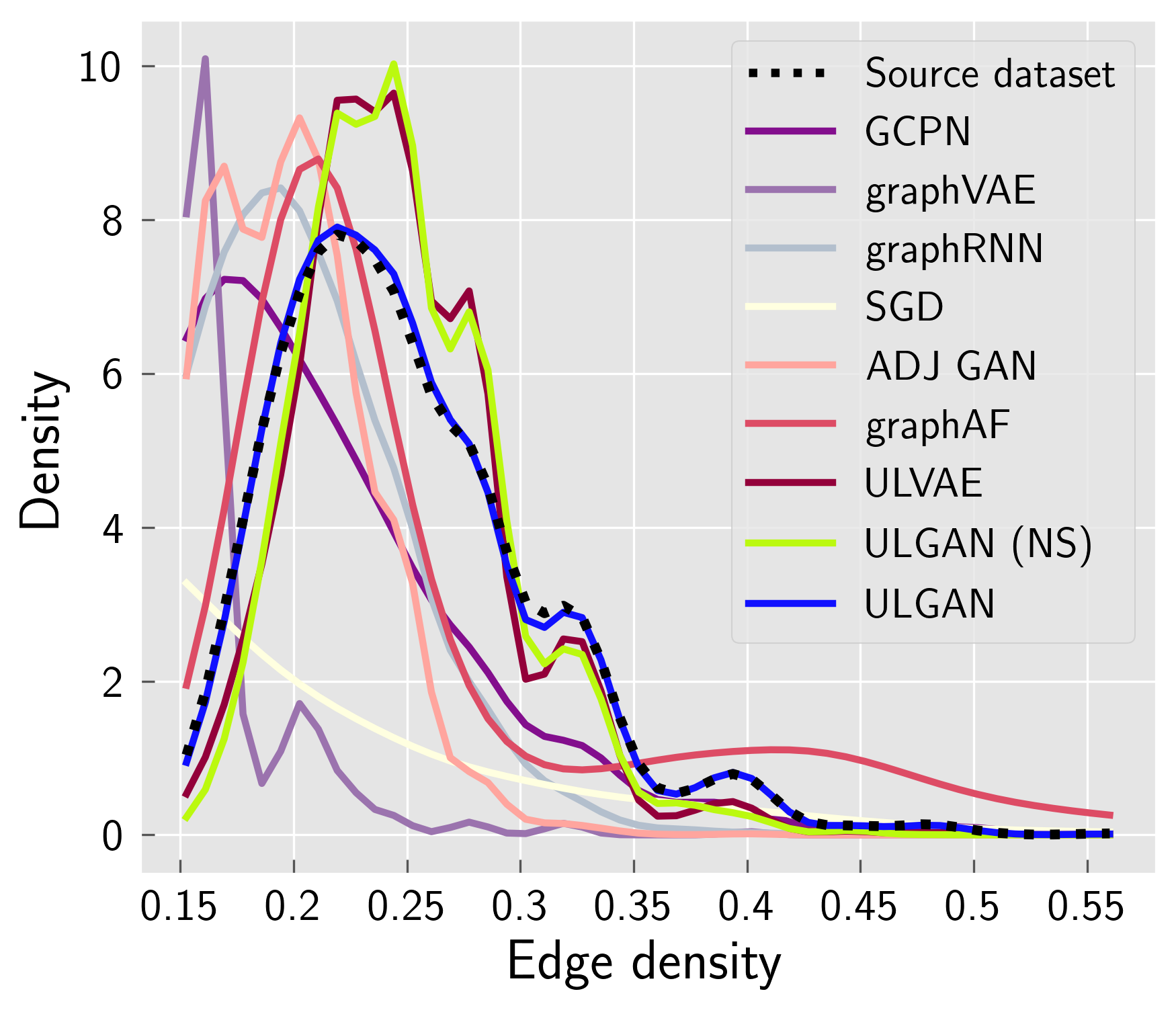}
\includegraphics[width=7cm]{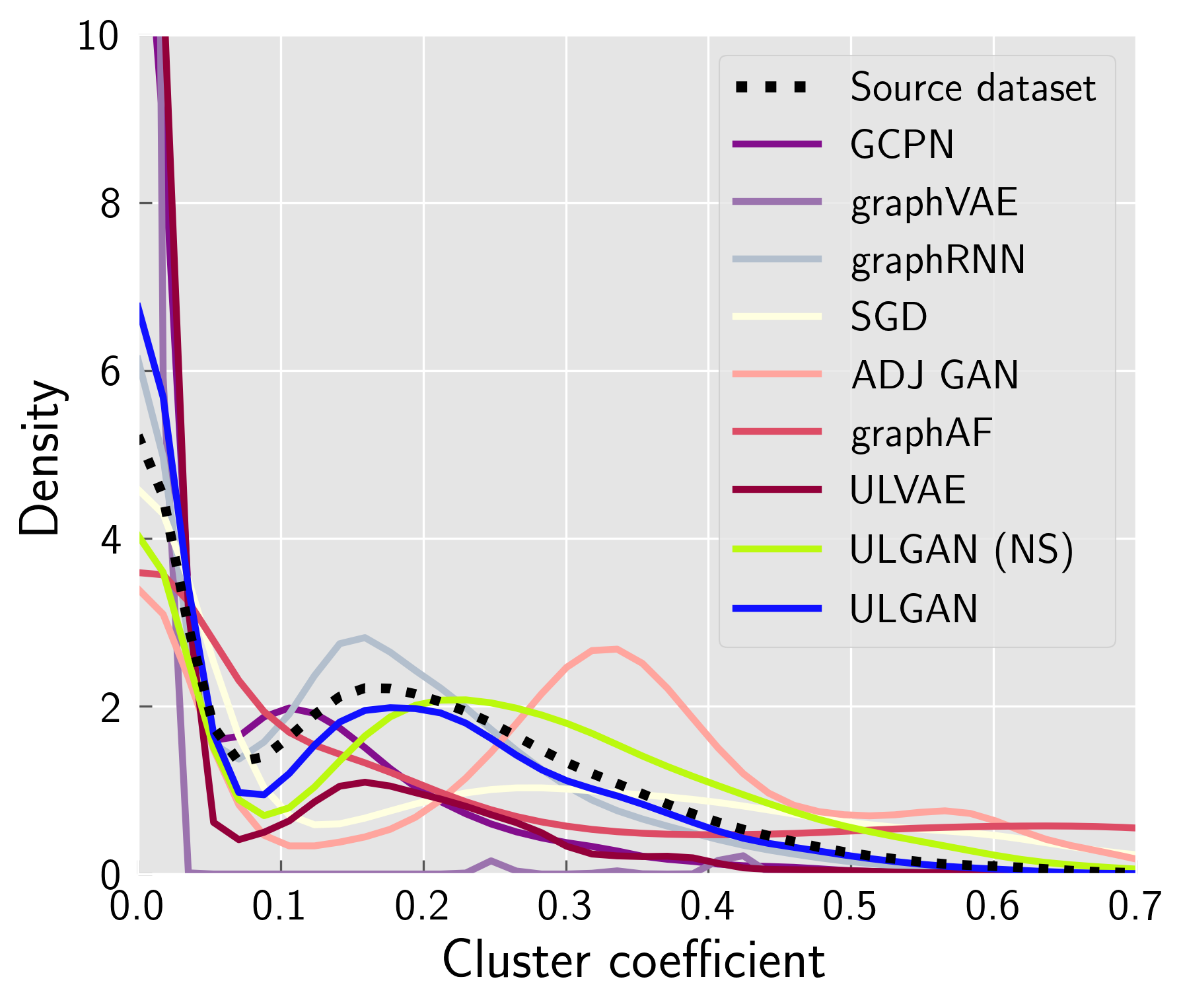}
\includegraphics[width=7cm]{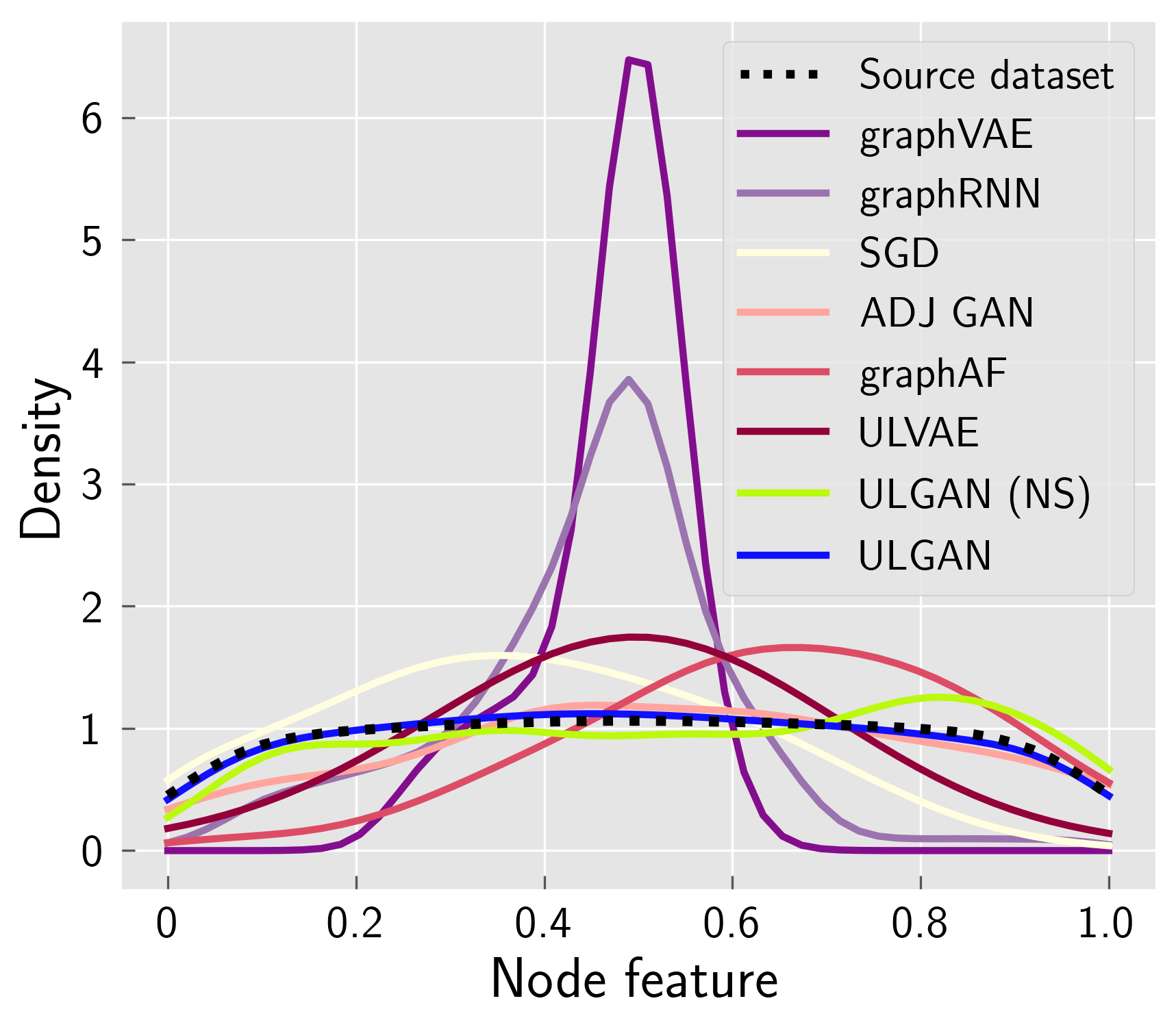}
\end{center}
\vskip -0.2cm
\caption{Distributions of the four graph properties for the generated and source data using the Waxman random graph dataset. The graph properties include average node connectivity (top left), average clustering coefficient (bottom left), edge density (top right), and node features  (bottom right).}
\label{fig:hist_waxman}
\end{figure}

For all datasets, we generate 10,000 samples for evaluation.
For the Waxman random graph and protein datasets, we compared Adj GAN, UL GAN and UL VAE with our implementations of the following baseline methods: GraphVAE \citep{simonovsky2018graphvae}, GraphRNN \cite{you2018graphrnn}, GCPN \citep{you2018gcpn}, GraphAF \citep{shi2020graphaf} and SGD-VAE \citep{guo2021deep}. 
Since GraphAF, GraphVAE and GCPN were designed for molecule generation, we adapted them to general graph generation. 
We remark that GCPN is not able to produce numeric node features and we thus do not report the node feature metrics for it. In order to test the effect of skips connections, we implemented for the random graph dataset a version of UL GAN that has no skip connections from the input vector and we refer to it as UL GAN (NS).

\begin{table*}[ht]
\caption{Results for the Waxman random graph dataset. We report the KL divergence and Wasserstein distance with respect to the following quantities: edge density (kl edge dense and wd edge dense), average node connectivity (kl conn and wd conn), average clustering coefficient (kl clust and wd clust) and node features (kl node feat and wd node feat). } 
\label{tab:waxman}
\begin{center}
\begin{small}
\begin{tabular}{lcccccccc}
\toprule
Methods & kl edge dense & kl clust & kl conn & kl node feat & wd edge dense & wd clust & wd conn & wd node feat\\\midrule
GraphVAE & 4.009 & 6.127 & 2.363 & 5.088 & 0.118 & 0.140 & 0.315 & 0.183 \\\hline
GraphRNN & 0.592 & 0.034 & 0.373 & 0.717 & 0.052 & \textbf{0.019} & 0.217 & 0.135 \\\hline
GCPN & 0.402 & 0.292 & 0.096 & N/A & 0.043 & 0.086 & 0.116 & N/A \\\hline
GraphAF & 0.507 & 0.249 & 0.460 & 0.382 & 0.032 & 0.079 & 0.503 & 0.163 \\\hline
SGD-VAE & 1.493 & 0.321 & 0.448 & 0.378 & 0.122 & 0.070 & 0.294 & 0.131 \\\hline
ADJ GAN & 1.128 & 1.081 & 0.247 & 0.196 & 0.065 & 0.097 & 0.178 & 0.088 \\\hline
UL VAE & 0.092 & 0.408 & 0.108 & 0.635 & 0.010 & 0.099 & 0.105 & 0.132 \\\hline
UL GAN (NS) & 0.100 & 0.098 & 0.088 & 0.147 & 0.011 & 0.076 & 0.046 & 0.056 \\\hline
UL GAN & \textbf{0.001} & \textbf{0.023} & \textbf{0.034} & \textbf{0.145} & \textbf{0.004} & 0.027 & \textbf{0.011} & \textbf{0.042} \\
\bottomrule
\end{tabular}
\end{small}
\end{center}
\end{table*}

\begin{table*}[ht]
\caption{Results for the protein dataset. We report the same quantities summarized in the caption of Table \ref{tab:waxman}.} 
\label{tab:protein}
\vskip 0.1in
\begin{center}
\begin{small}
\begin{tabular}{lcccccccc}
\toprule
Methods & kl edge dense & kl clust & kl conn & kl node feat & wd edge dense & wd clust & wd conn & wd node feat\\
\midrule
GraphVAE & 2.497 & 0.343 & 3.228 & 9.777 & 0.065 & 0.020 & 0.758 & 10.590 \\\hline
GraphRNN & 0.082 & 0.337 & 0.163 & 2.818 & 0.013 & 0.071 & 0.165 & 4.473 \\\hline
GCPN & 1.567 & 4.943 & 7.358 & N/A & 0.483 & 0.508 & 2.292 & N/A \\\hline
GraphAF & 1.942 & 1.659 & 1.987 & 2.603 & 0.043 & 0.163 & 1.543 & 17.217 \\\hline
SGD-VAE &  1.035& 1.228 & 0.975 & 10.195 & 0.169 & 0.311 & 1.549 & 10.698 \\\hline
ADJ GAN & 6.176 & 4.600 & 7.145 & 0.730 & 0.086 & 0.050 & 0.815 & 6.705 \\\hline
UL VAE & 0.492 & 0.889 & 0.791 & 0.181 & 0.041 & 0.072 & 0.373 & 4.344 \\\hline
UL GAN & \textbf{0.074} & \textbf{0.224} & \textbf{0.101} & \textbf{0.095} & \textbf{0.011} & \textbf{0.009} & \textbf{0.127} & \textbf{3.142} \\
\bottomrule
\end{tabular}
\end{small}
\end{center}
\vskip -0.1in
\end{table*}

Figure \ref{fig:hist_waxman} demonstrates the distributions of the four graph properties of both the source data and the different generating methods for the Waxman random graph dataset. For all four properties, the distribution obtained by UL GAN seems to be the closest to the source data.  
Table \ref{tab:waxman} reports the 8 evaluation metrics for the Waxman random graph dataset. We note that UL GAN outperforms the other methods in most of the metrics, except for Wasserstein distance between average clustering coefficients, where GraphRNN achieves the smallest metric. Nevertheless, ULGAN achieves the smallest KL divergence and its distribution seems to be closer to the source data according to Figure \ref{fig:hist_waxman}.
The better performance of UL GAN over Adj GAN and of UL VAE over GraphVAE indicates the clear advantage of using the unpooling layer over a standard adjacency-based method. 
We further note that the improvement of UL GAN over UL GAN (NS) is not significant. This indicates that even without the skip connection our model performs really well and that its main advantage is due to the unpooling layer and not the skip connection.

Table \ref{tab:protein} reports the evaluation metrics for the protein dataset. We note that UL GAN outperforms the other baseline methods in all the metrics. Also, the adjacency matrix-based methods (GraphVAE and Adj GAN) perform poorly in this dataset while their unpooling-layer-based counterparts (UL VAE and UL GAN) perform much better.

For QM9, we compare UL VAE, UL GAN, Adj GAN, MolGAN \citep{de2018molgan}, CharacterVAE \citep{gomez2018characterVAE}, GrammarVAE \citep{kusner2017grammar}, Graph VAE \citep{simonovsky2018graphvae}, GraphAF \citep{shi2020graphaf}, GraphDF \citep{luo2021graphdf}, MoFlow \citep{zang2020moflow}, Spanning tree \citep{ahn2021spanningTree}, and MGM \citep{mahmood2021masked}. 
For ZINC, we compare UL GAN, Adj GAN, CharacterRNN \citep{segler2018characterRNN}, LatentGAN \citep{prykhodko2019latentGAN}, junction tree VAE (JT VAE) \citep{jin2018junctionTree}, GraphAF \citep{shi2020graphaf}, GraphDF \citep{luo2021graphdf}, MoFlow \citep{zang2020moflow}, and Spanning tree \citep{ahn2021spanningTree}. We do not report results of UL VAE for ZINC because its training was slow and we do not have results for its counterpart, GraphVAE. For Adj GAN, UL GAN and UL VAE, we report the means based on 100 runs of generating 10k samples. The results of the other baseline methods are copied from their original papers. Standard deviations for our implementations are included in the supplementary materials.

Table~\ref{tab:qm9} and Table~\ref{tab:zinc} report validity, uniqueness, novelty and their geometric mean for QM9 and ZINC, respectively. 
For QM9, UL GAN 
improves significantly from Adj GAN. Its performance is overall competitive when compared to other state-of-the-art approaches. In particular, UL GAN achieves the third-highest geometric mean.  
UL VAE significantly outperforms its adjacency-matrix-based counterpart, GraphVAE.
For ZINC, UL GAN achieves perfect uniqueness and novelty scores. In terms of validity, it outperforms Adj GAN, whose validity and uniqueness scores are poor. We thus note that our unpooling layer is able to generate graphs of moderate sizes, while adjacency-matrix generators are only suitable for small graphs. Although some other methods achieve better validity, the overall performance of UL GAN is comparable with state-of-the-art methods.

\begin{table}[ht]
\caption{Validity, uniqueness, novelty and their geometric mean (G-mean) for molecule generation using QM9.  Scores for the competing methods (listed above the middle line) were copied from their original papers. } 
\label{tab:qm9}
\vskip 0.15in
\begin{center}
\begin{small}
\begin{tabular}{lccccr}
\toprule
Method & Valid & Unique & Novel & G-mean \\
\midrule
CharacterVAE & 0.103 & 0.675 & 0.900 & 0.397 \\
GrammarVAE   & 0.602 & 0.093 & 0.809 & 0.356\\
GraphVAE     & 0.557 & 0.760 & 0.616 & 0.639\\
MolGAN       & 0.981 & 0.104 & 0.942 & 0.458\\
GraphAF & 0.670 & 0.945 & 0.888 & 0.825\\ 
GraphDF & 0.827 & 0.976 & 0.981 & 0.925\\
MoFlow & 0.962 & \textbf{0.992} & \textbf{0.980} & \textbf{0.978}\\
Spanning tree & \textbf{1.00} & 0.968 & 0.727 & 0.889\\ 
MGM & 0.886	& 0.978	 & 	0.518 & 0.766\\
\hline
UL VAE & 0.735  & 0.940 & 0.949 & 0.869 \\
Adj GAN      & 0.941 & 0.139 & 0.886  & 0.488\\
UL GAN & 0.907 & 0.826 & 0.949 & 0.893\\
\bottomrule
\end{tabular}
\end{small}
\end{center}
\vskip -0.1in
\end{table}

\begin{table}[ht]
\caption{
Validity, uniqueness, novelty and their geometric mean  (G-mean) for molecule generation using ZINC.  Scores for the first three methods were copied from \citet{polykovskiy2020molecular} and scores for other competing methods (listed above the middle line) were copied from their original papers.
}
\label{tab:zinc}
\vskip 0.15in
\begin{center}
\begin{small}
\begin{tabular}{lccccr}
\toprule
Method & Valid & Unique & Novel & G-mean \\
\midrule
CharRNN	 & 0.975 & \textbf{1.00} & 0.842 & 0.936\\
LatentGAN & 0.897 & 0.997 & 0.950 & 0.947\\
JT VAE & \textbf{1.00} & \textbf{1.00} & 0.914 & 0.970\\
GraphAF & 0.680 & 0.991 & \textbf{1.00} & 0.877 \\ 
GraphDF & 0.890 & 0.992 & \textbf{1.00} & 0.959\\
MoFlow & 0.818 & \textbf{1.00} & \textbf{1.00}& 0.935\\
Spanning tree & 0.995 & \textbf{1.00} & 0.999&\textbf{0.998} \\ 
\hline
Adj GAN   & 0.109 & 0.196  & \textbf{1.00} & 0.277\\
UL GAN & 0.871 & \textbf{1.00} & \textbf{1.00} & 0.955\\
\bottomrule
\end{tabular}
\end{small}
\end{center}
\vskip -0.1in
\end{table}

\begin{figure*}[ht]
    \begin{center}
\includegraphics[scale=0.6]{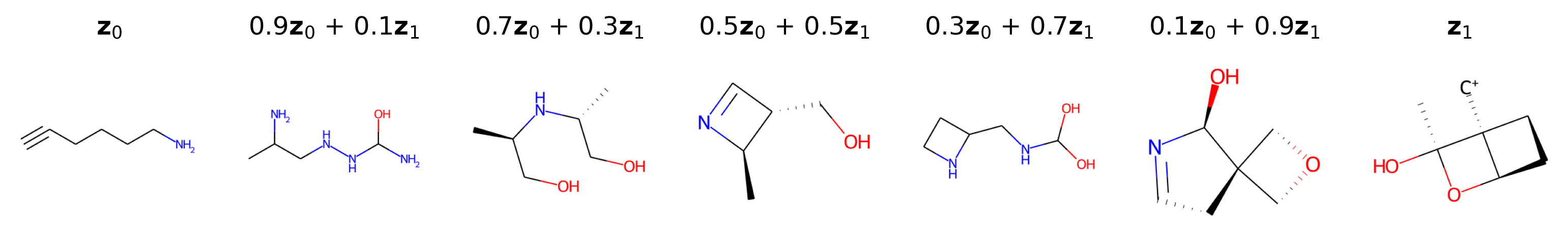}
  \vskip -0.43cm
\includegraphics[scale=0.6]{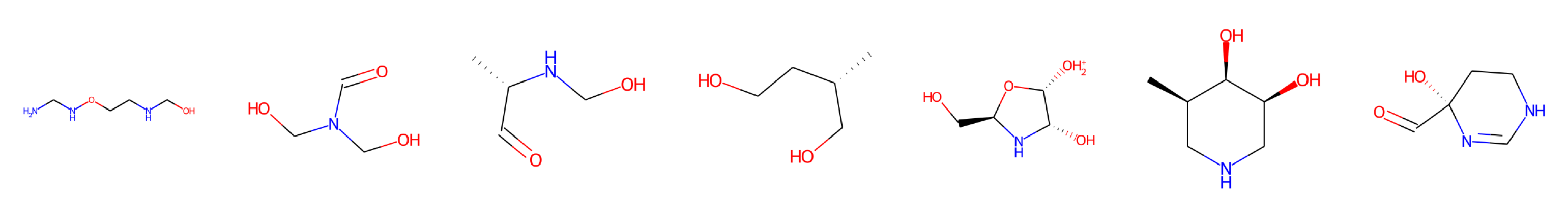}
  \vskip -0.43cm
\includegraphics[scale=0.6]{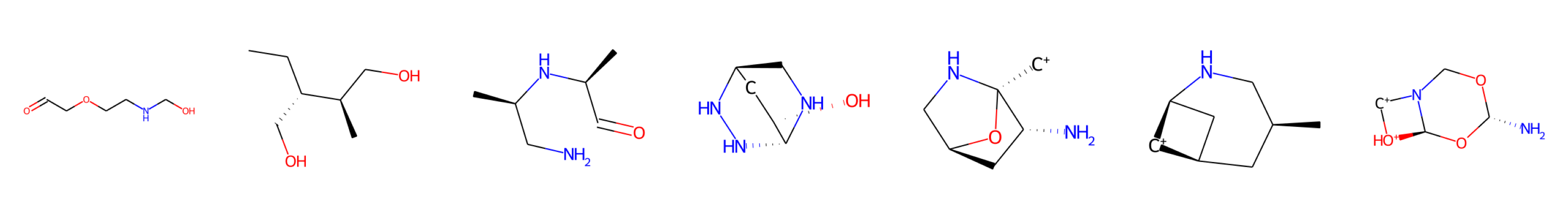}
  \vskip -0.43cm
\end{center}
\caption{\label{fig:latent_GAN}
Demonstration of gradual change between two different molecular properties when applying  UL GAN for QM9. Left: 3 string-like molecules (with latent vector $\vz_0$), Right: 3 molecules with rings (with latent vector $\vz_1$). We gradually change the latent vector (on top) and notice the gradual change of the 3 molecules from having string-like structure to having ring-like structure.}
\end{figure*}

Figure \ref{fig:latent_GAN}
studies the latent space structure of UL GAN for QM9. It picks two latent vectors $\vz_0$ and $\vz_1$ corresponding to a string-like molecule and a molecule with a ring, respectively. Then, it forms a series of latent vectors $\vz_t = t \vz_1 + (1-t) \vz_0$ and shows their corresponding molecules. We note that as $t$ increases the molecular structures are changing from string-like to ring-like ones.

\section{CONCLUSION AND LIMITATIONS}\label{sec:conclusion}

We introduced a novel unpooling layer that can enlarge a given graph. 
We have proved that this unpooling layer generates connected graphs and its range covers all possible connected graphs. We utilized such layers within GANs and VAEs and tested their performance for graph, protein and molecule generation. Our unpooling-based generation outperforms other methods for graph and protein generation and is competitive 
for molecule generation.  In particular, a significant improvement was noticed in comparison to other methods that are based on adjacency matrices, such as Adj GAN, MolGAN and Graph VAE.

The unpooling layer can be used for other purposes. In future work, we plan to apply it to generic graph reconstruction, while considering the particular applications of recommender systems and graph anomaly detection. We also plan to apply it to conditional graph generation, that is,  generating graphs with some desired properties.

The unpooling layer has some limitations. First, its training requires relatively large computational resources, since it relies on various probabilities in order to determine the graph structure. Second, it becomes difficult to optimize several unpooling layers within the generative model because the log probabilities are added and backpropagated in all the unpooling layers. 

Our work has practical applications of societal impact, such as drug discovery. 
However, our main focus has been on general graph generation and in order to have a stronger impact on drug discovery we need to carefully specialize the method for this purpose. For example, we need to improve the druglikeness, or other desired chemical properties, of the generated molecules.

\subsubsection*{Acknowledgements}
This work was partially supported by NSF award DMS 2124913 and the Kunshan Municipal Government
research funding.

\bibliography{ref}
\bibliographystyle{refstyle}

\newpage
\appendix

\begin{center}
\textbf{\large SUPPLEMENTARY MATERIALS}
\end{center}
We supplement the main text as follows: \S\ref{sec:UL_method} includes more details on the unpooling layer; \S\ref{sec:code} discusses the tested codes; \S\ref{sec:details} describes implementation details of both the unpooling layer and the generative neural network; \S\ref{sec:theorem} carefully validates the theory stated in \S\ref{sec:TheoreticalResult}; \S\ref{sec:opt_chem_prop} reports molecule generation results in a particular setting that aims to optimize specific chemical properties;
and \S\ref{sec:additional_result} reports some additional numerical results that supplement the ones in \S\ref{sec:experiment}.

\section{ADDITIONAL DETAILS ABOUT THE UNPOOLING LAYER}\label{sec:UL_method}

We describe in more details the unpooling layer mechanism. For completeness, we repeat details explained in \S\ref{subsec:unpooling}.

{\bf Notations}. For an unpooling layer $U$, the input featured graph is denoted by $\gG = (V, E, \mX, \mW)$, which is an undirected graph $(V,E)$ with feature matrices$\mX$ and $\mW$ for nodes and edges respectively. More specifically, $V = [N] := \{1, 2, \cdots N\}$ are node indices, $E$ is the edge set whose members are of the form $\{i, j\}$ where $i, j \in V$, and $M = |E|$ is the number of edges in the graph. $\mX\in \sR^{N\times d}$ is the node features matrix whose $i$-th row $\vx_i$ is a $d$-dimensional vector for node $i$. $\mW\in \sR^{M\times e}$ is the edge features matrix whose rows are $e-$dimensional vectors for edge features and we refer to the feature vector on edge $\{i, j\}\in E$ as $\vw_{i, j}$.

{\bf Detailed Steps of the Unpooling Layer.} 
We provide more details of the unpooling layer according to the following steps. Recall that the unpooling layer enlarges an input featured graph $\gG = (V, E, \mX, \mW)$ to a graph $\gG^o=(V^o, E^o, \mY, \mU)$, while using the set of unchanged (or stable) nodes, $I_s'$, the set of unpooled nodes, $I_u'$, and the set of nodes that may be unpooled, $I_r'$.

{\bf Step 1. Generating nodes and node features.} We first probabilistically determine the eventual set of nodes to be unpooled and consequently form the set of nodes in the output graph. We finally construct node features.

{\bf Step 1a. Determining nodes to be unpooled.} We determine a set of nodes to be unpooled and a set of nodes to stay unchanged based on $I_s', I_r'$ and $I_u'$. For each node $i\in I_r'\subset V$, we determine the probability of being unpooled by
$$
p_{r}(\vx_i) = \operatorname{MLP-}R(\vx_i).
$$
A node $i\in I_r'$ is unpooled if a randomly drawn uniform random variable $U_i\sim U[0, 1]$ is smaller than $p_{r}(\vx_i)$. We form a set of nodes to be unpooled as 
$$
I_u = I_u'\cup \{i: i\in I_r' , U_i<p_r(\vx_i)\}.
$$
The rest nodes stay unchanged and form a set $I_s:=V\setminus I_u$. 
The total log probability in this step is 
\begin{equation}
    \textrm{logP-R} = \sum_{i\in I_r'} \log\big(p_r(\vx_i)\mathds{1}_{U_i < p_r(\vx_i)}  + (1-p_r(\vx_i))\mathds{1}_{U_i \geq p_r(\vx_i)}\big)\label{eqn:logPR}
\end{equation}

We remark that it is possible to have $I_r' = \emptyset$ (e.g. the unpooling layers used for the protein dataset), thus all the unpooling nodes are pre-determined. In this case, we obtain $I_u:=I_u'$ and $I_s=I_s'$ without using $\operatorname{MLP-}R$ and we do not establish the log probability $\textrm{logP-R}$. 

{\bf Step 1b. Unpooling nodes and constructing node features}. 
From the previous step, we obtain $I_s$, a set of nodes to stay unchanged, and $I_u$, a set of nodes to be unpooled. We thus clarified the set $V^o$, where we note that $|V^o|=|I_s| + 2|I_u|$. Given two children nodes in $V^o$, we refer to the ``subdivided'' node in $V$ as their parent node. We define a map $f$ that assigns to each node in $i \in I_s$ 
the corresponding index, $f(i) \in V^o$, and to each node in $i \in I_u$ the set of two indices of its children nodes in $V^o$; in the latter case we denote $f(i)  =  \{f_1(i), f_2(i)\}$.
The feature vectors of the nodes in $V^o$  are obtained by an MLP for vertices, $\operatorname{MLP-}y$. To produce different features for the children nodes $f_1(i)$ and $f_2(i)$, we apply $\textrm{MLP-}y$ to two different pieces of the node feature vector $\vx_i$. More specifically, we use the orthogonal projectors, $P_{S_1}$ and $P_{S_2}$, onto two fixed subspaces, $S_1$ and $S_2$ in $\mathbb{R}^d$ (which are later specified in \S\ref{sec:details}), as follows:
\begin{align*}
    \vy_{f(i)} & = \textrm{MLP-}y(P_{S_1} \vx_{i}), &\text{ for }&~ i\in I_s;\\
    \vy_{f_{j}(i)} & = \textrm{MLP-}y(P_{S_j}\vx_{i}), &\text{ for }&~ j=1, 2,~ i\in I_u.
\end{align*}

One can also use three different MLP-y's for the static nodes and the two different children nodes.

{\bf Step 2. Building edges.} Next, the unpooling layer takes the generated children nodes in $V^o$ and their node features  and builds edges for them. It first builds intra-links, which are edges within the pairs of children nodes; it then builds inter-links, which are edges between pairs of children nodes and images (according to the function $f$) of neighbors of their parent node (in $I_u$). At last, a single step ensures the connectivity of the output graph. 

We also aggregate all log-probabilities in all three steps and use that in the policy gradient algorithm to train the layer and tune those probabilities for edge construction.

We start the edge generation process with $E^o = \emptyset$. 

{\bf Step 2a. Building intra-links.} We  
aim to probabilistically sample a set  $V_c$ of nodes in $I_u \subset V$ whose two children (in $V^0$) will be connected to each other by intra-links. We first initialize  $V_c = \emptyset$. For each node $j\in I_u$ with feature vector $\vx_j$ we add node $j$ into $V_c$ with a probability outputted by an MLP for intra-links,  $p_c(j) =\textrm{MLP-IA}(\vx_j)$. 
That is,  
we draw a uniform random variable, $U\sim U[0, 1]$, and if $U<p_c(j)$, then $V_c = V_c\cup\{j\}$ and $\{f_1(j), f_2(j)\}$ is an intra-link, which is added to $E^o$; otherwise, $V_c$ is unchanged and no intra-link is added.

We add all the intra-link to edge set in output graph
$$
E^o = E^o\cup \big\{\{f_1(j), f_2(j)\}, \forall j \in V_c\big\}.
$$

We track the log probabilities for later use
as follows:
\begin{equation}
    \textrm{logP-IA} := \sum_{j\in V_c} \ln (p_c(j))  + \sum_{j\notin V_c} \ln (1-p_c(j)).\label{eqn:logPIA}
\end{equation}

{\bf Step 2b. Finding a shared node for disconnected children.} 
For disconnected children pairs, we designate a node that will connect to these children. In the next step these nodes and possibly additional ones will be connected to a subset of the children pairs. 
For each node $j\in I_u \setminus V_c$, 
we denote the set of all edges in $E$ that are connected to $j$ by $E_j = \mleft\{ \{i_1, j\}, ...\{i_{m_j}, j\}\mright\}$, 
and we calculate probabilities of selecting those edges as 
\begin{align*}
    &\mleft(p_b(j, i_1), p_b(j, i_2), ... p_b(j, i_{m_j})\mright) \propto 
    \big(\textrm{MLP-C}(\vy_{f_1(j)}, \vy_{f_2(j)}, \vw_{\{j, i_1\}}, \vx_{i_1}), \\
     &\quad\textrm{MLP-C}(\vy_{f_1(j)}, \vy_{f_2(j)}, \vw_{\{j, i_2\}}, \vx_{i_2}), 
     ..., ~\textrm{MLP-C}(\vy_{f_1(j)}, \vy_{f_2(j)}, \vw_{\{j, i_{m_j}\}}, \vx_{i_{m_j}})\big).
\end{align*}

We draw from uniform distribution $U\sim U[0, 1]$ and let
$$
   b_j = \left\{\begin{array}{ccl}
i_1, &&\text{if } U< p_b(j, i_1); \\
 i_k, &&\text{if } U\in \mleft[\sum_{l=1}^{k-1} p_b(j, i_l), \sum_{l=1}^{k} p_b(j, i_l)\mright); \\
  i_{m_j}, &&\text{otherwise.}    
  \end{array}\right.
$$
Then, we 
define $N_{\{b_j, j\}}(j) := \{f_1(j), f_2(j)\}$.

In the next step, each edge $\{i, j\}$ in $E$ gives rise to adding edges between $N_{\{i, j\}}(i)$ and $N_{\{i, j\}}(j)$ in $\gG^o$. Therefore, our approach ensures that $f_1(j)$ and $f_2(j)$ both connect to $N_{\{b_j, j\}}(b_j)$ so $\gG^o$ is connected.

We track the log-probabilities from this step that ensures connectivity as follows:
\begin{equation}
\textrm{logP-C} := \sum_{j\in I_u, j\notin V_c} \ln ( p_b(j, b_j))
    \label{eqn:logPC}
\end{equation}

{\bf Step 2c. Building inter-links.} For each edge $\{i, j\} \in E$ and node $j$, we first generate a set of nodes $N_{\{i, j\}}(j)\subset V^o$ as follows.
For $j\in I_u\setminus V_c$ and $i=b_j$, we have already defined $N_{\{i, j\}}(j)$ in the above step.
If $j\in I_s$, we let $N_{\{i, j\}}(j) = \{f(j)\}$. If $j\in I_u$, we first
calculate probabilities from an MLP of intra-links as follows: $\mleft(p_1(i, j), p_2(i, j)\mright) = \textrm{MLP-IE}(\vy_{f_1(j)}, \vy_{f_2(j)}, \vw_{\{i,j\}}, \vx_i)$. Then, we draw a random variable $U\sim U(0, 1)$ and let
$$
N_{\{i, j\}}(j)  =\left\{ 
\begin{array}{lcl}
\{f_1(j)\}, &&\text{if } U< p_1(i, j); \\
    \{f_1(j), f_2(j)\}, &&\text{if } U\geq p_1(i, j)+p_2(i, j); \\
    \{f_2(j)\}, &&\text{otherwise.} 
\end{array}
    \right.
$$

We similarly form $N_{\{i, j\}}(i)$ by swapping the $j$ and $i$ nodes.

The log probability for each edge-node pair $(\{i, j\}, j)$ is
\begin{align}
&\textrm{logP-IE}(\{i, j\}, j):=     \Big(
\mathds{1}_{N_{\{i, j\}}(j) = \{f_1(j)\}} \, \ln (p_1(i, j)) + \mathds{1}_{N_{\{i, j\}}(j) = \{f_2(j)\}} \, \ln (p_2(i, j)) \label{eqn:logIEij}\\ 
\notag
& \quad + 
\mathds{1}_{N_{\{i, j\}}(j) = \{f_1(j), f_2(j)\}} \, \ln (1-p_1(i, j)-p_2(i, j))
\Big).
\end{align}

Lastly, we add to $E^o$ all inter-links, i.e., all possible edges that connect nodes  in $N_{\{i, j\}}(i)$ and $N_{\{i, j\}}(j)$:
\begin{equation}
\label{eq:add_inter_link_with_N}
E^o = E^o \cup \mleft\{\{k, l\}, \ \forall k\in N_{\{i, j\}}(i), \ l\in N_{\{i, j\}}(j)\mright\}.
\end{equation}

Let $A_j:= \{j\in V_c\}$ and $A_{i, j}:= \{j\in I_u,~j\notin V_c,~i\neq b_j\}$. The cumulative log-probability for inter-links outputted by $\textrm{MLP-IE}$ is
\begin{equation}
    \textrm{logP-IE} = \sum_{\{i, j\}\in E} \mathds{1}_{A_j \cup A_{i, j}}\textrm{logP-IE}(\{i, j\}, j)  +\mathds{1}_{A_i \cup A_{j, i}}\textrm{logP-IE}(\{i, j\}, i).\label{eqn:logPIE}
\end{equation}

{\bf Step 2d. Building additional edges between children node pairs.} 
We insert additional edges between children nodes in order to assure the expressivity of the output graph (this will be clarified in the proof of Theorem~2).
Let $E_u = \{\{i, j\}\in E: i\in I_u, j\in I_u\}\subset E$ denote the collection of edges whose two ends are both unpooling nodes. We initialize $E_a:=\emptyset$. For each edge $\{i, j\}\in E_u$, we generate a probability of adding one more additional edge using an MLP as follows: $p_{a}(\{i, j\}) = \textrm{MLP-IE-A}(\vx_i, \vx_j, \vw_{\{i, j\}})$. We draw a random variable $U_1\sim U[0, 1]$ and if $U_1<p_a(\{i, j\})$, we let $E_a: = E_a\cup \{\{i, j\}\}$ and $E^o : = E^o \cup E^o_a$, where $E^o_a$ defined in the following three cases: (1) if $|N_{\{i, j\}}(i)| = |N_{\{i, j\}}(j)| = 1$, then $E^o_a(\{i, j\}):=\{\{k, l\}, k\in f(i), l\in f(j), \text{ and } k\notin N_{\{i, j\}}(i), l\notin N_{\{i, j\}}(j)\}$; (2) if $|N_{\{i, j\}}(i)| +|N_{\{i, j\}}(j)| = 3$, without loss of generality, assume $|N_{\{i, j\}}(i)|=1$ and $|N_{\{i, j\}}(j)|=2$, then draw $U_2\sim U[0, 1]$. If $U_2<\frac{p_1(i, j)}{p_1(i, j) + p_2(i, j)}$ ($p_1(i,j)$ and $p_2(i,j)$ were obtained in Step 3c), we set $r_{ij}(j):= 1$, otherwise $r_{ij}(j):=2$. We then define $E^o_a(\{i, j\}):=\{\{k, f_{r_{ij}(j)}(j)\}$, $k\in f(i)$,   and  $k\notin N_{\{i, j\}}(i)\}$; (3) if $|N_{\{i, j\}}(i)| +|N_{\{i, j\}}(j)| = 4$, then $E^o_a(\{i, j\}) := \emptyset$. We thus updated the edge set of the output graph as follows
$$
E^o := E^o \bigcup \bigcup_{\{i, j\}\in E_a} E^o_a(\{i, j\}).
$$
We also record the total log probability of this step:
\begin{align}
    \textrm{logP-A} & = \sum_{\{i, j\}\in E_a}\ln p_a(\{i, j\}) + \sum_{\{i, j\}\notin E_a, \{i, j\}\in E_c}\ln (1- p_a(\{i, j\})) \notag \\
    & + \sum_{\substack{\{i, j\}\in E_a, \\|N_{\{i, j\}}(i)|=1, \\|N_{\{i, j\}}(j)|=2}} \ln  \frac{p_{r_{ij}(j)}(i, j)}{p_1(i, j) + p_2(i, j)}.
    \label{eqn:logIA}
\end{align}

{\bf Summary of step 2.} The edges of the output graph include all intra-links and inter-links as follows:
\begin{align*}
    E^o = & \Big\{
            \{f_1(j), f_2(j)\}: j\in I_u, j\in V_c\Big\}  \\
            &\bigcup \Big\{\{k, l\}: \{i, j\}\in E,  k\in N_{\{i,j\}}(i), l\in N_{\{i, j\}}(j)\Big\}\\
             &\bigcup \bigcup_{\{i, j\}\in E_a} E^o_a(\{i, j\}).
\end{align*}

{\bf Step 3. Constructing edge features.} Using the node features and edges from obtained from the previous steps, for each edge $\{k, l\}\in E^o$, we build the corresponding edge feature $\vu_{k, l} = \textrm{MLP-}u(\vy_k, \vy_l)$.

{\bf The overall probability for updating the training parameters:}
The final probability, $\textrm{P}$, is the product of all probabilities from step 1a and step 2a-2d. We obtain its logarithm, $\textrm{logP}$,  by combining  \eqref{eqn:logPR} - \eqref{eqn:logIEij} and \eqref{eqn:logPIE} as follows:
\begin{equation*}
    \textrm{logP} = \textrm{logP-R} + \textrm{logP-IA}  + \textrm{logP-C} + \textrm{logP-IE} + \textrm{logP-A}.
\end{equation*}
This probability is used to update the training parameters in the unpooling layer while using the REINFORCE algorithm.

\section{COMMENTS ON THE CODES} 
\label{sec:code}

We included a zipped folder of codes that implement the proposed method and some baseline methods. It also contains notebooks that implement numerical experiments for the Waxman random graph, protein, QM9, and ZINC datasets. All the implemented codes can be found in \url{https://github.com/guo00413/graph_unpooling}.

We implemented five baseline methods in the numerical experiments for the Waxman random graph and the protein dataset: GraphVAE \citep{simonovsky2018graphvae}, GraphRNN \citep{you2018graphrnn}, GCPN \citep{you2018gcpn}, GraphAF \citep{shi2020graphaf} and SGD-VAE \citep{guo2021deep}. We implement GraphVAE and GraphRNN based on codes from \url{https://github.com/JiaxuanYou/graph-generation} (licensed by MIT), we implement GCPN based on codes from \url{https://github.com/bowenliu16/rl_graph_generation} (licensed by BSD 3-Clause). We implement SGD-VAE and GraphAF from scratch, because we do not find codes with the appropriate license to use and modify.
We select model hyperparameters based on their original papers.

We remark that GCPN can not produce numeric node features due to its generation strategy. As a result, we do not report the metric related to node features for GCPN. 
To clarify, GCPN \citep{you2018gcpn} enlarges a single node at each time based on the existing graph as follows: it first adds 9 new nodes, corresponding to 9 types of heavy atoms in ZINC dataset; it constructs one edge connecting one of the new nodes to one of the nodes in the existing graph; it further sequentially determines to stop this step or to construct more edges. This approach can naturally handle graphs with categorical node features, but it fails to generate numeric node features. Therefore in the numerical experiments, we only generate a non-featured graph from GCPN and do not report metrics related to node features.

\section{IMPLEMENTATION DETAILS}\label{sec:details}

Section \ref{subsec:prelim_n_layer_other} introduces layers used in the experiments other than the unpooling layer; \S\ref{subsec:unpoolinglayers} provides details of the components in the unpooling layer; \S\ref{subsec:discriminators} describes the architectures of the discriminators in UL GAN and of the encoders in UL VAE; \S\ref{subsec:generators} details the architectures of the generative networks, including generators in UL GAN and decoders in UL VAE; and \S\ref{subsec:reconstruction_VAE} explains how we calculate the reconstruction error for UL VAE.

We remark that all numerical experiments are run in a machine with Intel(R) Xeon(R) Gold 6230 CPU @ 2.10GHz and NVIDIA TITAN RTX (576 tensor cores, and 24 GB GDDR6 memory).

\subsection{Details of Other Neural Layers}\label{subsec:prelim_n_layer_other}

{\bf Aggregation.} We introduce the following aggregation function $\operatorname{agg} (\vx_1, \vx_2) = \textrm{LeakyReLU}(\vx_1+\vx_2).$ It is used when the inputs are order invariant, for example, to produce edge feature based on features of two end nodes.

{\bf MLP.} We denote a multilayer perceptron by $\textrm{MLP}[k_0, k_1, k_2, ..k_m]$.
It contains $m-1$ hidden blocks, where the $i$-th block, $i=1,\dots,m-1$, cascades a linear layer with input dimension $k_{i-1}$ and output dimension $k_i$, a batch normalization and a LeakyReLU activation with negative slope $0.05$ (the same slope is used for all LeakyReLU activations in our experiments); and an output block which is a linear layer with output dimension $k_m$.

{\bf Initial layer}. An initial layer of a generator takes a latent vector $\vz$ and produces a 3-nodes graph. 
To define an initial layer, we need to specify the dimension of the input vector, $d_{in}$, the dimension of the output node feature, $d_x$, the dimension of the output edge feature, $d_w$.

The initial layer generates a 3-nodes graph as follows: First, a multilayer perceptron $\textrm{MLP-INI-V} = \textrm{MLP}[d_{in}, 6d_x, 3d_x]$ takes a $d_{in}$-dimensional input $\vz$  and generates the initial node features for the 3-nodes graph. It further reshapes it to a matrix in $\R^{3\times  d_x}$, where $d_x$ is the dimension of the output node features. Second, it calculates the probabilities from $(p_1, p_2, p_3, p_4) = \textrm{MLP-INI-E}(\vx_1, \vx_2, \vx_3) = \textrm{softmax}(\textrm{MLP}[3d, 16, 4])(\vx_1, \vx_2, \vx_3)$, draws a uniform random variable $U\sim U[0, 1]$, and determines the initial edge set to be
$$
E= \left\{ \begin{array}{rcl}
     \mleft\{\{1,2\}, \{1, 3\}\mright\},&&  \text{ if } U<p_1,\\
     \mleft\{\{1,2\}, \{2, 3\}\mright\},&&  \text{ if } p_1\leq U<p_1+p_2\\
     \mleft\{\{1,3\}, \{2, 3\}\mright\},  &&  \text{ if } p_1+p_2\leq U<p_1+p_2+p_3 \\
     \mleft\{\{1, 2\}, \{2, 3\}, \{1, 3\}\mright\}, &&\text{ otherwise } 
\end{array}\right.$$
The log probability from the initial layer is $\ln(p_s)$, where $s\in\{1, 2 ,3 ,4\}$ corresponding to the determined initial edge set.
Third, it constructs edge feature vectors using a multilayer perceptron as follows: $\textrm{MLP-INI-W}(\vx_i, \vx_j)=
\textrm{LeakyReLU}(\textrm{BN}(\textrm{MLP}[d, d_w, d_w](\textrm{agg}(\vx_i, \vx_j))))$, for each $\{i, j\} \in E$, where $\textrm{BN}$ is a batch normalization.

{\bf Skip connection}. The generator of UL GAN adopts a skip connection procedure \citep{Brock2019LargeGANSkipZ} 
to avoid mode collapse.
A skip connection can be specified by the input dimension, $d_z$, the node feature dimension, $d_y$,  the multiplier for hidden dimension, $N_z$, and the maximal number of nodes in the output graph, $N$. 
The skip connection generates additional node features by 
$$
\textrm{LeakyReLU}\mleft(\textrm{BN}( \textrm{MLP}[d_z, N_zd_y, Nd_y](\vz))\mright).
$$

{\bf MPNN}. Our implemented models use an edge-conditional MPNN \citep{gilmer2017edge_mpnn, simonovsky2017edge_mpnn}, which can be characterized by input node feature dimension $d_x$, input edge feature dimension $d_w$ and output node feature dimension $d_y$. For a featured graph $(V, E, \mX, \mW)$ with $d_x$-dimensional node features and $d_w$-dimensional edge features, the rows of the output features matrix $\mY$ are formed by
$$
\vy_j = \textrm{LeakyReLU}\mleft(\textrm{BN}\left(\vx_{j} \bm{\Theta}_s + 
\sum_{i\in N_j}\vx_{i} H_n( \vw_{i, j})\right)\mright),
$$
where $N_j$ is a set of neighbors of $j$, $H_n$ is a linear layer, which maps an edge feature vector to a matrix with the same dimensions as $\bm{\Theta}_s$.

{\bf Final layer for edge features.} The final layer outputs feature vectors for each edge in the output graph, while leveraging the node features of the connecting nodes and the edge features from the last intermediate graph. The parameters of this layer include the dimension of input edge features, $d_w$, the dimension of input node features, $d_x$, and  the dimension of the output edge features, $d_u$. It produces the final edge features by
$$
\textrm{MLP-}uf(\vw_{i,j}, \vx_i, \vx_j) = \textrm{MLP}[d_x + 2d_w, d_u]\mleft(\vw_{i,j}, \textrm{MLP}[d_x, d_w, d_w](\textrm{agg}(\vx_i, \vx_j)), \textrm{agg}(\vx_i, \vx_j)\mright).
$$

\subsection{Details of the Unpooling Layer}\label{subsec:unpoolinglayers}
{\bf Hyperparameters}. Recall from \S\ref{sec:UL_method} that an unpooling layer takes a featured input graph $\gG=(V, E, \mX, \mW)$, with $d_x$-dimensional node features and $d_w$-dimensional edge features. This unpooling layer is thus specified using the following hyperparameters:
\begin{itemize}
    \item $I_s'$: a set of nodes fixed as static (i.e., they will not be unpooled).
    \item $I_r$: a set of nodes that are determined to be static based on the probabilities $p_j^s=\textrm{MLP-S}(\vx_j)$ for $j\in I_r$, where  $\textrm{MLP-S}=\textrm{sigmoid}(\textrm{MLP}[d_x, \lfloor d_x/2\rfloor, 1])$. The overall static nodes for the unpooling layer are 
    $$I_s = I_s' \cup \{j: j \in I_r, \, U_j < \textrm{MLP-S}(\vx_j)\},$$ 
    where $U_j$ are i.i.d.~random variables drawn from a uniform distribution on $U[0, 1]$.
    \item $k_{v}, d_y$: hidden dimension and output dimension of node features. We let $d'_x = \lfloor d_x/2\rfloor + \lfloor d_x/4\rfloor$ and $$\textrm{MLP-}y = \textrm{MLP}[d_x', k_v, d_y],$$ so that the child node feature vector is given by $\vy_{f_j(i)} = \textrm{MLP-}y ( P_{S_j}(\vx_i))$.
    The latter projection, $P_{S_j}$, is defined for $j=1$, 2, and each feature vector $\vx=(x_1, ...x_{d})$ of 
    a node in $I_u$ as
    \begin{align*}
        P_{S_1} (\vx) &= (x_1, ..., x_{d_s}, x_{d_{s} + 1}, ... x_{d_{s} + D})^T,\\
        P_{S_2} (\vx) &= (x_1, ..., x_{d_{s}}, x_{d_{s} + D + 1}, ... x_{d_{s} + 2D})^T,
    \end{align*}
    where  $d_s =  \lfloor d_x/2\rfloor$ and $D = \lfloor\frac{d_x}{4}\rfloor$.
    
    \item $k_{ia}$ and $k_{ie}$: hidden dimensions in $\textrm{MLP-IA}$ and $\textrm{MLP-IE}$, respectively. The probabilities used for generating intra- and inter-links are given by
    
    \begin{align}
    \textrm{MLP-IA}(\vx) =& \textrm{sigmoid}\mleft(\textrm{MLP}[d_y, k_{ia}, 1](\vx)\mright), \label{eqn:intra_prob}\\
        \textrm{MLP-IE}\mleft(\vy_1, \vy_2, \vw, \vx\mright) =& \textrm{softmax} \Big(\textrm{MLP-IE-1}(\vy_1, \vw, \vx),  \textrm{MLP-IE-1}(\vy_2, \vw, \vx), \notag\\
        &\textrm{MLP-IE-2}(\vy_1, \vy_2, \vw, \vx)\Big),\label{eqn:inter_prob}
    \end{align}
    where the first two $\textrm{MLP}$s used in $\textrm{MLP-IE}$ are the same networks, defined as $$\textrm{MLP-IE-1}:=\textrm{MLP}[d_y+d_w+d_x, k_{ie}, 1],$$ and the third term is defined as $$\textrm{MLP-IE-2}(\vy_1, \vy_2, \vw, \vx):=\textrm{MLP}[d_y+d_w+d_x, k_{ie}, 1](\textrm{agg}(\vy_1, \vy_2), \vw, \vx).$$ In practice $\textrm{MLP-C}$ in Step 3b in \S\ref{sec:UL_method} is based on $\textrm{MLP-IE}$ as follows. It uses $\textrm{MLP-IE-2}$ and calculates $$h_C(\vy_1, \vy_2, \vw, \vx):=\textrm{MLP-IE-2}(\vy_1, \vy_2, \vw, \vx).$$ The probabilities used in Step 3b in \S\ref{sec:UL_method} are then calculated as 
    
\begin{align*}
    &\mleft(p_b(j, i_1), p_b(j, i_2), ... p_b(j, i_{m_j})\mright) = \textrm{softmax}\big(h_C(\vy_{f_1(j)}, \vy_{f_2(j)}, \vw_{\{j, i_1\}}, \vx_{i_1}), \\
    &\quad \quad h_C(\vy_{f_1(j)}, \vy_{f_2(j)}, \vw_{\{j, i_2\}}, \vx_{i_2}),  ..., ~h_C(\vy_{f_1(j)}, \vy_{f_2(j)}, \vw_{\{j, i_{m_j}\}}, \vx_{i_{m_j}})\big).
\end{align*}
    
    \item $k_w, d_u$: hidden dimension and output dimension of the edge features. The output edge feature vectors  are built as follows: $$\vu_{i, j} = \textrm{MLP-}u(\vy_i, \vy_j) = \textrm{LeakyReLU}(\textrm{BN}(\textrm{MLP}[d_y, k_w, d_u](\textrm{agg}(\vy_i, \vy_j)))).$$ 
\end{itemize}

{\bf Preference score}. We further introduce preference score and modify the direct way of generating the probabilities obtained in \eqref{eqn:inter_prob}, in order to avoid all the edges being inherited by one single node in a pair of children nodes and thus to ensure that the unpooling layer can produce all possible output graphs with probabilities that are not too small. Denote $$h_s(\vy, \vw, \vx):= \textrm{MLP-IE-1}(\vy, \vw, \vx) \ \text{ and } \ h_b(\vy_1, \vy_2, \vw, \vx)):=\textrm{MLP-IE-2}(\vy, \vw, \vx),$$ where the forms of $\textrm{MLP-IE-1}$ and $\textrm{MLP-IE-2}$ are defined below \eqref{eqn:inter_prob}. For each child node $f_1(j)$ with feature $\vy$, which is generated from an input node $j$, we rescale the $h_s$'s to obtain the preference score $h^{(p)}_s$ when $N_{\{k, j\}}(j) = \{f_1(j)\}$ as follows:
\begin{align*}
    &\mleft(\{h_s^{(p)}(\vy, k): k\in N_j\}, h^{(p)}_{s,n}(\vy)\mright):=\textrm{softmax}\mleft(\{h_s(\vy, \vw_{j,k}, \vx_k): k\in N_j\}, h^{(0)}_s(\vy) \mright),
\end{align*}
where $N_j$ is set of neighbors of node $j$, and $h^{(0)}_s(\vy)$ is capturing zero-preference (i.e.,  preference of not connecting any inter-link) and is calculated by a multilayer perceptron $\textrm{MLP}[d_y, 2d_y, 1]$. 

Similarly, we also rescale the $h_b$'s to obtain the preference score when $N_{\{k, j\}}(j) = \{f_1(j), f_2(j)\}$. Let $\vy_1$ and $\vy_2$ be feature vectors of children nodes generated from $j$. Let $\vx$ be the feature vector of node $j$ in the input graph. With an added zero preference $h^{(0)}_b(\vx)=\textrm{MLP}[d_x, 2d_x, 1](\vx)$,  the preference score is calculated as
\begin{align*}
    &\left(\{h^{(p)}_b(\vy_1, \vy_2, k): k\in N_j\}, h^{(p)}_{b, n}(\vy_1, \vy_2)\right) := \textrm{softmax}\left(\{h_b(\vy_1, \vy_2, \vw_{j,k}, \vx_k): k\in N_j\}, h^{(0)}_b(\vx) \right).  
\end{align*}

We use the preference scores $h^{(p)}_s$ and $h^{(p)}_b$ instead of $\textrm{MLP-IE}$ to obtain probabilities for building $N_{\{k, j\}}(j)$ for the inter-link in Step 2c in \S\ref{sec:UL_method} as
$$
(p_1, p_2)=\mleft(\frac{(h^{(p)}_s(\vy_1, k)}{Z}, \frac{h^{(p)}_s(\vy_1, k)}{Z}\mright), \ \text{ where } \  Z = \left(h^{(p)}_s(\vy_1, k)+h^{(p)}_s(\vy_2, k)+h^{(p)}_b (\vy_1, \vy_2, k)\right).
$$

\subsection{Architectures of Discriminators and Encoders}\label{subsec:discriminators}

{\bf Discriminators of UL GAN.} The details of the discriminator in UL GAN and the encoder in UL VAE for QM9 have been described in \S4.3. In this part, we describe the details for other datasets. 

The discriminator in the Waxman random graph data
takes an input graph and uses two MPNN layers with $64$ and $128$ units to generate a graph $\gG=(V, E, \mX, \mW)$. It then aggregates the node features (the rows $\{\vx_j\}_{j \in V}$ of $\mX$) to produce the following single feature vector for the graph:
$$
\vh(\gG) = \sum_{j\in V}\sigma (\operatorname{lin}_1(\vx_j))  \odot
\tanh (\operatorname{lin}_2 (\vx_j));
$$
where $\sigma$ is logistic sigmoid, $\odot$ is element-wise multiplication and $\operatorname{lin}_1$ and $\operatorname{lin}_2$  are two layers with $128$ units.
Then it uses a global sum pooling to extract a signal for each graph. It applies two linear layers with $128$ and $256$ units. A final layer with a single unit then produces the output, where its activation function is $\textrm{sigmoid}$.
The LeakyReLU activation function (with 0.05 negative slope) is used after the two MPNNs and all linear layers. 

The discriminator in the protein dataset is similar to the discriminator in the random graph dataset. Some parameters were changed to simplify the network that fit the protein dataset; we used one MPNN layer with 128 units and one linear layer with 256 units after the global pooling.

The discriminator in ZINC is the same as the discriminator used in QM9 as described in \S4.3.

{\bf Encoder in UL VAE.} It has the same architecture as the corresponding discriminator described above, except that the final layer maps into a 256-dimensional vector with a linear activation function. This vector further splits to two 128-dimensional vectors: $\vz_\mu$ and $\vz_\sigma$. It then  generates the following latent vector: $\vz = \vz_\mu +  \exp(\frac{1}{2}\vz_\sigma)\odot \vx$, where $\vx\sim N(0, 1)$.

{\bf Additional adjustment when the number of nodes of the output graph is not fixed.} For the ZINC dataset, since the number of nodes of the output graph is not fixed by the generator, we added one trivial predictive network to the discriminator. This additional network contains one hidden layer with 256 units and only takes the global sum of node feature vectors and edge feature vectors as input, to encourage a proper distribution of the numbers of atoms and edges. The final output of the discriminator is the sum of the outputs of this network and the original GNN discriminator.

\subsection{Architecture of Generators}
\label{subsec:generators}

We describe details of the architecture of the generator in UL GAN and decoder in UL VAE for the different datasets.

{\bf Architecture of the generator in UL GAN}. 
We show the generators in UL GAN (same as the decoders in UL VAE, if applicable) 
for the Waxman random graph, protein, QM9 and ZINC datasets in Figure~\ref{fig:plot_generator_wax}, Figure~\ref{fig:plot_generator_protein},
Figure~\ref{fig:plot_generator_qm9}, and
Figure \ref{fig:plot_generator_zinc}, respectively. 
For simplicity, we neglect batch sizes. For example, the input should be $\vz\in \sR^{B\times 128}$, where $B$ is the batch size.

{\bf Loss function}. In practice, we alternatively train the generator with a GAN's loss function and with REINFORCE. In each training step, we first update the generator using the loss function $-D(G(z)).$ 
We then update the parameter of the generator, $\theta_G$,   using
$$
    \vtheta^{(t+1)}_G := \vtheta^{(t)}_G + \alpha \mleft(\nabla_\vtheta \textrm{logP}|_{\vtheta_G^{(t)}}\mright) \mleft(D(G(z)) - \mathbb{E}(D(G(z)))\mright),
$$
where the learning rate $\alpha = 5\times 10^{-2}$.

\subsection{Reconstruction Loss for UL VAE}\label{subsec:reconstruction_VAE}
We calculate the reconstruction loss for UL VAE following \citet{simonovsky2018graphvae}. 

Given two featured graphs with the same number of nodes, we calculate their distance by summing the distances between node features, the distances between adjacency matrices and the distances between edge features of these two graphs. We used the same formula described in Section 3.3 in \citet{simonovsky2018graphvae}.

When the number of nodes may vary (e.g., in the QM9 dataset, the corresponding graphs may contain 6--9 nodes) and the input/output graph has less nodes than the maximal number of nodes (e.g., less than 9 nodes in the QM9 experiment), we add  artificial nodes to the graph so that it contains the maximal number of nodes (e.g., 9 nodes for QM9). We add an additional binary feature to the node features: it assigns $0$ to the real nodes and $1$ to the artificial nodes.

For an input graph and an output graph with the same numbers of nodes (if they have different node numbers, we perform the above procedure to make them the same), we permute the node indices of the output graph in order to achieve a minimal distance to the input graph. This minimal distance is regarded as the reconstruction loss between the input and output graph. More details of how to permute the indices appear in Section 3.4 of \citet{simonovsky2018graphvae}.

\begin{figure}
\begin{tikzpicture}[auto]
\begin{scope}
  \coordinate (gap) at (0,-1);
  \coordinate (gap2) at (0,-1.3);
  \coordinate (z1_xsh) at (7, 0);
  \coordinate (z2_xsh) at (7, 0);
    \node (a1) [draw, rounded rectangle]  at (0, 0) {Input $\vz\in \sR^{128}$};
    \node (a2) [draw, rectangle]  at ($(a1) + (gap)$) {\textbf{Initial layer}: $d_{in} = 128, d_x= 128, d_w=8$};
    \node (a3) [draw, rounded rectangle]  at ($(a2) + (gap)$) {Graph $(V, E, \mX, \mW)$, \\$|V| = 3, |E|\in [2, 3], \mX\in \sR^{3\times 128}, \sW\in \sR^{M\times 8}$};
    \node (a4) [draw, rectangle]  at ($(a3) + (gap)$) {\textbf{MPNN layer}: $d_x=128, d_w=8, d_y=64$};
    \node (a5) [draw, rectangle]  at ($(a4) + (gap2)$) {\makecell[l]{\textbf{Unpooling layer}: $I'_s = \emptyset, I_r = \{1, 2\}, d_x=64$, \\$k_v = k_{ia} = k_{ie} = 64, d_w = d_u = 8, d_y = 48$}};
    \node (b5) [draw, rectangle]  at ($(a5) + (z1_xsh)$) {\makecell[l]{\textbf{Skip connection}: \\$d_z=128, N_z=10, d_y=16$}};

    \node (a6) [draw, rounded rectangle]  at ($(a5) + (gap2)$) {Graph $(V, E, \mX, \mW)$, $|V| \in [4, 6], \mX\in \sR^{N\times 48}, \sW\in \sR^{M\times 8}$};
    \node (b6) [draw, rounded rectangle]  at ($(a6) + (z1_xsh)$) {$\mX \in \sR^{6\times 16}$};
    \node (a7) [draw, rounded rectangle]  at ($(a6) + (gap)$) {Graph $(V, E, \mX, \mW)$, $|V| \in [4, 6], \mX\in \sR^{N\times 64}, \sW\in \sR^{M\times 8}$};
    \node (a8) [draw, rectangle]  at ($(a7) + (gap)$) {\textbf{MPNN layer}: $d_x=64, d_w=8, d_y=128$};
    \node (a9) [draw, rectangle]  at ($(a8) + (gap2)$) {\makecell[l]{\textbf{Unpooling layer}: $I'_s = \emptyset, I_r = \{1, 2, 3, 4, 5\}, d_x=128$, \\$k_v = k_{ia} = k_{ie} = 128, d_w = d_u = 8, d_y = 384$}};
    \node (b9) [draw, rectangle]  at ($(a9) + (z2_xsh)$) {\makecell[l]{\textbf{Skip connection}: \\$d_z=128, N_z=15, d_y=128$}};
    \node (a10) [draw, rounded rectangle]  at ($(a9) + (gap2)$) {Graph $(V, E, \mX, \mW)$, $|V| \in [5, 12], \mX\in \sR^{N\times 384}, \sW\in \sR^{M\times 8}$};
    \node (b10) [draw, rounded rectangle]  at ($(a10) + (z2_xsh)$) {$\mX \in \sR^{12\times 128}$};
    \node (a11) [draw, rounded rectangle]  at ($(a10) + (gap)$) {Graph $(V, E, \mX, \mW)$, $|V| \in [5, 12], \mX\in \sR^{N\times 512}, \sW\in \sR^{M\times 8}$};
    \node (a12) [draw, rectangle]  at ($(a11) + (gap)$) {\textbf{MPNN layer}: $d_x=512, d_w=8, d_y=64$};
    \node (a13) [draw, rectangle]  at ($(a12) + (gap)$) {\textbf{MLP}[64, 2]};
    \node (a16) [draw, rounded rectangle]  at ($(a13) + (gap)$) {Output graph $(V, E, \mX)$: $|V|\in [5, 12], \mX\in\sR^{N\times 2}$};
      \draw [->](a1) -- (a2);
      \draw [->](a2) -- (a3);
      \draw [->](a3) -- (a4);
      \draw [->](a4) -- (a5);
      \draw [->](a1.east) to [out=0, in=90] (b5.north);
      \draw [->](a5) -- (a6);
      \draw [->](a6) -- (a7);
      \draw [->](a7) -- (a8);
      \draw [->](a8) -- (a9);
      \draw [->](a9) -- (a10);
      \draw [->](a10) -- (a11);
      \draw [->](a11) -- (a12);
      \draw [->](a12) -- (a13);
      \draw [->](a13) -- (a16);
      \draw [->](b5) -- (b6);
      \draw [->](b6.south) to [out=270, in=0]node{Concatenate} (a7.east);
      \draw [->](b9) -- (b10);
      \draw [->](b10.south) to [out=270, in=0]node{Concatenate} (a11.east);
      \draw [->](a1.east) -- ($(a1) + (5, 0)$) to [out=0, in=0] (b9.east);    
      \end{scope}
\end{tikzpicture}
\caption{Demonstration of the architecture of the generator in UL GAN and the decoder in UL VAE for Waxman random graph dataset. The rectangles present neural network layers and the rounded rectangles clarify the shape of the hidden data. } \label{fig:plot_generator_wax}
\end{figure}
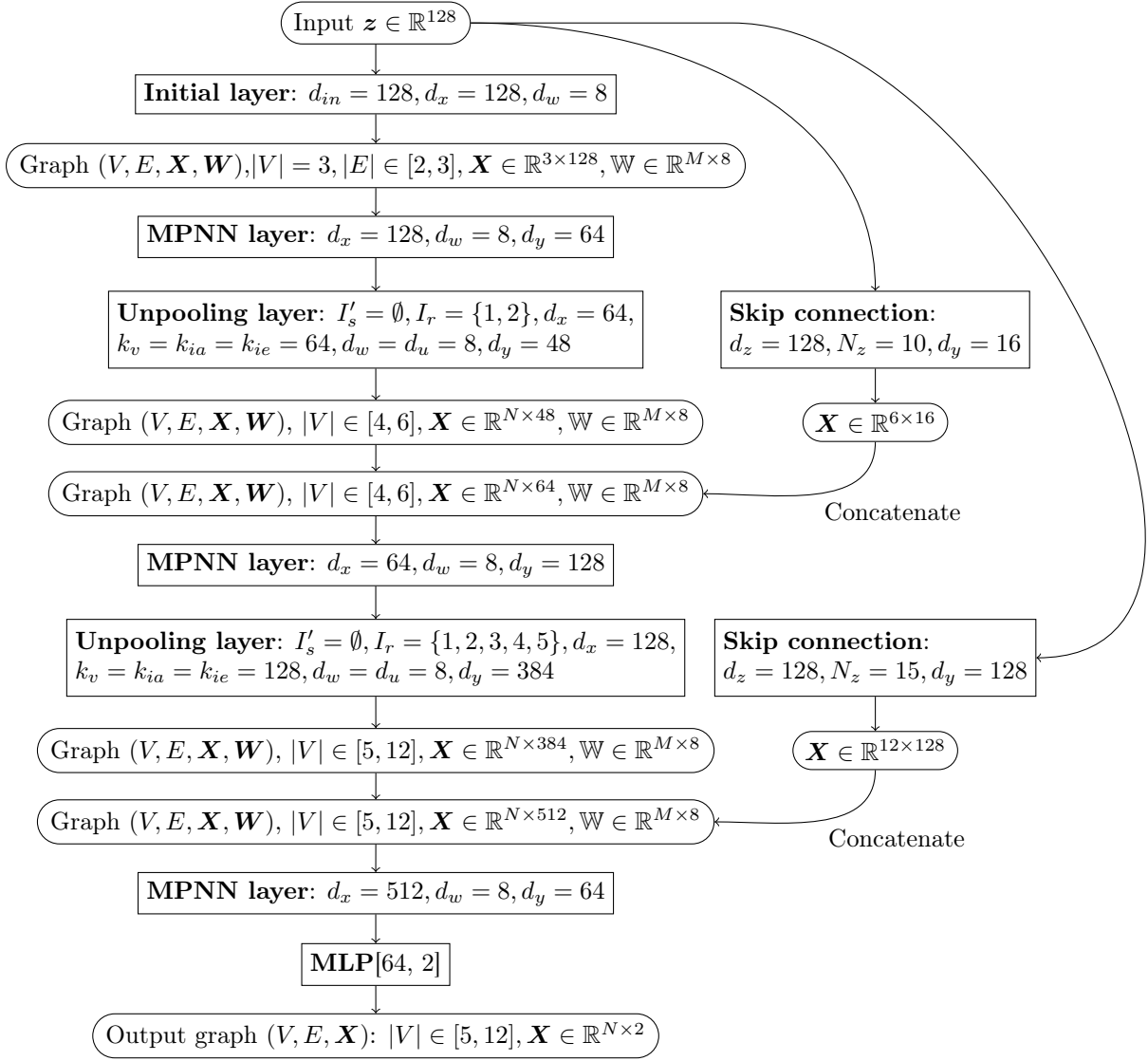

\begin{figure}
\begin{tikzpicture}[auto]
\begin{scope}
  \coordinate (gap) at (0,-1);
  \coordinate (gap2) at (0,-1.3);
  \coordinate (z1_xsh) at (7, 0);
  \coordinate (z2_xsh) at (7, 0);
    \node (a1) [draw, rounded rectangle]  at (0, 0) {Input $\vz\in \sR^{128}$};
    \node (a2) [draw, rectangle]  at ($(a1) + (gap)$) {\textbf{Initial layer}: $d_{in} = 128, d_x= 128, d_w=8$};
    \node (a3) [draw, rounded rectangle]  at ($(a2) + (gap)$) {Graph $(V, E, \mX, \mW)$, \\$|V| = 3, |E|\in [2, 3], \mX\in \sR^{3\times 128}, \sW\in \sR^{M\times 8}$};
    \node (a4) [draw, rectangle]  at ($(a3) + (gap)$) {\textbf{MPNN layer}: $d_x=128, d_w=8, d_y=128$};
    \node (a5) [draw, rectangle]  at ($(a4) + (gap2)$) {\makecell[l]{\textbf{Unpooling layer}: $I'_s = \{1, 2\}, I_r = \emptyset, d_x=128$, \\$k_v = k_{ia} = k_{ie} = 128, d_w = d_u = 8, d_y = 96$}};
    \node (b5) [draw, rectangle]  at ($(a5) + (z1_xsh)$) {\makecell[l]{\textbf{Skip connection}: \\$d_z=128, N_z=10, d_y=32$}};
    \node (a6) [draw, rounded rectangle]  at ($(a5) + (gap2)$) {Graph $(V, E, \mX, \mW)$, $|V| = 4, \mX\in \sR^{4\times 96}, \sW\in \sR^{M\times 8}$};
    \node (b6) [draw, rounded rectangle]  at ($(a6) + (z1_xsh)$) {$\mX \in \sR^{4\times 32}$};
    \node (a7) [draw, rounded rectangle]  at ($(a6) + (gap)$) {Graph $(V, E, \mX, \mW)$, $|V| = 4, \mX\in \sR^{N\times 128}, \sW\in \sR^{M\times 8}$};
    \node (a8) [draw, rectangle]  at ($(a7) + (gap)$) {\textbf{MPNN layer}: $d_x=128, d_w=8, d_y=128$};
    \node (a9) [draw, rectangle]  at ($(a8) + (gap2)$) {\makecell[l]{\textbf{Unpooling layer}: $I'_s = \emptyset, I_r = \emptyset, d_x=128$, \\$k_v = k_{ia} = k_{ie} = 128, d_w = d_u = 8, d_y = 384$}};
    \node (b9) [draw, rectangle]  at ($(a9) + (z2_xsh)$) {\makecell[l]{\textbf{Skip connection}: \\$d_z=128, N_z=15, d_y=128$}};
    \node (a10) [draw, rounded rectangle]  at ($(a9) + (gap2)$) {Graph $(V, E, \mX, \mW)$, $|V| = 8, \mX\in \sR^{8\times 384}, \sW\in \sR^{M\times 8}$};
    \node (b10) [draw, rounded rectangle]  at ($(a10) + (z2_xsh)$) {$\mX \in \sR^{8\times 128}$};
    \node (a11) [draw, rounded rectangle]  at ($(a10) + (gap)$) {Graph $(V, E, \mX, \mW)$, $|V| = 8, \mX\in \sR^{8\times 512}, \sW\in \sR^{M\times 8}$};
    \node (a12) [draw, rectangle]  at ($(a11) + (gap)$) {\textbf{MPNN layer}: $d_x=512, d_w=8, d_y=256$};
    \node (a13) [draw, rectangle]  at ($(a12) + (gap)$) {\textbf{MLP}[256, 3]};
    \node (a16) [draw, rounded rectangle]  at ($(a13) + (gap)$) {Output graph $(V, E, \mX)$: $|V| = 8, \mX\in\sR^{8\times 3}$};
      \draw [->](a1) -- (a2);
      \draw [->](a2) -- (a3);
      \draw [->](a3) -- (a4);
      \draw [->](a4) -- (a5);
      \draw [->](a1.east) to [out=0, in=90] (b5.north);
      \draw [->](a5) -- (a6);
      \draw [->](a6) -- (a7);
      \draw [->](a7) -- (a8);
      \draw [->](a8) -- (a9);
      \draw [->](a9) -- (a10);
      \draw [->](a10) -- (a11);
      \draw [->](a11) -- (a12);
      \draw [->](a12) -- (a13);
      \draw [->](a13) -- (a16);
      \draw [->](b5) -- (b6);
      \draw [->](b6.south) to [out=270, in=0]node{Concatenate} (a7.east);
      \draw [->](b9) -- (b10);
      \draw [->](b10.south) to [out=270, in=0]node{Concatenate} (a11.east);
      \draw [->](a1.east) -- ($(a1) + (5, 0)$) to [out=0, in=0] (b9.east);    
      \end{scope}
\end{tikzpicture}
\caption{Demonstration of the architecture of the generator in UL GAN and the decoder in UL VAE for protein dataset. The rectangles present neural network layers and the rounded rectangles clarify the shape of the hidden data. } \label{fig:plot_generator_protein}
\end{figure}

\begin{figure}
\begin{tikzpicture}[auto]
\begin{scope}
  \coordinate (gap) at (0,-1);
  \coordinate (gap2) at (0,-1.3);
  \coordinate (z1_xsh) at (7, 0);
  \coordinate (z2_xsh) at (7, 0);
    \node (a1) [draw, rounded rectangle]  at (0, 0) {Input $\vz\in \sR^{128}$};
    \node (a2) [draw, rectangle]  at ($(a1) + (gap)$) {\textbf{Initial layer}: $d_{in} = 128, d_x= 256, d_w=32$};
    \node (a3) [draw, rounded rectangle]  at ($(a2) + (gap)$) {Graph $(V, E, \mX, \mW)$, \\$|V| = 3, |E|\in [2, 3], \mX\in \sR^{3\times 256}, \sW\in \sR^{M\times 32}$};
    \node (a4) [draw, rectangle]  at ($(a3) + (gap)$) {\textbf{MPNN layer}: $d_x=256, d_w=32, d_y=128$};
    \node (a5) [draw, rectangle]  at ($(a4) + (gap2)$) {\makecell[l]{\textbf{Unpooling layer}: $I'_s = \{1\}, I_r = \emptyset, d_x=128$, \\$k_v = k_{ia} = k_{ie} = 128, d_w = d_u = 32, d_y = 96$}};
    \node (b5) [draw, rectangle]  at ($(a5) + (z1_xsh)$) {\makecell[l]{\textbf{Skip connection}: \\$d_z=128, N_z=10, d_y=32$}};

    \node (a6) [draw, rounded rectangle]  at ($(a5) + (gap2)$) {Graph $(V, E, \mX, \mW)$, $|V| = 5, \mX\in \sR^{5\times 96}, \sW\in \sR^{M\times 32}$};
    \node (b6) [draw, rounded rectangle]  at ($(a6) + (z1_xsh)$) {$\mX \in \sR^{5\times 32}$};
    \node (a7) [draw, rounded rectangle]  at ($(a6) + (gap)$) {Graph $(V, E, \mX, \mW)$, $|V| = 5, \mX\in \sR^{5\times 128}, \sW\in \sR^{M\times 32}$};
    \node (a8) [draw, rectangle]  at ($(a7) + (gap)$) {\textbf{MPNN layer}: $d_x=128, d_w=32, d_y=128$};
    \node (a9) [draw, rectangle]  at ($(a8) + (gap2)$) {\makecell[l]{\textbf{Unpooling layer}: $I'_s = \{1\}, I_r = \{2, 3, 4\}, d_x=128$, \\$k_v = k_{ia} = k_{ie} = 128, d_w = d_u = 32, d_y = 96$}};
    \node (b9) [draw, rectangle]  at ($(a9) + (z2_xsh)$) {\makecell[l]{\textbf{Skip connection}: \\$d_z=128, N_z=15, d_y=32$}};
    \node (a10) [draw, rounded rectangle]  at ($(a9) + (gap2)$) {Graph $(V, E, \mX, \mW)$, $|V| \in [6, 9], \mX\in \sR^{N\times 96}, \sW\in \sR^{M\times 32}$};
    \node (b10) [draw, rounded rectangle]  at ($(a10) + (z2_xsh)$) {$\mX \in \sR^{9\times 32}$};
    \node (a11) [draw, rounded rectangle]  at ($(a10) + (gap)$) {Graph $(V, E, \mX, \mW)$, $|V| \in [6, 9], \mX\in \sR^{N\times 128}, \sW\in \sR^{M\times 32}$};
    \node (a12) [draw, rectangle]  at ($(a11) + (gap)$) {\textbf{MPNN layer}: $d_x=128, d_w=32, d_y=64$};
    \node (a13) [draw, rectangle]  at ($(a12) + (gap) - (3, 0)$) {\textbf{MLP}[64, 64, 10]};
    \node (a14) [draw, rectangle]  at ($(a13) + (z1_xsh)$) {\textbf{Final edge layer}: $d_x=64, d_w=32, d_u=3$};
    \node (a15) [draw, rectangle]  at ($(a12) + 2*(gap)$) {\textbf{Gumbel softmax layers}};
    \node (a16) [draw, rounded rectangle]  at ($(a15) + (gap)$) {Output graph $(V, E, \mX, \mW)$: $|V|\in [6, 9], \mX\in\sR^{N\times 10}, \mW\in\sR^{M\times 3}$};
    
      \draw [->](a1) -- (a2);
      \draw [->](a2) -- (a3);
      \draw [->](a3) -- (a4);
      \draw [->](a4) -- (a5);
      \draw [->](a1.east) to [out=0, in=90] (b5.north);
      \draw [->](a5) -- (a6);
      \draw [->](a6) -- (a7);
      \draw [->](a7) -- (a8);
      \draw [->](a8) -- (a9);
      \draw [->](a9) -- (a10);
      \draw [->](a10) -- (a11);
      \draw [->](a11) -- (a12);
      \draw [->](a12) -- (a14);
      \draw [->](a12) -- (a13);
      \draw [->](a13) -- (a15);
      \draw [->](a14) -- (a15);
      \draw [->](a15) -- (a16);
      \draw [->](b5) -- (b6);
      \draw [->](b6.south) to [out=270, in=0]node{Concatenate} (a7.east);
      \draw [->](b9) -- (b10);
      \draw [->](b10.south) to [out=270, in=0]node{Concatenate} (a11.east);
      \draw [->](a1.east) -- ($(a1) + (5, 0)$) to [out=0, in=0] (b9.east);    
      \end{scope}
\end{tikzpicture}
\caption{Demonstration of the architecture of the generator in UL GAN and the decoder in UL VAE for QM9. The rectangles present neural network layers and the rounded rectangles clarify the shape of the hidden data. } \label{fig:plot_generator_qm9}
\end{figure}

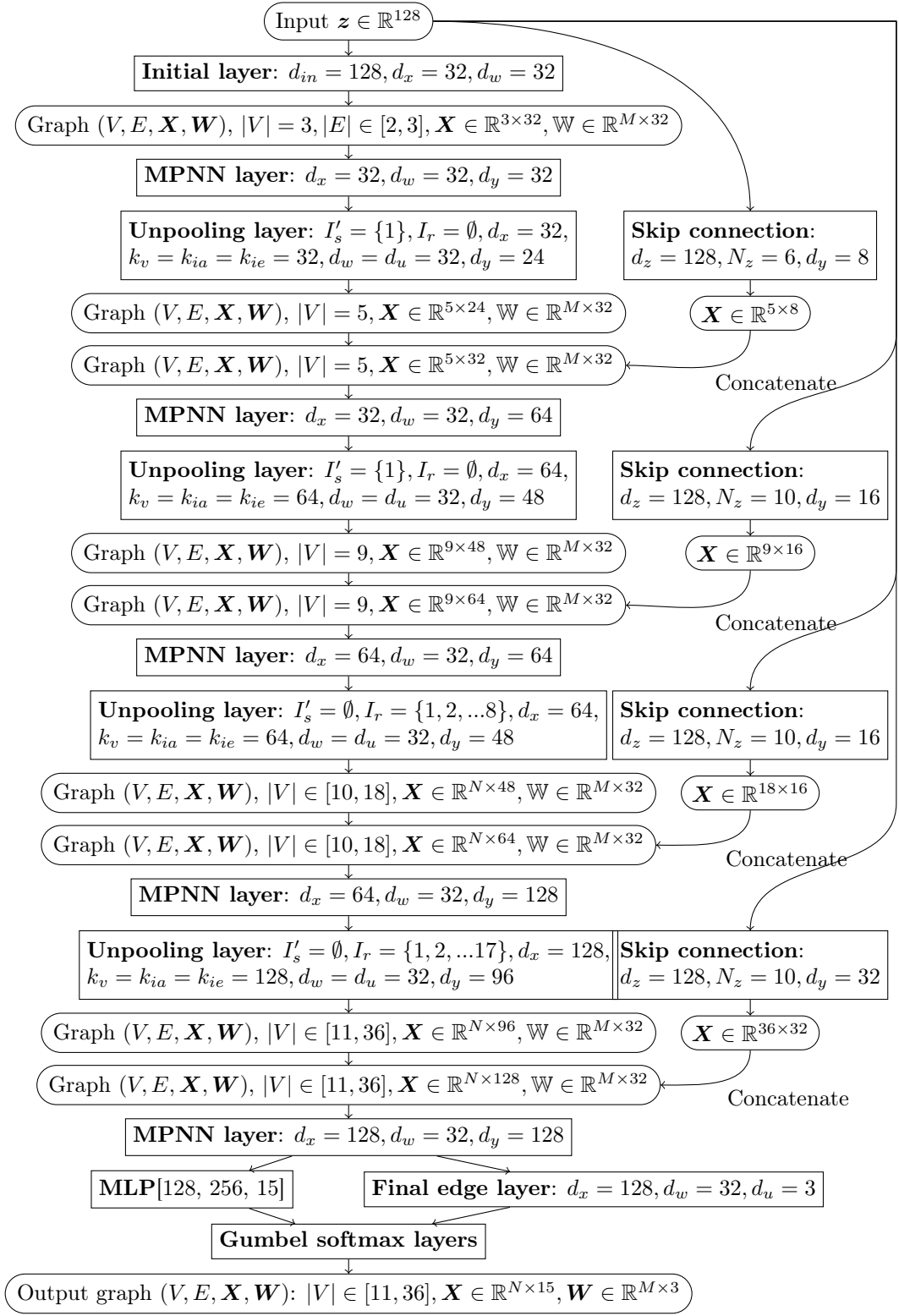
\begin{figure}
\begin{center}
    \begin{tikzpicture}[auto]
\begin{scope}[scale=0.82]
  \coordinate (gap) at (0,-1);
  \coordinate (gap2) at (0,-1.3);
  \coordinate (z1_xsh) at (7.7, 0);
  \coordinate (z2_xsh) at (7.7, 0);
    \node (a1) [draw, rounded rectangle]  at (0, 0) {Input $\vz\in \sR^{128}$};
    \node (a2) [draw, rectangle]  at ($(a1) + (gap)$) {\textbf{Initial layer}: $d_{in} = 128, d_x= 32, d_w=32$};
    \node (a3) [draw, rounded rectangle]  at ($(a2) + (gap)$) {Graph $(V, E, \mX, \mW)$,  $|V| = 3, |E|\in [2, 3], \mX\in \sR^{3\times 32}, \sW\in \sR^{M\times 32}$};
    \node (a4) [draw, rectangle]  at ($(a3) + (gap)$) {\textbf{MPNN layer}: $d_x=32, d_w=32, d_y=32$};
    \node (a5) [draw, rectangle]  at ($(a4) + (gap2)$) {\makecell[l]{\textbf{Unpooling layer}: $I'_s = \{1\}, I_r = \emptyset, d_x=32$, \\$k_v = k_{ia} = k_{ie} = 32, d_w = d_u = 32, d_y = 24$}};
    \node (b5) [draw, rectangle]  at ($(a5) + (z1_xsh)$) {\makecell[l]{\textbf{Skip connection}: \\$d_z=128, N_z=6, d_y=8$}};

    \node (a6) [draw, rounded rectangle]  at ($(a5) + (gap2)$) {Graph $(V, E, \mX, \mW)$, $|V| = 5, \mX\in \sR^{5\times 24}, \sW\in \sR^{M\times 32}$};
    \node (b6) [draw, rounded rectangle]  at ($(a6) + (z1_xsh)$) {$\mX \in \sR^{5\times 8}$};
    \node (a7) [draw, rounded rectangle]  at ($(a6) + (gap)$) {Graph $(V, E, \mX, \mW)$, $|V| = 5, \mX\in \sR^{5\times 32}, \sW\in \sR^{M\times 32}$};
    \node (a8) [draw, rectangle]  at ($(a7) + (gap)$) {\textbf{MPNN layer}: $d_x=32, d_w=32, d_y=64$};
    \node (a9) [draw, rectangle]  at ($(a8) + (gap2)$) {\makecell[l]{\textbf{Unpooling layer}: $I'_s = \{1\}, I_r = \emptyset, d_x=64$, \\$k_v = k_{ia} = k_{ie} = 64, d_w = d_u = 32, d_y = 48$}};
    \node (b9) [draw, rectangle]  at ($(a9) + (z2_xsh)$) {\makecell[l]{\textbf{Skip connection}: \\$d_z=128, N_z=10, d_y=16$}};
    \node (a10) [draw, rounded rectangle]  at ($(a9) + (gap2)$) {Graph $(V, E, \mX, \mW)$, $|V| = 9, \mX\in \sR^{9\times 48}, \sW\in \sR^{M\times 32}$};
    \node (b10) [draw, rounded rectangle]  at ($(a10) + (z2_xsh)$) {$\mX \in \sR^{9\times 16}$};
    \node (a11) [draw, rounded rectangle]  at ($(a10) + (gap)$) {Graph $(V, E, \mX, \mW)$, $|V| = 9, \mX\in \sR^{9\times 64}, \sW\in \sR^{M\times 32}$};
    \node (a12) [draw, rectangle]  at ($(a11) + (gap)$) {\textbf{MPNN layer}: $d_x=64, d_w=32, d_y=64$};
    \node (a13) [draw, rectangle]  at ($(a12) + (gap2)$) {\makecell[l]{\textbf{Unpooling layer}: $I'_s = \emptyset, I_r = \{1, 2, ... 8\}, d_x=64$, \\$k_v = k_{ia} = k_{ie} = 64, d_w = d_u = 32, d_y = 48$}};
    \node (b13) [draw, rectangle]  at ($(a13) + (z2_xsh)$) {\makecell[l]{\textbf{Skip connection}: \\$d_z=128, N_z=10, d_y=16$}};
    \node (a14) [draw, rounded rectangle]  at ($(a13) + (gap2)$) {Graph $(V, E, \mX, \mW)$, $|V| \in [10, 18], \mX\in \sR^{N\times 48}, \sW\in \sR^{M\times 32}$};
    \node (b14) [draw, rounded rectangle]  at ($(a14) + (z2_xsh)$) {$\mX \in \sR^{18\times 16}$};
    \node (a15) [draw, rounded rectangle]  at ($(a14) + (gap)$) {Graph $(V, E, \mX, \mW)$, $|V| \in [10, 18], \mX\in \sR^{N\times 64}, \sW\in \sR^{M\times 32}$};
    \node (a16) [draw, rectangle]  at ($(a15) + (gap)$) {\textbf{MPNN layer}: $d_x=64, d_w=32, d_y=128$};
    \node (a17) [draw, rectangle]  at ($(a16) + (gap2)$) {\makecell[l]{\textbf{Unpooling layer}: $I'_s = \emptyset, I_r = \{1, 2, ... 17\}, d_x=128$, \\$k_v = k_{ia} = k_{ie} = 128, d_w = d_u = 32, d_y = 96$}};
    \node (b17) [draw, rectangle]  at ($(a17) + (z2_xsh)$) {\makecell[l]{\textbf{Skip connection}: \\$d_z=128, N_z=10, d_y=32$}};
    \node (a18) [draw, rounded rectangle]  at ($(a17) + (gap2)$) {Graph $(V, E, \mX, \mW)$, $|V| \in [11, 36], \mX\in \sR^{N\times 96}, \sW\in \sR^{M\times 32}$};
    \node (b18) [draw, rounded rectangle]  at ($(a18) + (z2_xsh)$) {$\mX \in \sR^{36\times 32}$};
    \node (a19) [draw, rounded rectangle]  at ($(a18) + (gap)$) {Graph $(V, E, \mX, \mW)$, $|V| \in [11, 36], \mX\in \sR^{N\times 128}, \sW\in \sR^{M\times 32}$};
    \node (a20) [draw, rectangle]  at ($(a19) + (gap)$) {\textbf{MPNN layer}: $d_x=128, d_w=32, d_y=128$};
    \node (a21) [draw, rectangle]  at ($(a20) + (gap) - (3, 0)$) {\textbf{MLP}[128, 256, 15]};
    \node (a22) [draw, rectangle]  at ($(a21) + (z1_xsh)$) {\textbf{Final edge layer}: $d_x=128, d_w=32, d_u=3$};
    \node (a23) [draw, rectangle]  at ($(a20) + 2*(gap)$) {\textbf{Gumbel softmax layers}};
    \node (a24) [draw, rounded rectangle]  at ($(a23) + (gap)$) {Output graph $(V, E, \mX, \mW)$: $|V|\in [11, 36], \mX\in\sR^{N\times 15}, \mW\in\sR^{M\times 3}$};
    
      \draw [->](a1) -- (a2);
      \draw [->](a2) -- (a3);
      \draw [->](a3) -- (a4);
      \draw [->](a4) -- (a5);
      \draw [->](a1.east) to [out=0, in=90] (b5.north);
      \draw [->](a5) -- (a6);
      \draw [->](a6) -- (a7);
      \draw [->](a7) -- (a8);
      \draw [->](a8) -- (a9);
      \draw [->](a9) -- (a10);
      \draw [->](a10) -- (a11);
      \draw [->](a11) -- (a12);
      \draw [->](a12) -- (a13);
      \draw [->](a13) -- (a14);
      \draw [->](a14) -- (a15);
      \draw [->](a15) -- (a16);
      \draw [->](a16) -- (a17);
      \draw [->](a17) -- (a18);
      \draw [->](a18) -- (a19);
      \draw [->](a19) -- (a20);
      \draw [->](a20) -- (a21);
      \draw [->](a20) -- (a22);
      \draw [->](a21) -- (a23);
      \draw [->](a22) -- (a23);
      \draw [->](a23) -- (a24);
      \draw [->](b5) -- (b6);
      \draw [->](b6.south) to [out=270, in=0]node{Concatenate} (a7.east);

    \draw [->](b9) -- (b10);
      \draw [->](b10.south) to [out=270, in=0]node{Concatenate} (a11.east);
      \draw [->](a1.east) -- ($(a1) + (10.5, 0)$)
      -- ($(a1) + (10.5, -6)$)to [out=270, in=90] (b9.north);   
      
    \draw [->](b13) -- (b14);
      \draw [->](b14.south) to [out=270, in=0]node{Concatenate} (a15.east);
      \draw [->](a1.east) -- ($(a1) + (10.5, 0)$)
      -- ($(a1) + (10.5, -10.5)$)to [out=270, in=90] (b13.north);          

    \draw [->](b17) -- (b18);
      \draw [->](b18.south) to [out=270, in=0]node{Concatenate} (a19.east);
      \draw [->](a1.east) -- ($(a1) + (10.5, 0)$)
      -- ($(a1) + (10.5, -15)$)to [out=270, in=90] (b17.north);          
\end{scope}

\end{tikzpicture}
\end{center}
\caption{Demonstration of the architecture of the generator in UL GAN for ZINC. The rectangles present neural network layers and the rounded rectangles clarify the shape of the hidden data. } \label{fig:plot_generator_zinc}
\end{figure}

\section{PROOF OF THEOREMS}\label{sec:theorem}

In this section, we prove claims on connectivity and expressivity for the output graph of our unpooling layer. For the ease of presenting, we  denote a graph by $\gG=(V,E)$ and omit features in nodes and edges because features are irrelevant with graph connectivity and expressivity.

\subsection{Proof of Proposition~1}

Let  $k^o$, $l^o \in V^o$ be two arbitrary nodes in the output graph $\gG^o = (V^o, E^o)$. In order to prove the connectivity of $\gG^o$ we need to find a path in $\gG^o$ connecting $k^o$ and $l^o$.  Recall that $\gG=(V, E)$ denotes the input graph.
Let $k$, $l \in V$
be the parent nodes of $k^o$ and $l^o$, respectively. 
Since the input graph $\gG$ is connected,  $k$ and $l$ are connected by a path. We denote its length by $n-1$, where $n \geq 2$, and its edges by $\{i_1, i_2\}$, $\{i_2, i_3\}$, ...$\{i_{n-1}, i_n\}$, where $i_1 = k$ and $i_n = l$. Recall that for $r \in [n]$ $N_{\{i_r, i_{r+1}\}}(i_r)$
is a subset of the children of node $i_r$ connected to $N_{\{i_r, i_{r+1}\}}(i_{r+1})$, which is a subset of the children of node $i_{r+1}$. For $r \in [n]$ we arbitrarily choose a vertex in $N_{\{i_r, i_{r+1}\}}(i_r)$ and denote it by $v_{\{i_r, i_{r+1}\}}(i_r)$. We thus note that the following edges exist in the output graph:
$$
\mleft\{v_{\{i_1, i_{2}\}}(i_1), v_{\{i_1, i_{2}\}}(i_{2})\mright\}, \mleft\{v_{\{i_2, i_{3}\}}(i_2), v_{\{i_2, i_{3}\}}(i_{3})\mright\}, \cdots
\mleft\{v_{\{i_{n-1}, i_{n}\}}(i_{n-1}), v_{\{i_{n-1}, i_{n}\}}(i_{n})\mright\}.
$$
In order to prove that $k^o$ and $l^o$ connect by a path we verify the following properties:
\begin{enumerate}
    \item For all $r \in [n-2]$, either $v_{\{i_r, i_{r+1}\}}(i_{r+1}) = v_{\{i_{r+1}, i_{r+2}\}}(i_{r+1})$ or there is a path connecting $v_{\{i_r, i_{r+1}\}}(i_{r+1})$ and $v_{\{i_{r+1}, i_{r+2}\}}(i_{r+1})$ 
    \item Either $k^o = v_{\{i_1, i_{2}\}}(i_1)$ or there is a path connecting $k^o$ and $v_{\{i_1, i_{2}\}}(i_1)$
    \item Either $l^o = v_{\{i_{n-1}, i_{n}\}}(i_{n})$ or there is a path connecting $l^o$ and $v_{\{i_{n-1}, i_{n}\}}(i_{n})$
\end{enumerate}
To prove the first property we note that $v_{\{i_r, i_{r+1}\}}(i_{r+1})\in \{f_1(i_{r+1}), f_2(i_{r+1})\}$ and $v_{\{i_{r+1}, i_{r+2}\}}(i_{r+1})\in \{f_1(i_{r+1}), f_2(i_{r+1})\}$. If $v_{\{i_r, i_{r+1}\}}(i_{r+1}) \neq v_{\{i_{r+2}, i_{r+1}\}}(i_{r+1})$, then 
$\{v_{\{i_r, i_{r+1}\}}(i_{r+1}), v_{\{i_{r+1}, i_{r+2}\}}(i_{r+1})\} = \{f_1(i_{r+1}), f_2(i_{r+1})\}$. 
To show that there is a path connecting $v_{\{i_r, i_{r+1}\}}(i_{r+1})$ and $v_{\{i_{r+1}, i_{r+2}\}}(i_{r+1})$, it suffices to show that there is a path connecting $f_1(i_{r+1})$ and $f_2(i_{r+1})$.
Assume an arbitrary node $j\in V$. If $j\in V_c$ the construction in Step 3a guarantees the existence of an edge connecting $f_1(j)$ and $f_2(j)$. Otherwise, based on Step 3b, there exists a node $b_j$ such that $N_{\{b_j , j\}}(j) = \{f_1(j), f_2(j)\}$, so that $f_1(j)$ and $f_2(j)$ are connected via $f(b_j)$. Letting $j = i_{r+1}$ we conclude this property.

The above argument also applies to the other two properties.
$\qed$

\subsection{Proof of Theorem~2}

We will first define a pooling process and show that there exists an unpooling layer that acts as an inverse of this pooling procedure (see Lemma~\ref{lemma:pool_unpool}). We then show that any graph with $N$ nodes can be pooled to a graph with $\lceil N/2\rceil$ nodes (Lemma~\ref{lemma:even_graph} clarifies the case where $N$ is even and Lemma~\ref{lemma:odd_graph} clarifies the cases where $N$ is odd). Finally we use these observations to conclude the proof of the theorem.

We define the pooling process by using an ``eligible'' set. For a graph $\gG^o = (V^o, E^o)$, a set of pairs of nodes in $V^o$, $S = \{(i_1, j_1), ...(i_n, j_n)\}$, is called {\it eligible} if all nodes $i_1\cdots i_n$ and $j_1\cdots j_n$ are distinct and for any $m\in [n]$, $i_m$ and $j_m$ are connected by a path of length at most 2; that is, there are two cases: either $\{i_m, j_m\}\in E^o$ or there exists $k\in V^o$, such that $\{i_m, k\}\in E^o$ and $\{k, j_m\} \in E^o$. Using an arbitrarily chosen eligible set $S$, we 
describe a specific pooling process with respect to $S$ that produces a graph $\gG$ from $\gG^o$ as follows. We initialize $\gG$ with $V = V^o$ and $E = E^o$. For $m=1, 2, \cdots n$, we follow the next three steps: (1) We remove from $V$ the nodes $i_m$ and $j_m$. We remove from $E$  all edges in $E_m:=\{e\in E: i_m\in e \text{ or } j_m \in e\}$;  (2) We add a new node $i'_m$ to $V$; (3) We add the following set of new edges to $E$: $\mleft\{\{i'_m, k\}: k\in V \text{ and either } \{i_m, k\}\in E_m \text{ or } \{j_m, k\}\in E_m\mright\}$. 
It is clear that the resulting $\gG$ is connected if $\gG^o$ is connected. 

We introduce a lemma showing that for a given pooling process which maps $\gG^o$ to $\gG$, there exists an unpooling layer as the inverse of this pooling process, i.e., it maps $\gG$ to $\gG^o$.

\begin{lemma}\label{lemma:pool_unpool}
For any pooling process that maps $\gG^o = (V^o, E^o)$ to $\gG = (V, E)$, there exists an unpooling layer that maps $\gG$ to $\gG^o$.
\end{lemma}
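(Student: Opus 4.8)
Fix the pooling process, specified by an eligible set $S = \{(i_1,j_1),\dots,(i_n,j_n)\}$ in $\gG^o = (V^o, E^o)$ and producing $\gG = (V,E)$, where $V$ consists of the $n$ merged nodes $i'_1,\dots,i'_n$ together with the untouched nodes of $V^o$. Since features play no role in connectivity or structure, it suffices to exhibit an unpooling layer together with a realization of all of its internal random variables that reproduces $\gG^o$ up to relabelling of $V^o$; as every probability produced by the layer's MLPs is a sigmoid or softmax output, hence lies in $(0,1)$, any such realization occurs with positive probability, so $\gG^o$ is indeed a possible output. First I would fix the hyperparameters: let the merged nodes form the pre-determined unpooled set $I_u'$, let the untouched nodes form $I_s'$, and set $I_r' = \emptyset$, so the node partition is deterministic and the output node set has size $|I_s| + 2|I_u| = |V| + n = |V^o|$. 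Using the map $f$ of Step 1b, identify the children of $i'_m$ with the pair $\{i_m, j_m\}$ and put $f(v) = v$ for each untouched $v$; this fixes the target relabelling.

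Next I would dictate the edge-building choices. In Step 2a put $i'_m \in V_c$ exactly when $\{i_m, j_m\} \in E^o$ (the only mechanism by which a pair's two children become adjacent ``internally''). For each $i'_m \notin V_c$, eligibility supplies a node $k_m$ with $\{i_m, k_m\}, \{k_m, j_m\} \in E^o$; in Step 2b take $b_{i'_m}$ to be the $\gG$-parent of $k_m$, which is a genuine $\gG$-neighbour of $i'_m$. This forces $N_{\{b_{i'_m}, i'_m\}}(i'_m) = \{i_m, j_m\}$, and — through the inter-link step below, exactly as in the proof of Proposition~\ref{thm:connecivity} — connects $i_m$ and $j_m$ via $k_m$.

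Now each edge $\{a,b\}\in E$ exists precisely because at least one edge of $E^o$ joins a pre-image of $a$ to a pre-image of $b$. For Step 2c, set $N_{\{a,b\}}(a)$ to be the set of pre-images of $a$ that are incident in $\gG^o$ to a pre-image of $b$, respecting any value already forced in Step 2b, and symmetrically for $b$; a static node contributes a singleton. The resulting $N_{\{a,b\}}(a)\times N_{\{a,b\}}(b)$ edges reproduce every $E^o$-edge between the relevant pre-images, except when the count should be $3$, which is impossible for a product of sizes in $\{1,2\}$; those are precisely the edges of $E_u$ for which Step 2d is used — flip the $p_a$-coin to ``add'' and pick the extra child so that $E^o_a$ supplies the missing cross-edge, and flip to ``not add'' for every other edge of $E_u$. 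Step 3 only sets edge features and is irrelevant here. It then remains to verify that the union of all edges generated in Steps 2a--2d equals $E^o$ under the relabelling, which is a routine case split on $\{i,j\}\in E$ according to whether both endpoints are static, one is static and one merged, or both are merged.

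The main obstacle is controlling the inter-link rule so that it produces neither too few nor too many edges: it adds the entire product $N_{\{a,b\}}(a)\times N_{\{a,b\}}(b)$, and Step 2b forces one factor to have size $2$ on the edge $\{b_{i'_m}, i'_m\}$, so one must argue that the designated nodes $b_{i'_m}$ together with the remaining choices of the $N_{\{a,b\}}(\cdot)$ can be made consistently across all of $E$ without generating a spurious cross-edge — in particular that one is not forced into a configuration with two distinct non-$V_c$ merged endpoints on a common $\gG$-edge, each demanding its full child pair. This is exactly where eligibility does its work: the length-$\le 2$ connector it guarantees yields an admissible choice of $b_{i'_m}$ (best of all through an untouched, hence static, node), and Steps 2c and 2d between them realize cross-edge patterns of every size $1,2,3,4$. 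Pinning down this bookkeeping, and checking that the connectivity-forcing edges introduced in Step 2b are genuine edges of $\gG^o$, is the heart of the argument; everything else is routine verification.
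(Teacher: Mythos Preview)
Your approach mirrors the paper's --- fix $I_u$ to be the merged nodes, set intra-links by membership of $\{i_m,j_m\}$ in $E^o$, then select the $N_{\{a,b\}}(\cdot)$ and Step~2d insertions to reproduce all cross-edges --- but you are more careful in one respect: you flag that Step~2b forces $N_{\{b_{i'_m}, i'_m\}}(i'_m)$ to be the full child pair for every $i'_m \notin V_c$, a constraint the paper's own proof simply ignores. Unfortunately the obstacle you identify is real and your appeal to eligibility does not dissolve it. Take $V^o = \{1,2,3,4\}$, $E^o = \{\{1,3\},\{1,4\},\{2,3\}\}$, and the eligible set $S = \{(1,2),(3,4)\}$ (common neighbours $3$ and $1$ respectively). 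Pooling yields the single-edge graph on $\{a,b\}$. Any unpooling layer with $I_u = \{a,b\}$ must take $V_c = \emptyset$ (neither $\{1,2\}$ nor $\{3,4\}$ lies in $E^o$); then $b_a = b$ and $b_b = a$ are the only neighbour choices, both $N$-sets on the unique edge are forced to the full pair, the inter-link product contributes all four cross-edges, and Step~2d (the case $|N|+|N|=4$) adds nothing. You cannot recover the three-edge $\gG^o$. So the lemma, read literally for \emph{every} pooling process, fails here; the bookkeeping you defer cannot be completed in general.

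What does suffice for Theorem~\ref{thm:expressivity} is the weaker claim that for every connected $\gG^o$ one can \emph{choose} an eligible set avoiding this conflict --- in the example above, $S' = \{(1,4),(2,3)\}$ makes both pairs $E^o$-edges, so $V_c$ can be taken full and Step~2b is vacuous --- but establishing that requires control over the construction of the eligible set, not just over the unpooling realization, and is a different argument. A minor aside: Step~2d is also needed for certain two-edge cross patterns (those with no shared endpoint, e.g.\ $\{\{i_r,i_s\},\{j_r,j_s\}\}$), not only when the count is $3$.
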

\begin{proof}
Consider the pooling process with respect to the eligible pairs in $S=\{(i_1, j_1), ...(i_n, j_n)\} \subset V^o$. We use the same notation as above for $i_1', i_2', ...i_n' \subset V$ that were pooled by the respective eligible pairs. 

We construct an unpooling layer whose input is $\gG$ and its output is $\hat{\gG}^o = (\hat{V}^o, \hat{E}^o)$. The unpooling layer unpools the nodes $i_1', i_2', ...i_n'$ and keeps the remaining nodes unchanged. It forms the following children nodes of $i_1', i_2', ...i_n'$ in $\hat{V}^o$: $(f_1(i_1'), f_2(i_1')), ... (f_1(i_n'), f_2(i_n'))$, respectively. For every $r\in[n]$, we identify $(f_1(i_r'), f_2(i_r'))$ with $(i_r, j_r)$ and re-index respectively, so $\hat{V}^o = V^o$. 

It remains to show that we can find an unpooling layer such that $\hat{E}^o = E^o$. We first note that the edges in $E^o$ that do not contain nodes from the eligible set remain unchanged in $E$ and $E^o$ since the pooling process is identical on those edges (for clarity, these edges are the ones in $\{\{i, j\} \in E^o: i, j\notin \{i_1, i_2, ...i_n, j_1, j_2, ...j_n\}\}$). Since we restricted above the unpooling layer to only unpool $i_1', i_2', ...i_n'$ those edges also remain unchanged in $\hat{E}^o$.
We thus only need to show that we can construct the unpooling layer so that the set of edges in $\hat{E}^o$ that contain children nodes is the set of edges that contain nodes from the eligible set in $E^o$. 
Each edge in the latter set falls into one of the following three categories, for which we establish the required equality with the corresponding edges in $\hat{E}^o$:
\begin{enumerate}
    \item Edges whose end nodes form a pair $\{i_r, j_r\} \in S$. For each such edge, we select the intra-link in the unpooling layer (step 3a) so that there is an edge connecting $f_1(i_r')$ and $f_2(i_r')$ in $\hat{E}^o$. 

    \item Edges between an eligible pair $(i_r, j_r)$ and a static node $k$ in $V^o$. That is, for a fixed $r \in [n]$ and a static node $k$, there three possibilities for the set of these edges: $\{\{i_r,k\}\}$ or $\{\{j_r, k\}\}$ or $\{\{i_r, k\}, \{j_r, k\}\}$.
    In view of the pooling process, the edge $\{i'_r, k\}$ is in $E$. In Step 3c, there are three possibilities for determining $N_{\{i_r', k\}}(i_r')$ and we need to select the unpooling layer to match these three possibilities. That is, $N_{\{i_r', k\}}(i_r')=\{f_1(i_r')\}$ in the first case, where the set of the above edges is $\{\{i_r,k\}\}$; $N_{\{i_r', k\}}(i_r')=\{f_2(i_r')\}$ in the second case, where the set of the above edges is $\{\{j_r,k\}\}$ and $N_{\{i_r', k\}}(i_r')=\{f_1(i_r'), f_2(i_r')\}$ in the third case, where the set of the above edges is $\{\{i_r,k\}, \{j_r, k\}\}$.
    In view of the way $N_{\{i_r', k\}}(i_r')$ is used to build inter-links (see  \eqref{eq:add_inter_link_with_N}), the edges in $\hat{E}^o$ between $(f_1(i_r'), f_2(i_r'))$ and $k$  are the same as the ones in $E^o$ between $(i_r, j_r)$ and $k$.
    
    \item Edges between two different eligible pairs, $(i_r, j_r)$ and  $(i_s, j_s)$, that is $\{\{k, l\}\in E^o: k\in \{i_r, j_r\}, l\in \{i_s, j_s\}\}$. For fixed $s, r\in[n]$ this set of edges in $E^o$ is a nonempty subset of the following set of four edges: $\{\{i_r, i_s\}, \{i_r, j_s\}, \{j_r, i_s\}, \{j_r, j_s\}\}$. Therefore, there are $2^4-1=15$ such edge sets. In view of the pooling process, the edge $\{i_r', i_s'\}$ is in $E$. The unpooling layer unpools $i_r'$ to $(f_1(i_r'), f_2(i_r'))$ and unpools $i_s'$ to $(f_1(i_s'), f_2(i_s'))$. We claim that according to Steps 3c and 3d, 
    the unpooling layer can produce all the 15 possible edge sets between the pair $(f_1(i_r'), f_2(i_r'))$ and the pair $(f_1(i_s'), f_2(i_s'))$. 
    To clarify this claim we specify for all 15 possible edges between $(i_r, j_r)$ and $(i_s, j_s)$ the choice of $N_{\{i_r', i_s'\}}(i_r'), N_{\{i_r', i_s'\}}(i_s')$ in Step 3c and the choice of the additional edge in Step 3d:
    \begin{itemize}
        \item If the edge set is $\{\{i_r, i_s\}\}$, we set $N_{\{i_r', i_s'\}}(i_r')=\{f_1(i_r')\}, N_{\{i_r', i_s'\}}(i_s') = \{f_1(i_s')\}$ and we do not insert an edge in Step 3d
        \item If the edge set is $\{\{i_r, j_s\}\}$, we set $N_{\{i_r', i_s'\}}(i_r')=\{f_1(i_r')\}, N_{\{i_r', i_s'\}}(i_s') = \{f_2(i_s')\}$ and we do not insert an edge in Step 3d
        \item If the edge set is $\{\{j_r, i_s\}\}$, we set $N_{\{i_r', i_s'\}}(i_r')=\{f_2(i_r')\}, N_{\{i_r', i_s'\}}(i_s') = \{f_1(i_s')\}$ and we do not insert an edge in Step 3d
        \item If the edge set is $\{\{j_r, j_s\}\}$, we set $N_{\{i_r', i_s'\}}(i_r')=\{f_2(i_r')\}, N_{\{i_r', i_s'\}}(i_s') = \{f_2(i_s')\}$ and we do not insert an edge in Step 3d
        \item If the edge set is $\{\{i_r, i_s\}, \{i_r, j_s\}\}$, we set $N_{\{i_r', i_s'\}}(i_r')=\{f_1(i_r')\}, N_{\{i_r', i_s'\}}(i_s') = \{f_1(i_s'), f_2(i_s')\}$ and we do not insert an edge in Step 3d
        \item If the edge set is $\{\{j_r, i_s\}, \{j_r, j_s\}\}$, we set $N_{\{i_r', i_s'\}}(i_r')=\{f_2(i_r')\}, N_{\{i_r', i_s'\}}(i_s') = \{f_1(i_s'), f_2(i_s')\}$ and we do not insert an edge in Step 3d
        \item If the edge set is $\{\{i_r, i_s\}, \{j_r, i_s\}\}$, we set $N_{\{i_r', i_s'\}}(i_r')=\{f_1(i_r'), f_2(i_r')\}, N_{\{i_r', i_s'\}}(i_s') = \{f_1(i_s')\}$ and we do not insert an edge in Step 3d
        \item If the edge set is $\{\{i_r, j_s\}, \{j_r, j_s\}\}$, we set $N_{\{i_r', i_s'\}}(i_r')=\{f_1(i_r'), f_2(i_r')\}, N_{\{i_r', i_s'\}}(i_s') = \{f_2(i_s')\}$ and we do not insert an edge in Step 3d
        \item If the edge set is $\{\{i_r, i_s\}, \{j_r, j_s\}\}$, we set $N_{\{i_r', i_s'\}}(i_r')=\{f_1(i_r')\}, N_{\{i_r', i_s'\}}(i_s') = \{f_1(i_s')\}$ and in Step 3d we insert the edge $\{f_2(i_r'), f_2(i_s')\}$
        \item If the edge set is $\{\{i_r, j_s\}, \{j_r, i_s\}\}$, we set $N_{\{i_r', i_s'\}}(i_r')=\{f_1(i_r')\}, N_{\{i_r', i_s'\}}(i_s') = \{f_2(i_s')\}$ and in Step 3d we insert the edge $\{f_2(i_r'), f_1(i_s')\}$
        \item If the edge set is $\{\{i_r, i_s\}, \{i_r, j_s\}, \{j_r, i_s\}\}$, we set $N_{\{i_r', i_s'\}}(i_r')=\{f_1(i_r'), f_2(i_r')\}, N_{\{i_r', i_s'\}}(i_s') = \{f_1(i_s')\}$ and in Step 3d we insert the edge $\{f_1(i_r'), f_2(i_s')\}$
        \item If the edge set is $\{\{i_r, i_s\}, \{j_r, i_s\}, \{j_r, j_s\}\}$, we set $N_{\{i_r', i_s'\}}(i_r')=\{f_1(i_r'), f_2(i_r')\}, N_{\{i_r', i_s'\}}(i_s') = \{f_1(i_s')\}$ and in Step 3d we insert the edge $\{f_2(i_r'), f_2(i_s')\}$
        \item If the edge set is $\{\{i_r, i_s\}, \{i_r, j_s\}, \{j_r, j_s\}\}$, we set $N_{\{i_r', i_s'\}}(i_r')=\{f_1(i_r'), f_2(i_r')\}, N_{\{i_r', i_s'\}}(i_s') = \{f_2(i_s')\}$  and in Step 3d we insert the edge $\{f_1(i_r'), f_1(i_s')\}$
        \item If the edge set is $\{\{i_r, j_s\}, \{j_r, i_s\}, \{j_r, j_s\}\}$, we set $N_{\{i_r', i_s'\}}(i_r')=\{f_1(i_r'), f_2(i_r')\}, N_{\{i_r', i_s'\}}(i_s') = \{f_2(i_s')\}$  and in Step 3d we insert the edge $\{f_2(i_r'), f_1(i_s')\}$
        \item If the edge set is $\{\{i_r, i_s\}, \{i_r, j_s\}, \{j_r, i_s\}, \{j_r, j_s\}\}$, we set $N_{\{i_r', i_s'\}}(i_r')=\{f_1(i_r'), f_2(i_r')\}, N_{\{i_r', i_s'\}}(i_s') = \{f_1(i_s'), f_2(i_s')\}$ and we do not insert an edge in Step 3d
    \end{itemize}

    The inter-link construction in \eqref{eq:add_inter_link_with_N} for the above specified choices of $N_{\{i_r', i_s'\}}(i_r')$ and $N_{\{i_r', i_s'\}}(i_s')$ together with the above specified choices of inserting an additional edge imply that the output edges between $(f_1(i_r'), f_2(i_r'))$ and $(f_1(i_s'), f_2(i_s'))$ in $\hat{E}^o$ are the same as the edges between $(i_r, j_r)$ and $(i_s, j_s)$.
\end{enumerate}
\end{proof}

To show that for a graph $\gG^o=(V^o, E^o)$ with $N$ node there is an unpooling layer that maps a graph $\gG$ with $N-n$ nodes to $\gG^o$, we need to show that there exists an eligible set $S=\{(i_1,j_1), ... (i_n, j_n)\}\subset V^o$. Indeed, the pooling process with respect to $S$ maps $\gG^o$ to a graph $\gG$ with $N-n$ nodes and by Lemma~\ref{lemma:pool_unpool} there exists an unpooling layer that maps $\gG$ to $\gG^o$. 

The next two lemmas conclude the theorem by implying that for a connected graph $\gG^o$ there exists an eligible set of maximal size, i.e., $n=\lfloor N/2 \rfloor$. The first lemma considers a graph with an even number of nodes and the second lemma considers a graph with an odd number of nodes.

\begin{lemma}
\label{lemma:even_graph}
For any connected graph $\gG^o=(V^o, E^o)$ with $2K$ nodes, there exists an eligible set $S$ containing $K$ pairs of nodes in $\gG^o$ such that the pooling process with respect to $S$ maps $\gG^o$ to a graph $\gG$ with $K$ nodes.
\end{lemma}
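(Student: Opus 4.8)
The plan is to restate the lemma as a matching problem and then solve it on a spanning tree. Call two nodes of $\gG^o$ \emph{close} if they are joined by a path of length at most $2$; then an eligible set $S$ consisting of $K$ pairs whose $2K$ endpoints exhaust $V^o$ is exactly a perfect matching in the auxiliary graph $H$ on vertex set $V^o$ whose edges are the close pairs (i.e.\ the ``square'' of $\gG^o$). So it suffices to show $H$ has a perfect matching. Since $\gG^o$ is connected, fix a spanning tree $T$ of $\gG^o$. Deleting edges can only increase distances, so every pair that is close in $T$ is close in $\gG^o$; hence it is enough to produce a perfect matching among the close pairs of $T$, i.e.\ to partition $V(T)=V^o$ into $K$ pairs each joined by a path of length $\le 2$ in $T$.

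Next I would prove, by induction on the number of vertices, that every tree $T$ with an even number of vertices admits such a partition. The base case is a single edge. For the inductive step, root $T$ arbitrarily and let $\ell$ be a leaf of maximum depth with parent $p$; by maximality of the depth, every child of $p$ is a leaf. If $p$ has at least two children, match two of its leaf-children (they are at distance $2$ via $p$), delete them, observe the result is again a tree with two fewer vertices, apply the inductive hypothesis, and re-add the deleted pair. If $\ell$ is the only child of $p$, then since $|V(T)|\ge 4$ the vertex $p$ has a parent, so $p$ has degree exactly $2$; match $\ell$ with $p$ (they are adjacent), delete both, note that what remains is again a tree with two fewer vertices (we removed a pendant path of length two), and finish by induction. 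In every case the matched pairs are at $T$-distance $\le 2$.

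Finally I would assemble the pieces. The partition produced above is an eligible set $S=\{(i_1,j_1),\dots,(i_K,j_K)\}\subset V^o$ whose endpoints exhaust $V^o$, so the pooling process with respect to $S$ replaces each of the $K$ pairs by a single new node and outputs a graph $\gG$ on exactly $K$ nodes; connectivity of $\gG$ follows from connectivity of $\gG^o$, as already observed in the construction of the pooling process. This proves the lemma.

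I expect the main obstacle to be the tree induction: the matched pair must be chosen so that its two endpoints are at distance $\le 2$ \emph{and} its removal leaves a tree with an even number of vertices, and it is precisely the case split on the number of children of the parent of a deepest leaf that makes both requirements hold at once. A secondary point to handle cleanly is the reduction step, namely that passing to a spanning tree can only increase distances, which is what licenses ignoring all the extra edges of $\gG^o$.
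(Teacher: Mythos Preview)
Your proof is correct and takes a genuinely different route from the paper's own argument. You reduce to a spanning tree and prove directly, by induction on the number of vertices, that every even tree can be partitioned into pairs at tree-distance at most $2$; the case split on the number of children of the parent of a deepest leaf is exactly the right move, and both branches leave a smaller even tree after deletion. The spanning-tree reduction is sound because distances only grow when edges are removed.

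By contrast, the paper argues by induction directly on the graph $\gG^o$: it removes an adjacent pair $(j,k)$ (first a degree-one vertex and its neighbour, then in the general case an arbitrary edge), and when the remainder disconnects it sets up an iterative four-step procedure that processes the maximally connected components, pairing off components with odd vertex counts through their common neighbour $k$ (or $j$), and recursing on the even pieces. This entails an extra Case~2A/2B split to rebalance parities. Your approach is substantially shorter and more transparent; the paper's approach avoids the detour through a spanning tree and works natively on $\gG^o$, but at the cost of a noticeably more intricate case analysis. Either way, the conclusion---an eligible set of size $K$ exists, and hence the pooling process yields a $K$-node graph---follows identically.
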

\begin{proof}
We prove this lemma by induction using $M=1,\ldots,K$. When $M=1$, the lemma is trivial.

Assume the lemma holds for $M=1, \ldots, K-1$. Given a connected graph $\gG^o=(V^o, E^o)$ with $2K$ nodes, we prove the result, while considering the following two different scenarios:

{\bf Case 1:} There exists a node in $V^o$ of degree $1$. We arbitrarily choose such a node and denote it by $j$. We consider its only neighbor, which we denote $k\in V^o$, and remove the pair of nodes $(j, k)$ and all the edges connected to them from the graph $\gG^o$. 
The remaining graph has $2K-2$ nodes.  
If it is also connected, then by the induction assumption
there exists an eligible set with $K-1$ pairs of nodes.
By adding the pair $(j, k)$ to that eligible set, we obtain an eligible set with $K$ pairs and conclude the proof.
    
If, on the other hand, the remaining graph is not connected, we partition it into maximally connected subgraphs $\gG_1, \gG_2, \cdots, \gG_m$. That is, each subgraph is  connected, but any two subgraphs are not connected to each other. 
Since $\gG_1, \ldots \gG_{m}$ are not connected to each other and to the degree-one node $j$, they all connect to the node $k$. 

We consider the following four steps that assist in finding an eligible set:
\begin{enumerate}
    \item 
    We identify the maximally connected subgraphs with even numbers of nodes. Clearly, each of these numbers is not greater than $2K-2$ and thus by induction all the nodes of each such subgraph can be used to form an eligible set. 
    The union of all these eligible sets forms a larger eligible set that uses all the nodes in these subgraphs. 
    If all maximally connected subgraphs have even numbers of nodes, then we terminate the procedure at this step. 
    
    \item 
    We re-index the maximally connected subgraphs with odd numbers of nodes as $\gG_1, ... \gG_{2s}$, for some $s \in \mathbb{N}$. We note that the total number, $2s$, is indeed even since the total number of the remaining nodes is even and the number of nodes in each subgraph is odd. For $1 \leq i \leq 2s$, let $g_i$ be a node in $\gG_i$ that connects to $k$ (we commented above on its existence). 
    We form the following $s$ pairs: $(g_1, g_2), (g_3, g_4), ...(g_{2s-1}, g_{2s})$. We note that they form an eligible set since they all connect to $k$. 
    This eligible set only uses the nodes $\{g_i\}_{i=1}^{2s}$.
We terminate the procedure at this step whenever all maximally connected subgraphs with an odd number of nodes only contain a single node (so that all nodes of these subgraphs are used by this eligible set).
    
    \item If $\gG_i \neq (\{g_i\}, \emptyset)$ and the number of nodes of $\gG_i$ is odd, then we remove $g_i$ from $\gG_i$ and also remove all the edges connected to $g_i$. The remaining graph contains an even number of nodes. If the remaining graph is connected, then we find an eligible set that uses all nodes in this remaining subgraph (its existence follows from the induction assumption). We terminate the procedure at this step if each of these remaining subgraphs is connected (the union of all such eligible sets then form a larger eligible set that uses all nodes in these subgraphs). 
    
    \item
    If a remaining subgraph from the above step (having $g_i$ and its connected edges removed from $\gG_i$) is not connected, we form its maximally connected subgraphs $\gH_{1}^{(i)}, \ldots \gH_{l_i}^{(i)}$. We note that $\gH_{1}^{(i)}, \ldots \gH_{l_i}^{(i)}$ are all connected to $g_i$ (since $\gG_i$ is connected and $\gH_1, \ldots \gH_{m_i}$ are not connected to each other). We also observe that $\cup_{r=1}^{l_i} \gH^{(i)}_r$ contains less than $2K-2$ nodes (indeed the number of nodes of $\gG_i$ is strictly less than the number of nodes in $\cup_{r=1}^{m}\gG_r$, which is $2K-2$; we remark that since $g_i$ was removed from $\gG_i$ the bound $2K-2$ is not tight).

\end{enumerate}
If this procedure terminates in its first three steps, then it finds an eligible set that uses all nodes in the maximally connected subgraphs and thus its size is $2K-2$.
Otherwise, we iteratively apply the same four-steps procedure on the maximally connected subgraphs of the last step. At each iteration the total number of nodes in the maximally connected subgraphs reduces (we clarified this at the end of the fourth step).
Since the graph is finite, the iteration either terminates or 
$\cup_{r=1}^{l_i} \gH^{(i)}_r$ in step 4 of the procedure is of size 
2, that is, there are two subgraphs with single nodes. When inputting these single nodes at the next iteration, the procedure will terminate at step 2.

The final eligible set is the union of all the eligible sets iteratively generated from steps (1)-(3) 
and the pair $(j, k)$. This eligible set uses all the nodes in $V^o$ and thus it contains $K$ pairs.

We remark that we introduced Case 1 in order to help the reader master the idea of the proof in a simpler case. We actually demonstrated how to consecutively handle the case where a single node is connected to maximally connected subgraphs whose total number of nodes is even. Starting with a node of degree 1 allowed us to proceed with this idea in one direction. Next, we pick up two different points and proceed with this idea in two different directions and, in fact, we could have started the proof with the latter setting right away. This setting has two subcases (2A and 2B). In Case 2A, we still have an even number of nodes in the maximally connected subgraphs, which are connected to a single node, so the ideas of Case 1 immediately apply. In Case 2B, the latter number of nodes is odd, but we can somehow reduce it to Case 2A.

  {\bf Case 2:} There does not exist any node in $V^o$ with degree $1$. In this case we randomly select two neighboring nodes $j, k\in V^o$. 
    We consider the remaining graph after removing these two nodes from $V^o$ and also remove all edges connected to them from $E^o$. If the remaining graph is connected or if all the maximally connected subgraphs of the remaining graph contain an even number of nodes, then the induction assumption concludes the proof.
    Otherwise, the remaining graph is partitioned into maximally connected subgraphs $\gG_1, ..., \gG_{n+m}$. We reindex these subgraphs so $\gG_1, ...\gG_m$ are all connected to node $j$ and not connected to node $k$ in $\gG^o$. The other maximally connected subgraphs, $\gG_{m+1}, ..\gG_{m+n}$, are connected to $k$ in $\gG^o$ (they may or may not be connected to $j$). We prove this case by further considering two different scenarios:

    {\bf Case 2A:} $\cup_{i=1}^m \gG_i$ contains an even number of nodes. In this case, it is clear that $\cup_{i=m+1}^{n+m} \gG_i$ also contains an even number of nodes. We could iteratively perform the above four-steps procedure introduced in Case 1 on $\{\gG_i\}_{i=m+1}^{n+m}$ (as they all connect to $k$) and on $\{\gG_i\}_{i=1}^m$ (as they all connect to $j$). Following the same argument at the end of the proof of case 1, we obtain two eligible sets that cover $\{\gG_i\}_{i=m+1}^{n+m}$ and $\{\gG_i\}_{i=1}^m$, respectively. The union of these two sets and the pair $(j, k)$ yields an eligible set that uses all nodes in $\gG^o$, which concludes the proof.
    
    {\bf Case 2B:} $\cup_{i=1}^m \gG_i$ contains an odd number of nodes. 
    We note that $\cup_{i=m+1}^{n+m}\gG_i$ contains an odd number of nodes. We further note that
    within $\{\gG_i\}_{i=1}^{m}$, there is an odd number of subgraphs that contain an odd number of nodes. After reindexing, these subgraphs are $\gG_1, \gG_2, ...\gG_{2r+1}$, where $2r+1 \leq m$.
    We pick one node from $\gG_{2r+1}$ that is connected to $j$ and denote it by $l$. By definition, $l$ is not connected to $\gG_i$, $i\in [2r]$ or $i=2r+2,\ldots m$, and also not connected to node $k$.
    Using this special node $l$, we redefine the ``remaining graph'' that was described in the beginning of case 2. That is, we remove from $\gG^o$ the pair $(j, l)$ (and the associated edges), whereas before the pair $(j, k)$ was removed from it. We can similarly identify maximally connected subgraphs of this new remaining graph. Now the collection of the maximally connected subgraphs that connect to $j$ but not connect to $l$ contains an even number of nodes in total.
    Indeed, this collection contains all nodes in the former subgraphs, $\gG_1, \ldots \gG_{2r}$, $\gG_{2r+2}, \ldots, \gG_{m+n}$ and also the node $k$ and there is an odd number of nodes in every $\gG_i$, $i=1,...2r$, an even number of nodes in every $\gG_i$, $i=2r+2, \ldots m$ and an odd number of nodes in $\cup_{i=m+1}^{n+m}\gG_i$ and the single node $k$, which yield an even number of total nodes.
    Note that we transformed the case to the one in case 2A (with the initially selected pair $(j, l)$) and the proof is thus concluded. 
\end{proof}

\begin{lemma}\label{lemma:odd_graph}
For a connected graph $\gG^o$ with $2K+1$ nodes, there exists an eligible set $S$ containing $K$ pairs of nodes in $\gG^o$ and the pooling process with respect to $S$ maps this graph to a graph with $K+1$ nodes.
\end{lemma}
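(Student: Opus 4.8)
The plan is to reduce the odd case directly to Lemma~\ref{lemma:even_graph} by deleting a single well-chosen vertex. If $K=0$ the graph consists of a single node, the empty set $S=\emptyset$ (which contains $K=0$ pairs) is trivially eligible, and the pooling process with respect to it leaves the graph unchanged with $1=K+1$ nodes. So assume $K\ge 1$, hence $|V^o|=2K+1\ge 3$.

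First I would use the elementary fact that every connected graph on at least two vertices has a vertex whose removal leaves the graph connected; for instance, a leaf of any spanning tree of $\gG^o$ is such a vertex. Pick such a non-cut vertex $v\in V^o$, and let $\gG'$ be the induced subgraph of $\gG^o$ on $V^o\setminus\{v\}$. Then $\gG'$ is connected and has exactly $2K$ nodes, so Lemma~\ref{lemma:even_graph} applies to $\gG'$ and yields an eligible set $S=\{(i_1,j_1),\dots,(i_K,j_K)\}$ of $K$ pairs of nodes of $\gG'$ whose pooling process maps $\gG'$ to a graph with $K$ nodes.

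Next I would check that $S$, regarded as a set of pairs of nodes of $\gG^o$, is still eligible there. The $2K$ nodes $i_1,\dots,i_K,j_1,\dots,j_K$ are pairwise distinct (they were distinct already in $\gG'$) and none of them equals $v$. Moreover, for each $m$ the path of length at most $2$ connecting $i_m$ and $j_m$ in $\gG'$ uses only edges of $\gG'$, which are all edges of $\gG^o$, so the very same path witnesses eligibility in $\gG^o$; passing from $\gG'$ to $\gG^o$ only adds the vertex $v$ and its incident edges, and eligibility cannot be destroyed by adding edges. Now run the pooling process with respect to $S$ on $\gG^o$ itself: since no pair in $S$ involves $v$, the vertex $v$ and the edges incident to it are never removed, and the process merely replaces the $2K$ paired nodes by $K$ fresh nodes, so the output has $2K+1-2K+K=K+1$ nodes; it is connected because $\gG^o$ is, as noted immediately after the definition of the pooling process. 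This produces the required eligible set of size $K$ and finishes the proof.

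The only genuinely delicate point — and the one I would flag as the ``main obstacle'' — is ensuring that the eligibility which Lemma~\ref{lemma:even_graph} guarantees for the subgraph $\gG'$ is inherited by the ambient graph $\gG^o$; but this is immediate because the length-at-most-$2$ connecting paths survive verbatim under the addition of $v$ and its edges, and distinctness of the paired nodes is automatic. Everything else is routine bookkeeping about which vertices the pooling process touches. (One could instead mirror the induction used in Lemma~\ref{lemma:even_graph}, but the non-cut-vertex reduction is shorter and treats the even case as a black box.)
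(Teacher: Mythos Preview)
Your proof is correct and follows essentially the same approach as the paper: remove a non-cut vertex (a leaf of a spanning tree) to obtain a connected graph on $2K$ nodes, then apply Lemma~\ref{lemma:even_graph}. Your version is in fact more careful than the paper's, which omits the $K=0$ base case and the explicit verification that eligibility in the subgraph lifts to eligibility in $\gG^o$.
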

\begin{proof}
We note that there exists a node in $\gG^o$ so that the remaining graph is still connected after removing this node from $V^o$ and removing all edges connected to this node from $E^o$. Indeed, it can be selected as a node with degree 1 in the spanning tree of $\gG^o$.
Considering the remaining graph with $2K$ nodes (after removing this node and the associated edges), Lemma~\ref{lemma:even_graph} implies the existence of an eligible set $S$ containing $K$ pairs of nodes so that the pooling process with respect to $S$ will map $\gG^o$ to a graph with $K+1$ nodes.

\end{proof}

{\bf Proof of Theorem~2. }
To conclude the theorem, we just need to show that for $\gG^o = (V^o, E^o)$ with $|V^o| = N$, and any number $K\in [\lceil N/2 \rceil, N-1]$, there exists a pooling process with an eligible set $S$ (containing $N-K$ pairs) that maps $\gG^o$ to a graph $\gG$ with $K$ nodes. It is sufficient to prove the statement for $K=\lceil N/2 \rceil$ with the eligible set $S^*$ that contains $N-\lceil N/2 \rceil$ pairs. 
Indeed, for any  $K > \lceil N/2\rceil$, we can take a subset $S_K$ of the eligible set $S^*$ (containing $N-\lceil N/2 \rceil$ pairs) such that $|S_K| = |S^*|- (K - \lceil N/2\rceil)$. The pooling process with respect to $S_K$ produces a graph with $K$ nodes.

Given a graph $\gG^o$ with $N$ nodes, Lemma~\ref{lemma:even_graph} and Lemma~\ref{lemma:odd_graph} imply that there exists an eligible set $S^*$ with $\lfloor N/2\rfloor$ pairs. Therefore, the pooling procedure with respect to $S^*$ maps $\gG^o$ to a graph $\gG$ with $\lceil N/2\rceil$ nodes. 
\qed

\subsection{Proof of Corollary~3}

For any connected graph $\gG^o$ with $N$ nodes,  iterative application of Theorem~2 implies the existence of a series of unpooling layers $U_k$, $k \geq 1$, and intermediate graphs $\gG_k$, $k \geq 1$, with $|\gG_k| = \lceil |\gG_{k-1}|/2 \rceil$, so that $U_k(\gG_k) = \gG_{k-1}$, where $\gG_0 := \gG^o$. We stop the process once $|\gG_k| \in [4, 6]$ (we will reach this range because for any $N'> 6$, $\lceil N'/2\rceil \geq 4$). Another application of Theorem~2 yields the existence of a 3-nodes graph, $\gG$, and a last unpooling layer, $U_{k+1}$, so that $U_{k+1} (\gG) = \gG_k$.

That is, there exist a 3-nodes graph $\gG$ and  a series of unpooling layers $U_1, ..U_{k+1}$ so that $U_{1}\circ U_2 \circ... \circ U_{k+1} (\gG) = \gG^o$. Note that the number of unpooling layers is $k+1 = \lceil\log_2 (N/3)\rceil$. $\qed$

\section{ANOTHER GENERATION TASK: OPTIMIZING SPECIFIC CHEMICAL PROPERTIES} 
\label{sec:opt_chem_prop}

In this task, we consider the following three chemical properties \citep{guimaraes2017organ}, which we evaluate only on the valid molecules among the 10,000 generated ones: druglikeliness, solubility and synthesizability. 
We calculate the first property using the Quantitative Estimate of Druglikeness (QED) package in RDKit (\url{https://www.rdkit.org/}) (licensed by BSD 3-Clause). These scores lie in $[0,1]$ and their values aim to express the likelihood of being a drug. We calculate solubility by the log octanol-water partition coefficient using the Crippen package in RDKit. We rescale this value to lie in $[0, 1]$, 
where 1 is the most soluble value.
In order to calculate synthesizability, we calculate the  synthetic accessibility \citep{ertl2009sascore} and rescale this value 
to lie in $[0, 1]$ where 1 is the easiest to synthesize. We use codes from \url{https://github.com/connorcoley/scscore} , licensed by MIT.

For the generative model, we follow with the same Wasserstein GAN architecture as described in \S\ref{sec:details} (with the final activation function to be a sigmoid function), but the discriminator minimizes the error between its output and the objective chemical property; it then outputs a reward score, which we need to maximize when training the generator.

{\bf Result of optimizing chemical properties.} Using QM9, we generated molecules that aimed to maximize druglikeliness, solubility and synthesizability.  Table~\ref{tab:qm9_prop} reports the six evaluation metrics (listed in its columns, whereas the properties we aimed to maximize are in its rows).  
In terms of generating molecules with the targeted chemical properties, UL GAN is competitive with the other approaches (this is noticed when looking at columns 4, 5, 6 of rows 1, 2, 3, respectively. 
When considering the other evaluation metrics, UL GAN generally outperforms Adj GAN, except for the objective of druglikeliness and the metric of synthesizability (first row and sixth column).
Note that the uniqueness of UL GAN and Adj GAN is very low because our generator does not aim to compete with the discriminator, but to generate molecules with a maximal property of interest. We did not report the good performance of UL GAN when considering this task with ZINC,
since the other methods we compared with were not tested on ZINC; furthermore, the superiority of UL GAN over Adj GAN for ZINC is already obvious from Table~4.

\begin{table*}[ht]
\caption{The six evaluation metrics (in rows) for generated samples that aim to minimize the three indicated chemical properties (in columns). We remark that QED is the acronym for quantitative estimate of druglikeliness. Scores for competing methods (above the indicated line) were copied from their original papers. NA means that the score is not available in the original papers.} 
\label{tab:qm9_prop}
\vskip 0.15in
\begin{center}
\begin{small}
\begin{tabular}{lccccccc}
\toprule
Objective& Method & Valid & Unique & Novel & QED &Solubility & Synthesizability \\
\midrule
\multirow{5}{*}{QED}& ORWGAN & 0.882 & 0.694 &  NA & 0.52 & 0.35 & 0.32\\
& Naive RL & 0.971 & 0.540 & NA & 0.57 & 0.50& 0.53\\
& MolGAN & 1.00 & 0.022 & NA & {\bf 0.62} & 0.59& 0.53\\
\cline{2-8}
& Adj GAN      & 0.991 & 0.005 & 0.865 & 0.443 & 0.288 & 0.658 \\
& UL GAN  & 0.9888 &  0.051 & 0.978 & 0.598 & 0.497 & 0.485 \\
\midrule
\multirow{5}{*}{Solubility}& ORWGAN & 0.965 & 0.459 & NA & 0.50 & 0.55& 0.63\\
& Naive RL & 0.927 & 1.00 & NA & 0.49 & 0.78& 0.70\\
& MolGAN & 0.998 & 0.002 & NA & 0.44 & {\bf 0.89}& 0.22\\
\cline{2-8}
& Adj GAN      & 0.940 & 0.003 & 0.958 & 0.378 & 0.367 & 0.007 \\
& UL GAN & 0.993 & 0.010 & 0.781 & 0.507 & 0.700 & 0.793 \\
\midrule
\multirow{5}{*}{Synthesizability}& ORWGAN & 0.965 & 0.459 & NA & 0.51 & 0.45& 0.83\\
& Naive RL & 0.977 & 0.136 & NA & 0.52 & 0.46& 0.83\\
& MolGAN & 1.00 & 0.021 & NA & 0.53 & 0.68& 0.95\\
\cline{2-8}
& Adj GAN      & 0.999 & 0.003 & 0.833 & 0.360 & 0.331 & 0.835 \\ 
& UL GAN & 1.00 & 0.006 & 0.433 & 0.468 & 0.569 & {\bf 0.953} \\
\bottomrule
\end{tabular}
\end{small}
\end{center}
\vskip -0.1in
\end{table*}

\section{SOME ADDITIONAL NUMERICAL RESULTS}\label{sec:additional_result}

We supplement the numerical results in \S\ref{sec:experiment}. 
Section~\ref{sec:stdres} reports standard deviations for the earlier experiments on QM9 and ZINC; \S\ref{subsec:1k_result} includes generation results with only 1,000 samples; and \S\ref{sec:samples} demonstrates examples of the generated molecules from UL GAN and UL VAE.

\subsection{Standard Deviations for Molecule Generation}
\label{sec:stdres}

Table~\ref{tab:std_qm9} supplements Table~3 in the main manuscript and reports the means and standard deviations of the evaluation metrics for our methods in QM9, including Adj GAN, UL GAN, and UL VAE. Table~\ref{tab:std_zinc} supplements Table~4 in the main manuscript and reports the means and standard deviations of the evaluation metrics for our methods in ZINC, including UL GAN and Adj GAN and UL VAE. These means and standard deviations are calculated from 100 runs.  

\begin{table}[ht]
\caption{Validity, uniqueness and novelty with standard deviation for Adj GAN, UL GAN, and UL VAE using QM9. } 
\label{tab:std_qm9}
\vskip 0.15in
\begin{center}
\begin{small}
\begin{tabular}{lcccr}
\toprule
Method & Valid & Unique & Novel \\
\midrule
UL VAE & 0.735 ($\pm$ 0.004) & 0.940 ($\pm$ 0.003) & 0.949 ($\pm$ 0.002)\\
Adj GAN      & 0.941 ($\pm$ 0.002) & 0.139 ($\pm$ 0.002) & 0.886 ($\pm$ 0.006) \\
UL GAN & 0.907 ($\pm$ 0.003) & 0.826 ($\pm$ 0.004) & 0.949 ($\pm$ 0.002)\\
\bottomrule
\end{tabular}
\end{small}
\end{center}
\vskip -0.1in
\end{table}

\begin{table}[ht]
\caption{
Validity, Uniqueness and Novelty with standard deviation for Adj GAN and UL GAN using ZINC.}
\label{tab:std_zinc}
\vskip 0.15in
\begin{center}
\begin{small}
\begin{tabular}{lcccr}
\toprule
Method & Valid & Unique & Novel \\
\midrule
Adj GAN   & 0.109 ($\pm$ 0.003) & 0.196 ($\pm$ 0.011) & { 1.00} ($\pm$ 0) \\
UL GAN & 0.871 ($\pm$ 0.004) & { 1.00} ($\pm$ 0) & { 1.00} ($\pm$ 0) \\
\bottomrule
\end{tabular}
\end{small}
\end{center}
\vskip -0.1in
\end{table}

\subsection{Evaluation with only 1,000 generated samples.}\label{subsec:1k_result}
In the experiments of \S\ref{sec:experiment}, we generated 10,000 samples and reported statistics based on these samples. 
To check whether the performance is preserved for a smaller sample, we report here results of UL GAN and UL VAE when generating only 1,000 samples. Table~\ref{tab:1kreport} reports such results for the Waxman, protein, QM9 and ZINC datasets. We note that all the reported metrics, but the uniqueness in QM9,  are similar to the ones in Table~\ref{tab:waxman}, Table~\ref{tab:protein}, Table~\ref{tab:qm9} and Table~\ref{tab:zinc}, where 10k samples were generated. The uniqueness in QM9 is significantly higher with 1,000 samples. \citet{polykovskiy2020molecular} also noticed a higher uniqueness rate with 1,000 samples than with 10,000 samples. 

\begin{table}
\caption{Model performance with 1,000 generated samples on various datasets.}\label{tab:1kreport}
\vspace{-.13in}
\begin{small}
\begin{center}
\begin{scriptsize}
\begin{tabular}{|l|cccc|cccc|ccc|}
\hline
& \multicolumn{8}{c|}{Waxman random graph} & \multicolumn{3}{c|}{QM9}  \\ \hline
Method & kl edge dense & kl clust& kl conn& kl node feat & wd edge dense & wd clust& wd conn& wd node feat & Valid & Unique & Novel \\ \hline
UL VAE & 0.115 & 0.443 & 0.279 &  0.451 & 0.009 & 0.100 & 0.106 & 0.126 & 0.737   & 0.991  & 0.932   \\
UL GAN & 0.007 & 0.030 & 0.033 & 0.152 & 0.002 & 0.022 & 0.012  & 0.039 & 0.905  & 0.970  & 0.927   \\
\hline
& \multicolumn{8}{c|}{Protein dataset} & \multicolumn{3}{c|}{ZINC}  \\ \hline
 & kl edge dense & kl clust& kl conn& kl node feat& wd edge dense & wd clust& wd conn& wd node feat& Valid & Unique & Novel \\ \hline
UL VAE & 0.565 & 1.235 & 0.822 & 0.260 &  0.034 & 0.071 & 0.319 & 4.400 & NA & NA & NA  \\
UL GAN  & 0.084 & 0.484 & 0.136 & 0.234 & 0.014 & 0.011 & 0.115  & 3.958 & 0.870 & 1.00  & 1.00  \\
\hline
\end{tabular}
\end{scriptsize}
\end{center}
\end{small}
\vskip -0.1in
\end{table}

\subsection{Synthetic Samples for Molecule Generation}
\label{sec:samples}

We demonstrate samples generated from the UL GAN for QM9 in Figure~\ref{fig:qm_complete_sample} and samples generated from the UL VAE in Figure~\ref{fig:ulvae_qm9}. Also, we present samples generated from the UL GAN for ZINC in Figure~\ref{fig:zinc_sample}.

We also illustrate some examples of the evolution of intermediate graphs and illustrate how a graph is generated from a generative GNN using unpooling layers, in Figure~\ref{fig:seq_gen} for QM9 and in Figure~\ref{fig:seq_gen_zinc} for ZINC.

    \begin{figure}[ht]
    \begin{center}
\includegraphics[scale=0.25]{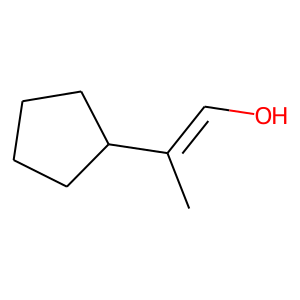}
\includegraphics[scale=0.25]{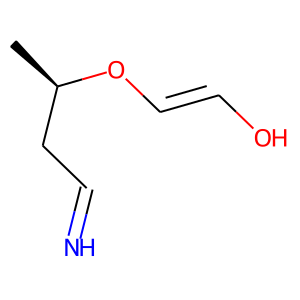}
\includegraphics[scale=0.25]{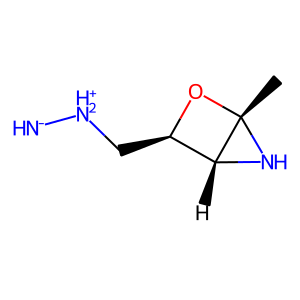}
\includegraphics[scale=0.25]{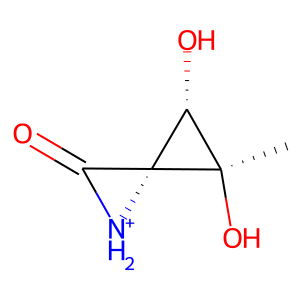}
\includegraphics[scale=0.25]{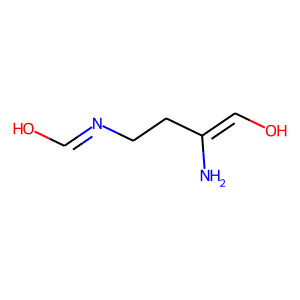}
\includegraphics[scale=0.25]{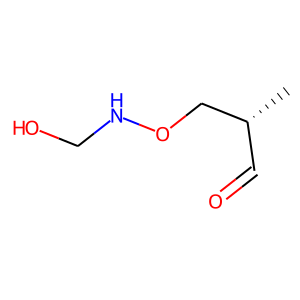}
\includegraphics[scale=0.25]{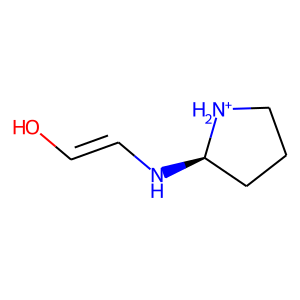}
\includegraphics[scale=0.25]{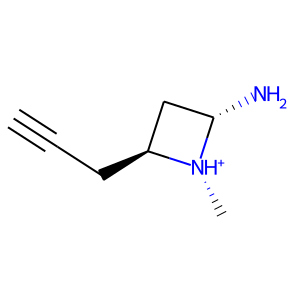}
\includegraphics[scale=0.25]{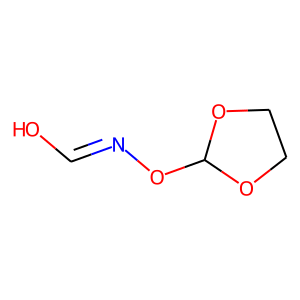}
\includegraphics[scale=0.25]{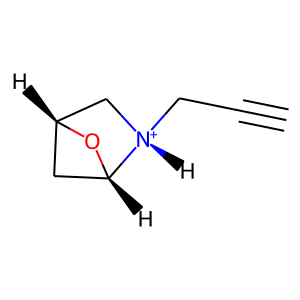}
\includegraphics[scale=0.25]{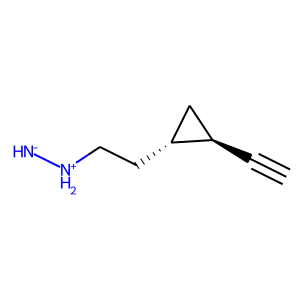}
\includegraphics[scale=0.25]{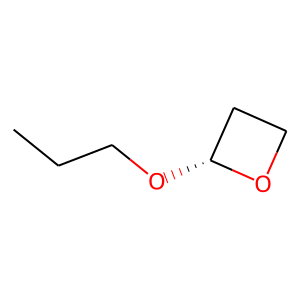}
\includegraphics[scale=0.25]{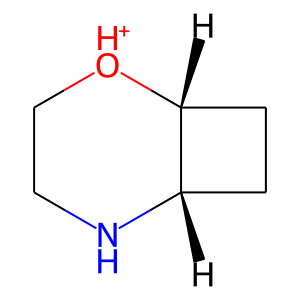}
\includegraphics[scale=0.25]{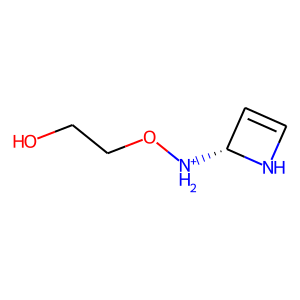}
\includegraphics[scale=0.25]{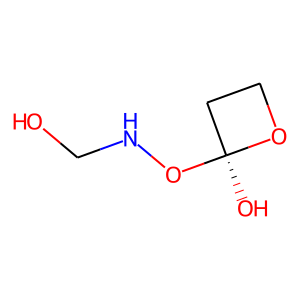}
\includegraphics[scale=0.25]{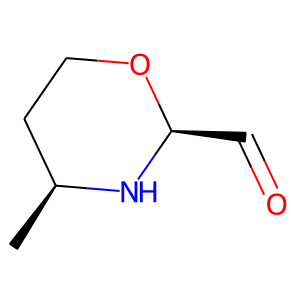}
\includegraphics[scale=0.25]{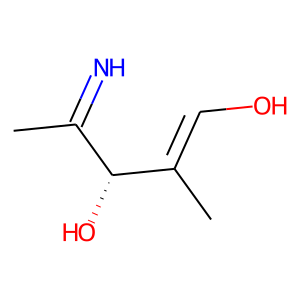}
\includegraphics[scale=0.25]{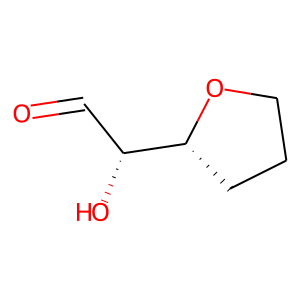}
\includegraphics[scale=0.25]{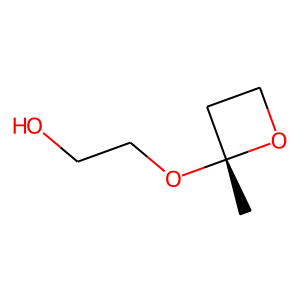}
\includegraphics[scale=0.25]{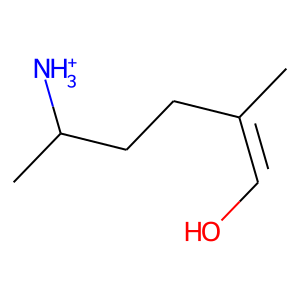}
\includegraphics[scale=0.25]{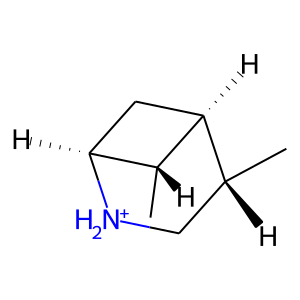}
\includegraphics[scale=0.25]{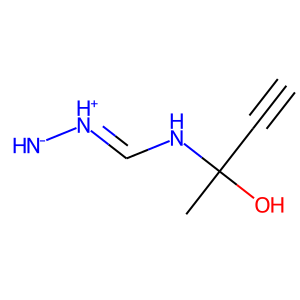}
\includegraphics[scale=0.25]{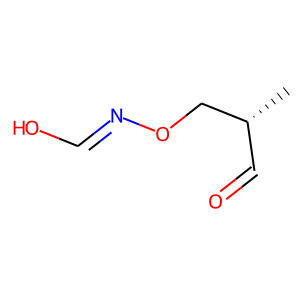}
\includegraphics[scale=0.25]{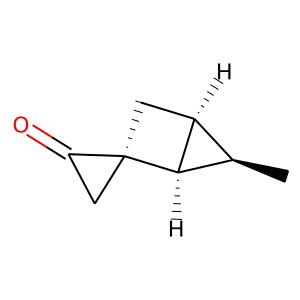}
   \caption{ Samples of molecules generated by UL GAN based on QM9 dataset.}\label{fig:qm_complete_sample}
    \end{center}
    \end{figure}   

\newpage

    \begin{figure}[ht]
    \begin{center}
\includegraphics[scale=0.25]{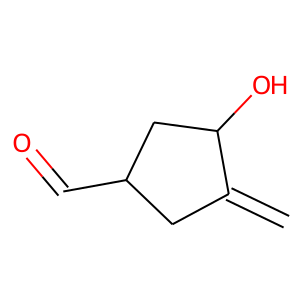}
\includegraphics[scale=0.25]{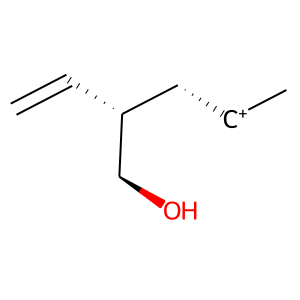}
\includegraphics[scale=0.25]{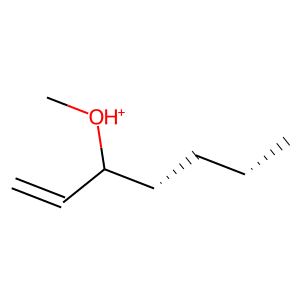}
\includegraphics[scale=0.25]{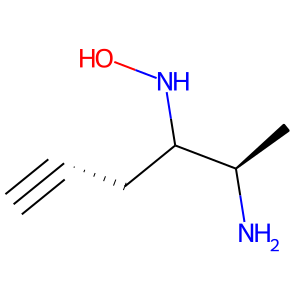}
\includegraphics[scale=0.25]{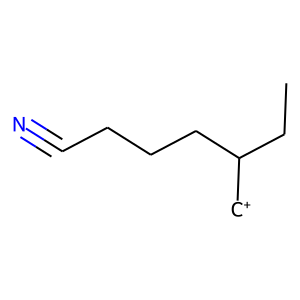}
\includegraphics[scale=0.25]{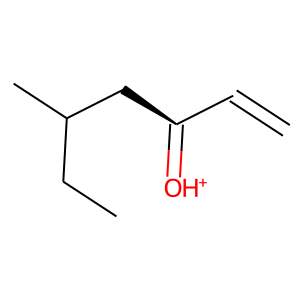}
\includegraphics[scale=0.25]{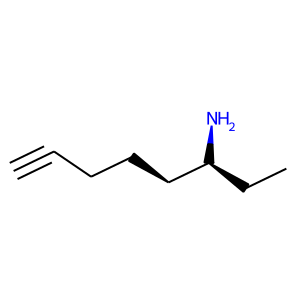}
\includegraphics[scale=0.25]{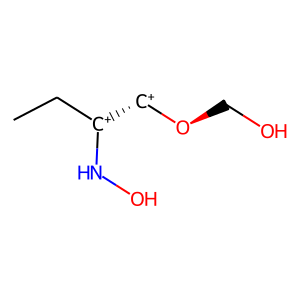}
\includegraphics[scale=0.25]{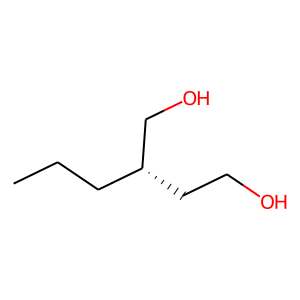}
\includegraphics[scale=0.25]{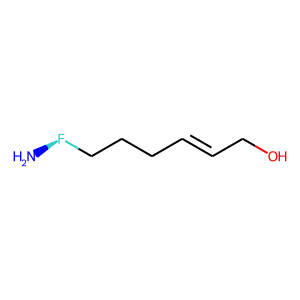}
\includegraphics[scale=0.25]{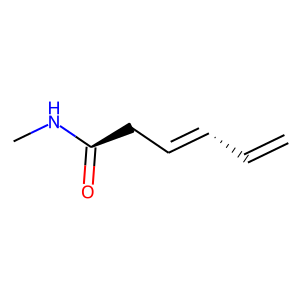}
\includegraphics[scale=0.25]{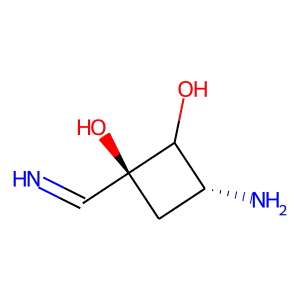}
\includegraphics[scale=0.25]{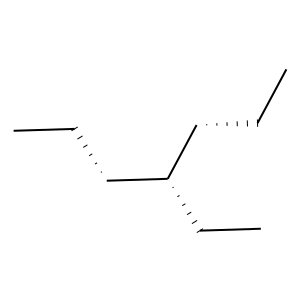}
\includegraphics[scale=0.25]{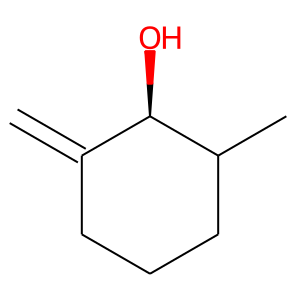}
\includegraphics[scale=0.25]{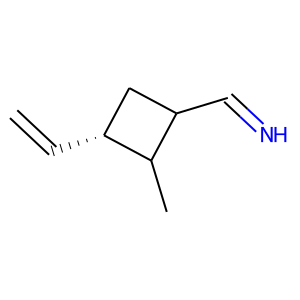}
\includegraphics[scale=0.25]{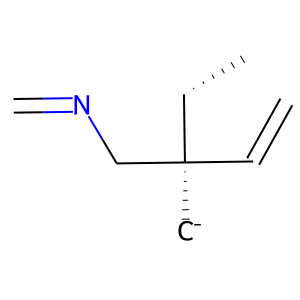}
\includegraphics[scale=0.25]{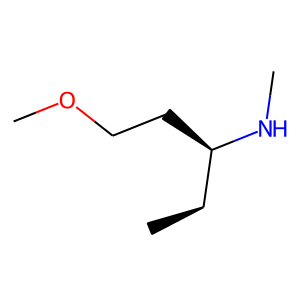}
\includegraphics[scale=0.25]{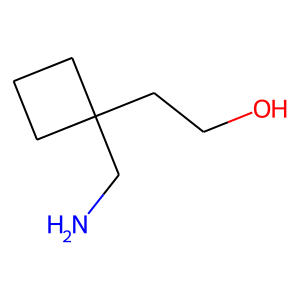}
\includegraphics[scale=0.25]{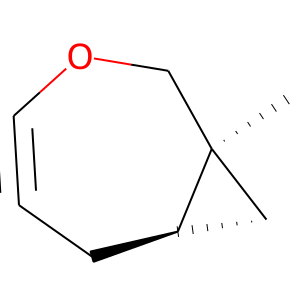}
\includegraphics[scale=0.25]{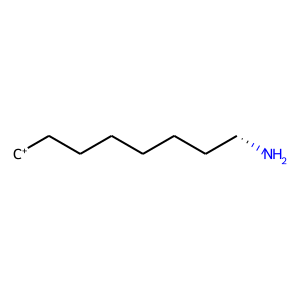}
\includegraphics[scale=0.25]{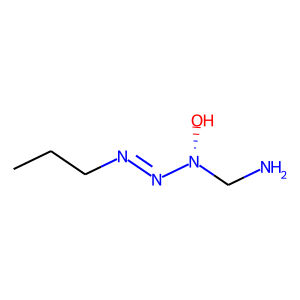}
\includegraphics[scale=0.25]{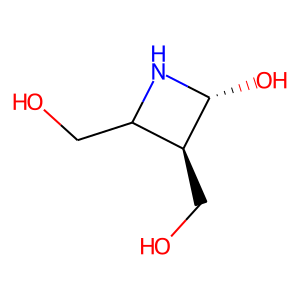}
\includegraphics[scale=0.25]{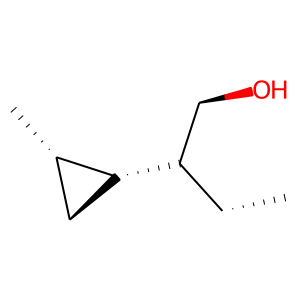}
\includegraphics[scale=0.25]{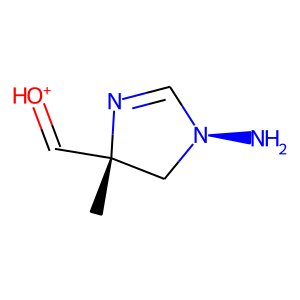}
   \caption{Samples of molecules generated from UL VAE based on QM9 dataset.}\label{fig:ulvae_qm9}
    \end{center}
    \end{figure}

\newpage

    \begin{figure}[ht]
    \begin{center}
\includegraphics[scale=0.25]{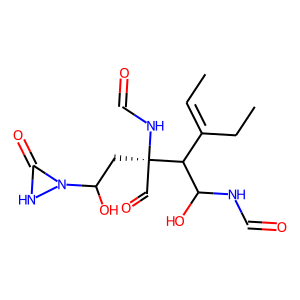}
\includegraphics[scale=0.25]{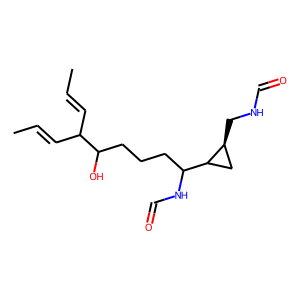}
\includegraphics[scale=0.25]{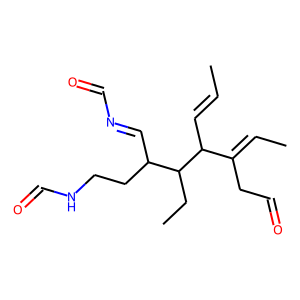}
\includegraphics[scale=0.25]{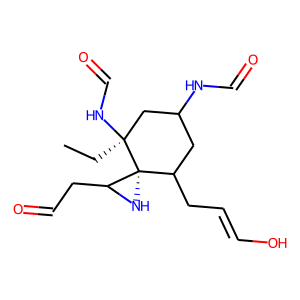}
\includegraphics[scale=0.25]{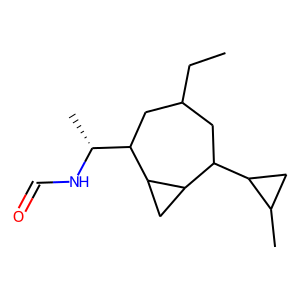}
\includegraphics[scale=0.25]{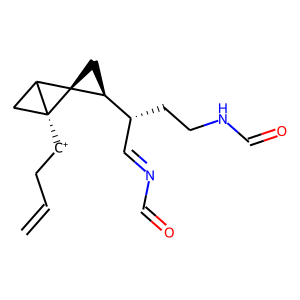}
\includegraphics[scale=0.25]{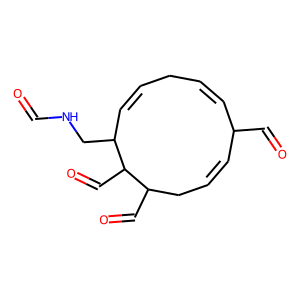}
\includegraphics[scale=0.25]{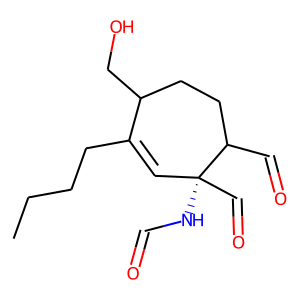}
\includegraphics[scale=0.25]{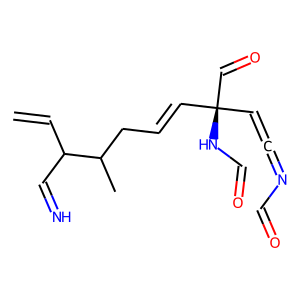}
\includegraphics[scale=0.25]{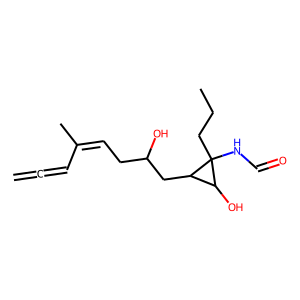}
\includegraphics[scale=0.25]{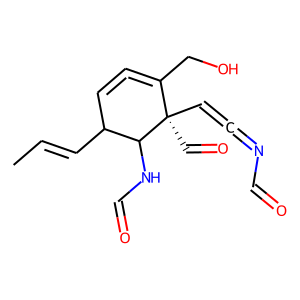}
\includegraphics[scale=0.25]{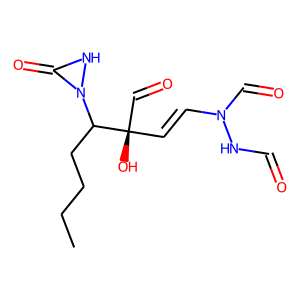}
\includegraphics[scale=0.25]{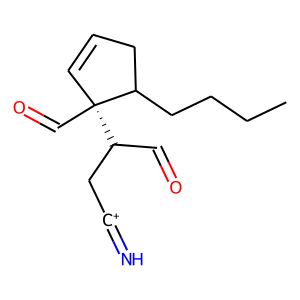}
\includegraphics[scale=0.25]{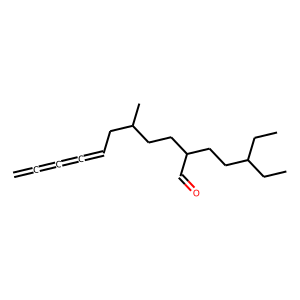}
\includegraphics[scale=0.25]{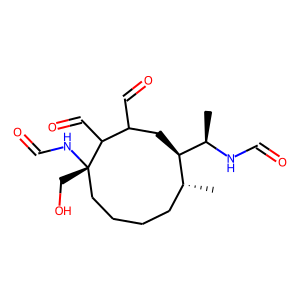}
\includegraphics[scale=0.25]{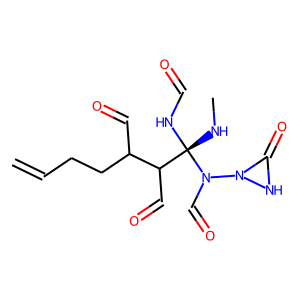}
\includegraphics[scale=0.25]{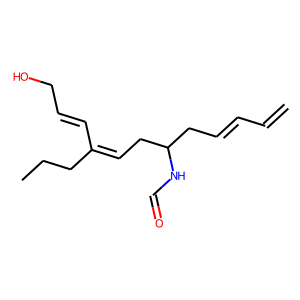}
\includegraphics[scale=0.25]{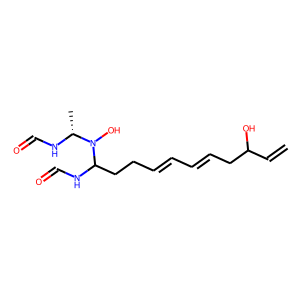}
\includegraphics[scale=0.25]{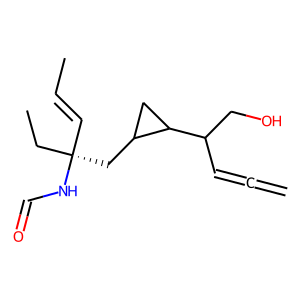}
\includegraphics[scale=0.25]{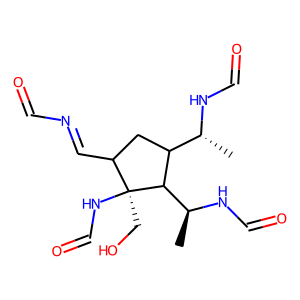}
\includegraphics[scale=0.25]{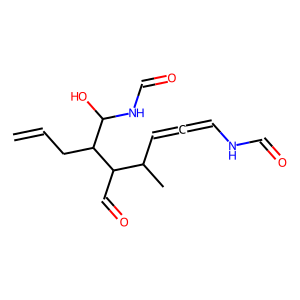}
\includegraphics[scale=0.25]{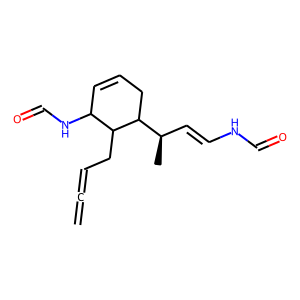}
\includegraphics[scale=0.25]{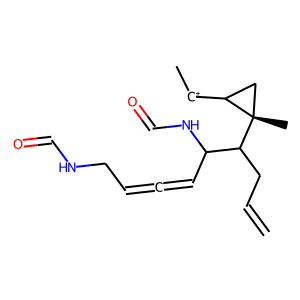}
\includegraphics[scale=0.25]{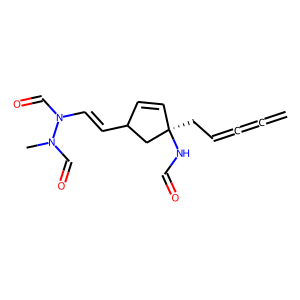}
   \caption{Samples of molecules generated from UL GAN based on ZINC dataset.}\label{fig:zinc_sample}
    \end{center}
    \end{figure}   
    
\newpage

    \begin{figure}[H]
    \begin{center}
\includegraphics[scale=0.25]{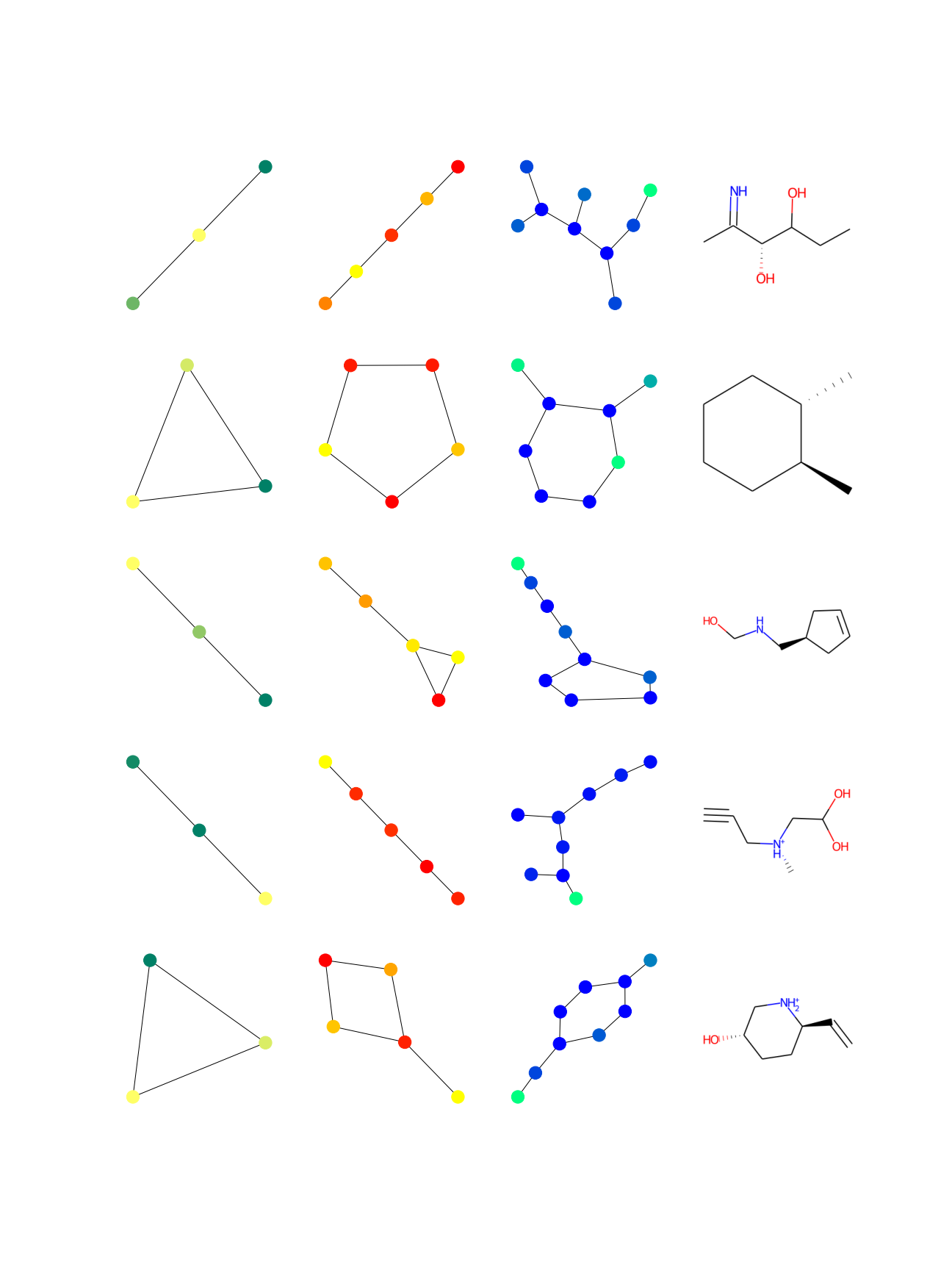}
   \caption{Five examples of generated graphs using UL GAN trained with QM9. Each row represents one example, showing intermediate graphs in the generation process. Left column: initial 3-nodes graph; Middle 2--3 columns: intermediate graphs after unpooling layers; Right column: the final generated molecule. The color represents one dimension of the node features.}\label{fig:seq_gen}
    \end{center}
    \end{figure}   

\newpage

    \begin{figure}[H]
    \begin{center}
  \hspace*{-2.0cm}
\includegraphics[scale=0.21]{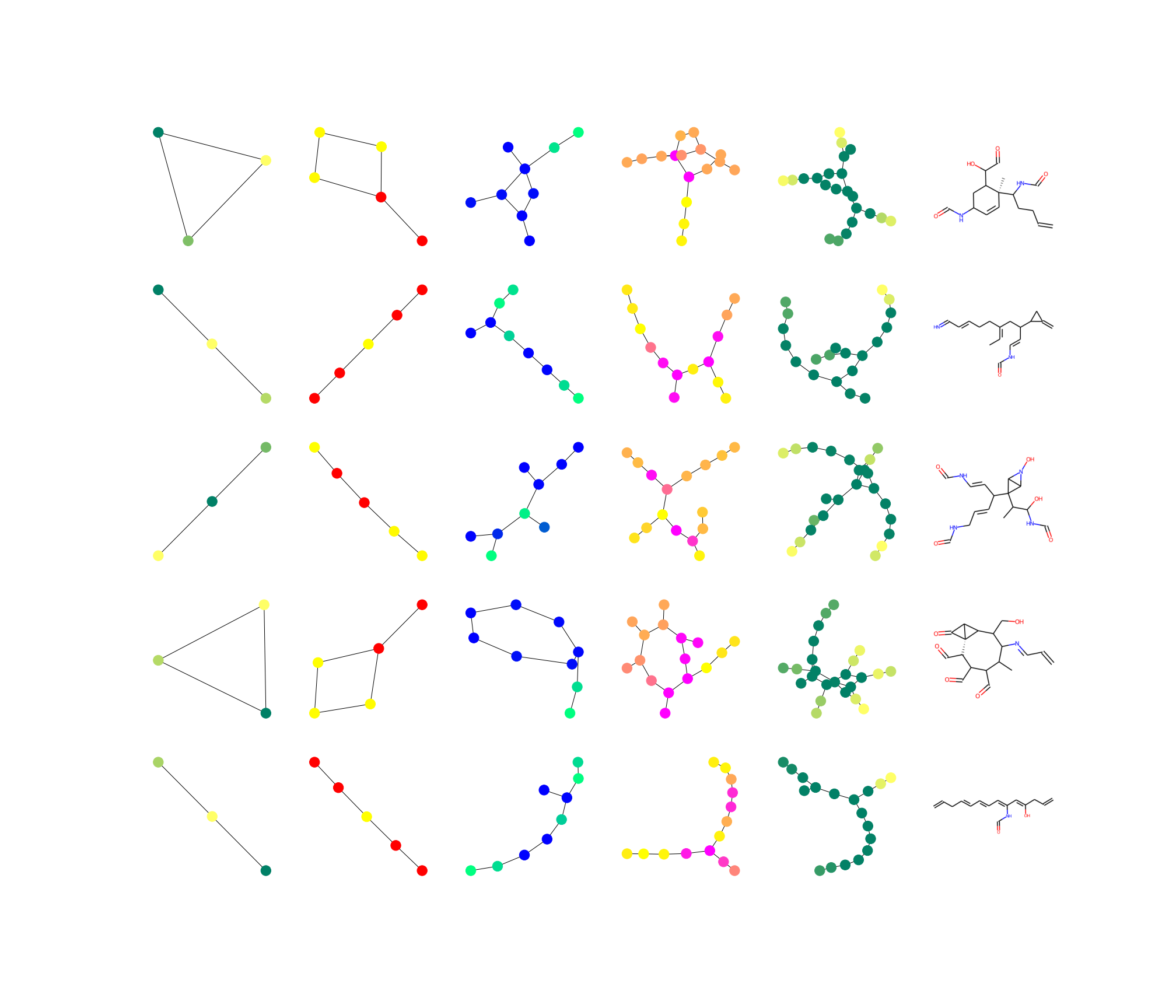}
   \caption{Five examples of generated graphs using UL GAN trained with ZINC dataset. Each row represents one example, showing intermediate graphs in the generation process. Left column: initial 3-nodes graph; Middle 2--5 columns: intermediate graphs after unpooling layers; Right column: the final generated molecule. The color represents one dimension of the node features.}\label{fig:seq_gen_zinc}
    \end{center}
    \end{figure}

\end{document}